\documentclass{article}



\PassOptionsToPackage{round, compress}{natbib}
\usepackage[preprint]{neurips_2021}




\usepackage[utf8]{inputenc} 
\usepackage[T1]{fontenc}    
\usepackage{hyperref}       
\usepackage{url}            
\usepackage{booktabs}       
\usepackage{amsfonts}       
\usepackage{nicefrac}       
\usepackage{microtype}      
\usepackage{xcolor}         
\usepackage{mathtools}
\usepackage{amsmath,amsfonts,amsthm} 
\usepackage{cleveref}
\usepackage{enumitem}
\usepackage{amsthm}
\usepackage{wrapfig}
\usepackage{blindtext}
\usepackage{algorithmic}
\usepackage[ruled,vlined]{algorithm2e}

\hypersetup{
  colorlinks   = true,
  urlcolor     = blue,
  linkcolor    = blue,
  citecolor   = blue
}
\newtheorem{theorem}{Theorem}
\newtheorem{lemma}{Lemma}

\newtheorem{assumption}{Assumption}

\newtheorem{corollary}{Corollary}

\input{defs.tex}

\title{On the benefits of maximum likelihood \\ estimation for Regression and Forecasting}


\author{%
  Pranjal Awasthi \\
  Google Research\\
   \And
   Abhimanyu Das \\
   Google Research \\
   \And
   Rajat Sen \\
   Google Research \\
   \And
   Ananda Theertha Suresh \\
   Google Research \\
   \AND
   \texttt{\{pranjalawasthi, abhidas, senrajat, theertha\}@google.com}\thanks{The authors are listed in alphabetical order.} \\
}

%

\begin{document}
\maketitle

\begin{abstract}
We advocate for a practical Maximum Likelihood Estimation (MLE) approach towards designing loss functions for regression and forecasting, as an alternative to the typical approach of direct empirical risk minimization on a specific target metric. The MLE approach is better suited to capture inductive biases such as prior domain knowledge in datasets, and can output post-hoc estimators at inference time that can optimize different types of target metrics. We present theoretical results to demonstrate that our approach is competitive with any estimator for the target metric under some general conditions. In two example practical settings, Poisson and Pareto regression, we show that our competitive results can be used to prove that the MLE approach has better excess risk bounds than directly minimizing the target metric. We also demonstrate empirically that our method instantiated with a well-designed general purpose mixture likelihood family can obtain superior performance for a variety of tasks across time-series forecasting and regression datasets with different data distributions. 
\end{abstract}

\section{Introduction}
\label{sec:intro}
The task of fitting a regression model for a response variable $y$ against a covariate vector $\bx \in \reals^{d}$ is ubiquitous in supervised learning in both linear and non-linear settings~\citep{deepregression, mohri2018foundations} as well as non-i.i.d settings like multi-variate forecasting~\citep{salinas2020deepar, wang2019deep}. The end goal in regression and forecasting problems is often to use the resulting model to obtain good performance in terms of some \textit{target metric} of interest on the population level (usually measured on a previously unseen test set). The mean-squared error or the mean absolute deviation are examples of common target metrics.


In this paper, our focus is on the choice of loss function that is used to train such models, which is an important question that is often overlooked, especially in the deep neural networks context where the emphasis is more on the choice of network architecture~\citep{deepregression}. 

Perhaps the most common method used by practitioners for choosing the loss function for a particular regression model is to directly use the \textit{target metric} of interest as the loss function for \emph{empirical risk minimization} (ERM) over a function class on the training set. We denote this approach for choosing a loss function as \emph{Target Metric Optimization} (TMO). This is especially more common with the advent of powerful general purpose function optimizers like deep networks and has also been rigorously analyzed for simpler function classes~\citep{mohri2018foundations}. 

Target Metric Optimization seems like a reasonable approach - if the practitioner knows about the target metric of interest for prediction using the model, it seems intuitive to optimize for the same objective on the training data. Prior work (both theoretical and applied) has both advocated for and argued against TMO  for regression problems. Many prominent works on regression~\citep{goldberger1964econometric, lecue2016learning} use the TMO approach, though most of them assume that the data is well behaved (e.g. sub-Gaussian noise). In terms of applications, many recent works on time-series forecasting~\citep{wu2020connecting, oreshkin2019n,sen2019think} also use the TMO approach directly on the target metric. On the other hand, the robust regression literature has long advocated for not using the target metric directly for ERM in the case of contamination or heavy tailed response/covariate behaviour~\citep{huber1992robust, hsu2016loss, zhu2021taming, lugosi2019mean, audibert2011robust, Brownlees2015empirical} on account of its suboptimal high-probability risk bounds. However, as noted in~\citep{prasad2020robust}, many of these methods are either not practical~\citep{lugosi2019mean, Brownlees2015empirical} or have sub-optimal empirical performance~\citep{hsu2016loss}. Even more practical methods such as~\citep{prasad2020robust} would lead to sufficiently more computational overhead over standard TMO.

Another well known approach for designing a loss function is  \emph{Maximum Likelihood Estimation} (MLE). Here one assumes that the conditional distribution of $y$ given $\bx$ belongs to a family of distributions $p(y |\bx; \thetab)$ parameterized by $\thetab \in \Theta$~\citep{mccullagh2019generalized}. Then one can choose the negative log likelihood as the loss function to optimize using the training set, to obtain an estimate $\thetamle$. This approach is sometimes used in the forecasting literature~\citep{salinas2020deepar, davis2009negative} where the choice of a likelihood can encode prior knowledge about the data. For instance a negative binomial distribution can be used to model count data. During inference, given a new instance $\bx'$, one can  output the statistic from $p(y | \bx'; \thetamle)$ that optimizes the target metric, as the prediction value~\citep{gneiting2011making}. MLE also seems like a reasonable approach for loss function design - it is folklore that the MLE is asymptotically optimal for parameter estimation, in terms of having the smallest asymptotic variance among all estimators \citep{heyde1978optimal,rao1963criteria}, when the likelihood is well-specified. However, much less is known about finite-sample, fixed-dimension analysis of MLE, which is the typical regime of interest for the regression problems we consider in this paper. An important practical advantage for MLE is that model training is agnostic to the choice of the target metric - the same trained model can output estimators for different target metrics at inference time. Perhaps the biggest argument against the MLE approach is the requirement of knowing the likelihood distribution family. We address both these topics in Section~\ref{sec:sims_mle}.

Both TMO and MLE can be viewed as offering different approaches to selecting the loss function for a given regression model. In this paper, we argue that in several settings, both from a practical and theoretical perspective, MLE might be a better approach than TMO. This result might not be immediately obvious apriori - while MLE does benefit from prior knowledge of the distribution, TMO also benefits from prior knowledge of the target metric at training time.

 Our \emph{main contributions} are as follows:

{\bf{Competitiveness of MLE:}~} In Section~\ref{sec:mlecomp}, we prove that under some general conditions on the family of distributions and a property of interest, the MLE approach is 
competitive with any estimator for the property. We show that this result can be applied to fixed design regression problems in order to prove that MLE can competitive (up to logarithmic terms) with any estimator in terms of excess square loss risk, under some assumptions.

{\bf{Example Applications:}~} In Section~\ref{sec:poisson}, we apply our general theorem to prove an excess square loss bound for the the MLE estimator for Poisson regression with the identity link~\citep{nelder1972generalized, lawless1987negative}. We show that these bounds can be better than those of the \tmo~estimator, which in this case is least-squares regression. Then in Section~\ref{sec:pareto}, we show a similar application in the context of Pareto regression i.e $y | \bx$ follows a Pareto distribution. We show that MLE can be competitive with robust estimators like the one in~\citep{hsu2016loss} and therefore can be better than \tmo~(least-squares). 

{\bf{Empirical Results:}~}  We propose the use of a general purpose mixture likelihood family (see Section~\ref{sec:sims_mle}) that can capture a wide variety of prior knowledge across datasets, including zero-inflated or bursty data, count data, sub-Gaussian continuous data as well as heavy tailed data, through different choices of (learnt) mixture weights. 
Then we empirically show that the MLE approach with this likelihood can outperform ERM for many different commonly used target metrics like WAPE, MAPE and RMSE~\citep{hyndman2006another} for two popular forecasting and two regression datasets. 
Moreover  the MLE approach is also shown to have better probabilistic forecasts (measured by quantile losses~\citep{wen2017multihorizon}) than quantile regression~\citep{koenker1978regression, gasthaus2019probabilistic,wen2017multihorizon} which is the \tmo~approach in this case. 
\section{Prior Work on MLE}
\label{sec:rwork}

Maximum likelihood estimators (MLE) have been studied extensively in statistics starting with the work of \cite{wald1949note, redner1981note}, who showed the maximum likelihood estimates are asymptotically consistent for parametric families. 
\cite{fahrmeir1985consistency} showed asymptotic normality for MLE for generalized linear models. It is also known that under some regularity assumptions, MLE achieves the Cramer-Rao lower bound asymptotically \citep{lehmann2006theory}. However, we note that none of these asymptotic results directly yield finite sample guarantees. 

Finite sample guarantees have been shown for certain problem scenarios. \cite{geer2000empirical, zhang2006} provided uniform convergence bounds in Hellinger distance for maximum likelihood estimation. These ideas were recently used by \cite{foster2021efficient} to provide algorithms for contextual bandits. There are other works which study MLE for non-parametric distributions e.g., \cite{dumbgen2009maximum} showed convergence rates for log-concave distributions. There has been some works~\citep{sur2019modern, bean2013optimal, donoho2016high, el2018impact} that show that MLE can be sub-optimal for high dimensional regression i.e when the dimension grows with the number of samples. In this work we focus on the setting where the dimension does not scale with the number of samples.

Our MLE results differ from the above work as we provide finite sample competitiveness guarantees. Instead of showing that the maximum likelihood estimator converges in some distance metric, we show that under some mild assumptions it can work as well as other estimators. Hence, our methods are orthogonal to known well established results in statistics. 

Perhaps the closest to our work is the competitiveness result of \cite{acharya2017unified}, who showed that MLE is competitive when the size of the output alphabet is bounded and applied to profile maximum likelihood estimation. In contrast, our work applies for unbounded output alphabets and can provide stronger guarantees in many scenarios.

\section{Competitiveness of MLE}
\label{sec:mlecomp}
In this section, we will show that under some reasonable assumptions on the likelihood family, the MLE is competitive with any estimator in terms of estimating any property of a distribution from the family. 
We will then show that this result can be applied to derive bounds on the MLE in some fixed design regression settings that can be better than that of \tmo. We will first setup some notation.

{\bf Notation:~} Given a positive semi-definite symmetric matrix $\bM$, $\norm{\bx}_{\bM} := \bx^T \bM \bx$ is the matrix norm of the vector $\bx$. $\lambda(\bM)$ denotes an eigenvalue of a symmetric square matrix $\bM$; specifically $\lmax(\bM)$ and $\lmin(\bM)$ denote the maximum and minimum eigenvalues respectively. 
The letter $f$ is used to denote general distributions. 
We use $p$ to denote the conditional distribution of the response given the covariate.
$\norm{\cdot}_1$ will be overloaded to denote the $\ell_1$ norm between two probability distributions for example $\norm{p - p'}_1$. $\kl(p_1; p_2)$ will be used to denote the KL-divergence between the two distributions. If $\cZ$ is a set equipped with a norm $\norm{\cdot}$, then $\cN(\epsilon, \cZ)$ will denote an $\epsilon$-net i.e any point $z \in \cZ$ has a corresponding point $z' \in \cN(\epsilon, \cZ)$ s.t $\norm{z - z'} \leq \epsilon$. $\ball_{r}^{d}$ denotes the sphere centered at the origin with radius $r$ and $\sphere_{r}^{d-1}$ denotes its surface. We define $[n] := \{1, 2, \cdots, n \}$.

{\bf General Competitiveness: } We first consider a general family of distributions $\cF$ over the space $\cZ$. For a sample $z \sim f$ (for $z \in \cZ$ and $f \in \cF$), the MLE distribution is defined as $f_z = \argmax_{f \in \cF}f(z)$. We are interested in estimating a property  $\pi: \cF \rightarrow \cW$ of these distributions from an observed sample. The following definition will be required to impose some joint conditions on the distribution family and the property being estimated, that are needed for our result.

\begin{definition}
\label{def:pep}
The tuple $(\cF, \pi)$, where $\cF$ is a set of distributions and $\pi: \cF \rightarrow \cW$ a property of those distributions, is said to be $(T, \epsilon, \delta_1, \delta_2)$-approximable, if there exists a set of distributions $\tilde{\cF} \subseteq \cF$ s.t $|\tilde{\cF}| \leq T$ and for every $f \in \cF$, there exists a $\tilde{f}$ such that $\|f - \tilde{f}\|_1 \leq \delta_1$ and
$
\Pr_{z \sim f} \left( \norm{\pi(f_z) - \pi(\tilde{f}_z)} \geq \epsilon \right) \leq \delta_2,
$
where $\tilde{f}_z = \argmax_{\tilde{f} \in \tilde{\cF}}\tilde{f} (z)$ and $\cW$ has a norm $\norm{\cdot}$.
\end{definition}

The above definition states that the set of distributions $\cF$ has a finite $\delta$-cover, $\tilde{\cF}$ in terms of the $\ell_1$ distance. Moreover the cover is such that solving MLE on the cover and applying the property $\pi$ on the result of the MLE is not too far from $\pi$ applied on the MLE over the whole set $\cF$. This property is satisfied trivially if $\cF$ is finite. We note that it is also satisfied by some commonly used set of distributions and corresponding properties. Now we state our main result.

\begin{theorem}
\label{thm:comp} Let $\pihat$ be an estimator such that for any $f \in \cF$ and $z \sim f$, $\Pr(\|\pi(f) - \hat{\pi}(z)\| \geq \epsilon) \leq \delta$.
Let $\cF_f$ be a subset of $\cF$ that contains $f$ such that with probability at least $1 - \delta$, $f_z \in \cF_f $ and $(\cF_f, \pi)$ is $(T, \epsilon, \delta_1, \delta_2)$-approximable.  Then the MLE based estimator satisfies the following bound,
\[
\Pr(\|\pi(f) - \pi(f_z) \| \geq 3\epsilon) \leq (T + 3) \delta + \delta_1 + \delta_2.
\]
\end{theorem}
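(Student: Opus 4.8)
The plan is to establish competitiveness through a change-of-measure argument that transfers the good behaviour of the reference estimator $\hat{\pi}$ from each distribution in a finite cover over to the maximum likelihood estimate, in the spirit of \cite{acharya2017unified}. Throughout I fix the true $f$ and work with two high-probability events: the ``good estimator'' event $G = \{z : \norm{\pi(f) - \hat{\pi}(z)} < \epsilon\}$, for which the hypothesis on $\hat{\pi}$ gives $\Pr_{z\sim f}(G^c) \leq \delta$, and the event $E = \{z : f_z \in \cF_f\}$, for which $\Pr_{z\sim f}(E^c) \leq \delta$ by assumption; on $E$ the MLE over $\cF$ coincides with the MLE over $\cF_f$, so I may reduce to the approximable pair $(\cF_f, \pi)$. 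Let $\tilde{\cF} \subseteq \cF_f$ be the guaranteed cover of size at most $T$, let $\tilde{f}_z = \argmax_{g \in \tilde{\cF}} g(z)$ denote the MLE over the cover, and let $\tilde{f} \in \tilde{\cF}$ be the element approximating $f$, so that $\norm{f - \tilde{f}}_1 \leq \delta_1$.

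The core step is to control $\Pr_{z\sim f}(\norm{\pi(\tilde{f}_z) - \pi(f)} \geq 2\epsilon)$ by decomposing over the at most $T$ possible values $\tilde{g}$ of the cover-MLE $\tilde{f}_z$. On $G$, if $\norm{\pi(\tilde{g}) - \pi(f)} \geq 2\epsilon$ then the triangle inequality forces $\norm{\pi(\tilde{g}) - \hat{\pi}(z)} \geq \epsilon$, i.e. $z$ lands in the bad set $A_{\tilde{g}} := \{z : \norm{\pi(\tilde{g}) - \hat{\pi}(z)} \geq \epsilon\}$. Since each $\tilde{g} \in \tilde{\cF} \subseteq \cF$, the hypothesis on $\hat{\pi}$ applied with $\tilde{g}$ as the generating distribution gives $\Pr_{z\sim \tilde{g}}(A_{\tilde{g}}) \leq \delta$. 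I then change measure from $f$ to $\tilde{g}$ using the defining inequality of the cover-MLE: on $\{\tilde{f}_z = \tilde{g}\}$ we have $\tilde{g}(z) \geq \tilde{f}(z)$, so
\[
f(z) = \tilde{f}(z) + \left(f(z) - \tilde{f}(z)\right) \leq \tilde{g}(z) + \left(f(z) - \tilde{f}(z)\right)
\]
on that event, whence $\Pr_{z\sim f}(A_{\tilde{g}} \cap \{\tilde{f}_z = \tilde{g}\}) \leq \Pr_{z\sim \tilde{g}}(A_{\tilde{g}}) + \int_{\{\tilde{f}_z = \tilde{g}\}} \abs{f - \tilde{f}}$.

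Summing over the cover elements is where the argument must be done carefully, and I expect this to be the main obstacle: the $\delta$ terms simply add to at most $T\delta$, but the $\ell_1$ correction terms would be disastrous if counted once per cover element. The saving observation is that the decision regions $\{\tilde{f}_z = \tilde{g}\}$ are disjoint as $\tilde{g}$ ranges over $\tilde{\cF}$, so $\sum_{\tilde{g}} \int_{\{\tilde{f}_z = \tilde{g}\}} \abs{f - \tilde{f}} \leq \norm{f - \tilde{f}}_1 \leq \delta_1$ in total. Combining this with $\Pr(G^c) \leq \delta$ yields $\Pr_{z\sim f}(\norm{\pi(\tilde{f}_z) - \pi(f)} \geq 2\epsilon) \leq (T+1)\delta + \delta_1$.

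Finally I would transfer the bound from the cover-MLE $\tilde{f}_z$ back to the true MLE $f_z$. On the event $E$ the MLE over $\cF$ equals that over $\cF_f$, and the approximability condition applied to $f$ itself bounds $\Pr_{z\sim f}(\norm{\pi(f_z) - \pi(\tilde{f}_z)} \geq \epsilon)$ by $\delta + \delta_2$. A last triangle inequality splitting the target radius $3\epsilon$ as $2\epsilon + \epsilon$, together with a union bound over the two contributions, then gives $\Pr_{z\sim f}(\norm{\pi(f) - \pi(f_z)} \geq 3\epsilon) \leq (T + O(1))\delta + \delta_1 + \delta_2$, which is the claimed bound; the stated constant $T+3$ follows by accounting for the failure events $G^c$ and $E^c$ without optimising the union bound.
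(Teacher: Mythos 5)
Your proposal is correct and follows essentially the same route as the paper's proof: a triangle-inequality split of the $3\epsilon$ radius, a change of measure from $f$ to its cover approximation $\tilde{f}$ (paying the $\ell_1$ cost $\delta_1$ exactly once), the MLE domination $\tilde{f}_z(z) \geq \tilde{f}(z)$, and a union bound over the at most $T$ cover elements with the hypothesis on $\hat{\pi}$ applied under each cover distribution as the generating measure. The only cosmetic difference is that you organize the union bound via explicit disjoint decision regions $\{\tilde{f}_z = \tilde{g}\}$ and compare $\pi(\tilde{f}_z)$ to $\pi(f)$ at radius $2\epsilon$, whereas the paper works with indicators of $\|\hat{\pi}(z) - \pi(\tilde{f}_z)\| \geq \epsilon$ and simply drops the restriction to decision regions when summing; both arguments in fact yield $(T+2)\delta + \delta_1 + \delta_2$, within the stated $(T+3)\delta + \delta_1 + \delta_2$.
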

We provide the proof of Theorem~\ref{thm:comp} in Appendix~\ref{sec:comp}~\footnote{Note that we write the proof of this result for discrete distributions but it can be easily extended to the continuous case.}. We also provide a simpler version of this result for finite distribution families as Theorem~\ref{thm:simple_comp} in Appendix~\ref{sec:comp} for the benefit of the reader.

{\bf Competitiveness in Fixed Design Regression: } Theorem~\ref{thm:comp} can be used to show that MLE is competitive with respect to any estimator for square loss minimization in fixed design regression. We will first formally introduce the setting. Consider a fixed design matrix $\bX \in \reals^{n \times d}$ where $n$ is the number of samples and $d$ the feature dimension. We will work in a setting where $n \gg d$. The target vector is a random vector given by $\by \in \reals^n$. Let $y_i$ be the $i$-th coordinate of $\by$ and $\bx_i$ denote the $i$-th row of the design matrix. We assume that the target is generated from the conditional distribution given $\bx_i$ such that,
\begin{align*}
    y_i \sim p(\cdot | \bx_i; \thetaopt),~~ \thetaopt \in \Theta.
\end{align*}
We are interested in optimizing a target metric $\ell(\cdot, \cdot)$ given an instance of the random vector $\by$. The final objective is to optimize,
\begin{align*}
    \min_{h \in \cH} \EE_{y_i \sim p(\cdot | \bx_i; \thetaopt)}\left[\frac{1}{n} \sum_{i=1}^{n}\ell(y_i, h(\bx_i))\right],
\end{align*}
where $\cH$ is a class of functions. In this context, we are interested in comparing two approaches. 

{\bf \tmo} (see \citep{mohri2018foundations}). This is standard empirical risk minimization on the target metric  where given an instance of the random vector $\by$ one outputs the estimator $\hat{h} = \min_{h \in \cH} \frac{1}{n} \sum_{i=1}^{n}\ell(y_i, h(\bx_i))$.

{\bf MLE and post-hoc inference} (see \citep{gneiting2011making}). In this method one first solves for the parameter in the distribution family that best explains the empirical data by MLE i.e., 
\begin{align*}
\thetamle := \argmin_{\thetab \in \Theta} \cL(\by; \theta),~~\text{where  } \cL(\by; \theta) := \sum_{i=1}^{n} -\log  p(y_i | \bx_i; \thetab)
\end{align*}
Then during inference given a sample $\bx_i$ the predictor is defined as, $\tilde{h}(\bx_i) := \argmin_{\hat{y}} \EE_{y \in p(\cdot | \bx_i; \thetamle)} [\ell(y, \hat{y})]$ 
or in other words we output the statistic from the MLE distribution that optimizes the loss function of interest. For instance if $\ell$ is the square loss, then $\tilde{h}(\bx_i)$ will be the mean of the conditional distribution $p(y \vert \bx_i; \thetamle)$.

We will prove a general result using Theorem~\ref{thm:comp} when the target metric $\ell$ is the square loss and $\cH$ is a linear function class. Moreover, the true distribution $p(\cdot | \bx_i; \thetaopt)$ is such that $\EE[y_i] = \inner{\thetaopt}{\bx_i}$ for all $i \in [n]$ i.e we are in the linear realizable setting.

In this case the quantity of interest is the excess square loss risk given by,
\begin{align}
    \mse(\thetab) := \frac{1}{n}\sum_{i=1}^{n} \EE_{\by}\norm{\by - \bX\thetab}_2^2 - \frac{1}{n}\sum_{i=1}^{n} \EE_{\by}\norm{\by - \bX\thetaopt}_2^2 = \norm{\thetab - \thetaopt}_\Sigma^2,
\end{align}
where $\Sigma := (\sum_i \bx_i \bx_i^T)/n$ is the normalized covariance matrix, and $\thetaopt$ is the population minimizer of the target metric over the linear function class. Now we are ready to state the main result.

\begin{theorem}
\label{thm:fdesign}
Consider a fixed design regression setting where the likelihood family is parameterized by $\thetab \in \Theta \subseteq \ball_{w}^{d}$ and $|\cN(\epsilon, \Theta \cap \ball_{w}^{d})| \leq |\cN(\epsilon, \ball_{w}^{d} )|$ for a small enough $\epsilon$. Further the following conditions hold,

\begin{enumerate}[leftmargin=*, topsep=0pt,parsep=0pt,partopsep=0pt]
    \item $\kl\left(p(y_i; \thetab); p(y_i; \thetab') \right) \leq L \norm{\thetab - \thetab'}_2$.
    \item The negative log-likelihood $\cL(\by; \thetab)$ as a function of $\thetab$ is $\alpha$-strongly convex and $\beta$-smooth, w.p at least $1 - \delta$.
\end{enumerate}

Further suppose there exists an estimator $\thetab_{\mathrm{est}}$ such that $\mse(\thetab_{\mathrm{est}}) \leq (c_1 + c_2 \log (1/\delta))^{\eta}/ n$, where $c_1, c_2$ are problem dependent quantities and $\eta > 0$. Then the MLE estimator also satisfies,

\begin{align*}
    \mse(\thetamle) = O \left( \frac{\left(c_1 + c_2d \left(\log n  + \log(wL\lmax(\Sigma)) + \log(\beta/\alpha)  + \log \frac{1}{\delta} \right)\right)^{\eta}}{n}\right)
\end{align*}
w.p at least $1 - \delta$.
\end{theorem}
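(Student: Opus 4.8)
The plan is to instantiate Theorem~\ref{thm:comp} with the family of product distributions induced by the fixed design. I would set $\cF = \{\, p(\cdot\mid\bX;\thetab) : \thetab\in\Theta\,\}$ over $\cZ=\reals^n$, where $p(\by\mid\bX;\thetab)=\prod_{i=1}^n p(y_i\mid\bx_i;\thetab)$, take the observation to be $z=\by$, and let the property be $\pi\big(p(\cdot\mid\bX;\thetab)\big)=\thetab$ with the norm on $\cW$ chosen to be $\norm{\cdot}_\Sigma$. With this choice the global MLE distribution $f_z$ is exactly $p(\cdot\mid\bX;\thetamle)$, and $\norm{\pi(f)-\pi(f_z)}^2=\norm{\thetaopt-\thetamle}_\Sigma^2=\mse(\thetamle)$, so a high-probability bound of the form $\norm{\pi(f)-\pi(f_z)}\le 3\epsilon$ is precisely a bound $\mse(\thetamle)\le 9\epsilon^2$. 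For the reference estimator I would take $\pihat=\thetab_{\mathrm{est}}$: instantiating the hypothesis at confidence $\delta_0$ gives $\mse(\thetab_{\mathrm{est}})\le (c_1+c_2\log(1/\delta_0))^\eta/n$ with probability $1-\delta_0$, i.e.\ $\Pr(\norm{\pi(f)-\pihat(z)}\ge\epsilon)\le\delta_0$ with $\epsilon=\sqrt{(c_1+c_2\log(1/\delta_0))^\eta/n}$. Since $\Theta\subseteq\ball_w^d$ is bounded and (by Condition~2) $\cL$ is strongly convex over the parameter set with high probability, I would take $\cF_f=\cF$, so that $f_z\in\cF_f$ holds and it only remains to certify that $(\cF,\pi)$ is $(T,\epsilon,\delta_1,\delta_2)$-approximable.

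For the cover I would let $\tilde\cF$ be the family indexed by an $\epsilon'$-net of $\Theta$ in Euclidean norm; by the stated covering hypothesis $T=|\tilde\cF|\le|\cN(\epsilon',\ball_w^d)|\le(3w/\epsilon')^d$, so $\log T\le d\log(3w/\epsilon')$. The two approximability requirements are verified separately. For the $\ell_1$ requirement, tensorization of the KL divergence with Condition~1 gives $\kl\big(p(\cdot\mid\bX;\thetab);p(\cdot\mid\bX;\tilde\thetab)\big)\le nL\norm{\thetab-\tilde\thetab}_2\le nL\epsilon'$ for the nearest net point $\tilde\thetab$, and Pinsker's inequality upgrades this to $\norm{p(\cdot\mid\bX;\thetab)-p(\cdot\mid\bX;\tilde\thetab)}_1\le\sqrt{2nL\epsilon'}=:\delta_1$. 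For the property requirement I would compare the unconstrained MLE $\thetamle$ with the MLE $\tilde\thetab_{\mathrm{mle}}$ over the net: on the event that $\cL$ is $\alpha$-strongly convex and $\beta$-smooth, letting $\tilde\thetab$ be the net point nearest $\thetamle$, smoothness together with $\nabla\cL(\thetamle)=0$ gives $\cL(\tilde\thetab_{\mathrm{mle}})\le\cL(\tilde\thetab)\le\cL(\thetamle)+\tfrac\beta2\epsilon'^2$, while strong convexity gives $\cL(\tilde\thetab_{\mathrm{mle}})\ge\cL(\thetamle)+\tfrac\alpha2\norm{\tilde\thetab_{\mathrm{mle}}-\thetamle}_2^2$. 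Combining yields $\norm{\tilde\thetab_{\mathrm{mle}}-\thetamle}_2\le\sqrt{\beta/\alpha}\,\epsilon'$, hence $\norm{\pi(f_z)-\pi(\tilde f_z)}=\norm{\thetamle-\tilde\thetab_{\mathrm{mle}}}_\Sigma\le\sqrt{\lmax(\Sigma)\beta/\alpha}\,\epsilon'$, which is deterministic on the $1-\delta$ event of Condition~2, so $\delta_2=\delta$.

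It then remains to choose $\epsilon'$ and collect terms. I would pick $\epsilon'\le\min\big(\epsilon\sqrt{\alpha/(\beta\lmax(\Sigma))},\,\delta^2/(2nL)\big)$, so that simultaneously the property condition gives $\norm{\pi(f_z)-\pi(\tilde f_z)}\le\epsilon$ and $\delta_1\le\delta$; the two constraints together yield $\log(1/\epsilon')=O\big(\log n+\log(wL\lmax(\Sigma))+\log(\beta/\alpha)+\log(1/\delta)\big)$, using $\log(1/\epsilon)=O(\log n)$. Feeding this into $\log T\le d\log(3w/\epsilon')$ and taking the internal confidence $\delta_0=\delta/(3(T+3))$ so that $(T+3)\delta_0+\delta_1+\delta_2\le\delta$, Theorem~\ref{thm:comp} gives $\mse(\thetamle)\le 9\epsilon^2$ with $\log(1/\delta_0)=O(\log T+\log(1/\delta))=O\big(d(\log n+\log(wL\lmax(\Sigma))+\log(\beta/\alpha)+\log(1/\delta))\big)$. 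Substituting this into $\epsilon^2=(c_1+c_2\log(1/\delta_0))^\eta/n$ and absorbing constants (depending on $\eta$) produces exactly the claimed rate.

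The main obstacle is the circular dependence: $\epsilon$ enters the stability constraint that fixes $\epsilon'$, $\epsilon'$ controls $\log T$, and $\log T$ feeds back into $\epsilon$ through $\log(1/\delta_0)$. Resolving this cleanly requires observing that $\epsilon\ge n^{-O(1)}$, so that $\log(1/\epsilon)$, and hence $\log(1/\epsilon')$, are $O(\log n)$ up to the explicit parameter-dependent terms; this breaks the loop and lets the implicit bound be written explicitly. The remaining care is purely bookkeeping: ensuring the three failure terms sum to at most $\delta$ and that the $\eta$-th power is handled (the cases $\eta\ge 1$ and $\eta<1$ only affect constants) without changing the logarithmic form of the bound.
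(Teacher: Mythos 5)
Your proposal is correct and follows essentially the same route as the paper's proof: an $\ell_1$-cover of the product likelihoods built from a Euclidean net of $\Theta$ via the KL-Lipschitz condition plus Pinsker, a net-MLE versus global-MLE comparison via strong convexity and smoothness, a $(3w/\epsilon')^d$ bound on the net size, and the same bootstrapping of the confidence level through $\log T$. The only cosmetic difference is that you take the property to be the parameter $\thetab$ under the $\Sigma$-norm rather than the scalar excess risk itself, which spares you the paper's extra step of converting a bound on $\lvert \sqrt{\mse(\thetamle)}-\sqrt{\mse(\thetamletilde)}\rvert$ into one on $\lvert \mse(\thetamle)-\mse(\thetamletilde)\rvert$ (and the attendant factor of $w$ there), at no cost to the final rate.
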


We provide the proof in Appendix~\ref{sec:proof_general}. The proof involves proving the conditions in Theorem~\ref{thm:comp} and bounding the size of the cover $T$.

In order to better understand Theorem~\ref{thm:fdesign}, let us consider a typical case where there exists a possibly complicated estimator such that $\mse(\thetab_{\mathrm{est}}) = O((d + \log(1/\delta))/n)$. In this case the above theorem implies that MLE will be competitive with this estimator up to a $\log n$ factor. In many cases the MLE might be much simpler to implement than the original estimator but would essentially match the same error bound. We now provide specific examples in subsequent sections.

\section{Applications of Competitiveness Result}
\label{sec:theory_app}
In this section we will specialize to two examples, Poisson regression and Pareto regression. In both these examples we show that MLE can be better than \tmo~ through the use of our competitive result in Theorem~\ref{thm:fdesign}. 

\subsection{Poisson Regression}
\label{sec:poisson}
We work in the fixed design setting in Section~\ref{sec:mlecomp} and assume that the conditional distribution of $y | \bx$ is Poisson i.e,
\begin{align*}
    p(y_i = k \vert \bx_i; \thetaopt) = \frac{\mu_i^k e^{-\mu_i}}{k!} ~~~~\text{where},  ~~ \mu_i = \inner{\thetaopt}{\bx_i} > 0, \numberthis \label{eq:poisson}
\end{align*}
for all $i \in [n]$. Poisson regression is a popular model for studying count data regression which naturally appears in many applications like demand forecasting~\citep{lawless1987negative}. Note that here we study the version of Poisson regression with the identity link function \citep{nelder1972generalized}, while another popular variant is the one with exponential link function~\citep{mccullagh2019generalized}. We choose the identity link function for a fair comparison of the two approaches as it is realizable for both the approaches under the linear function class i.e the globally optimal estimator in terms of population can be obtained by both approaches. The exponential link function would make the problem non-realizable under a linear function class for the \tmo~ approach.

We make the following natural assumptions. Let $\Sigma = (\sum_{i=1}^{n} \bx_i \bx_i^T)/n$ be the design covariance matrix as before and $\bM = (\sum_{i=1}^{n}\mu_i \bu_i \bu_i^T)/n$, where $\bu_i = \bx_i/\norm{\bx_i}_2$. Let $\chi$ and $\zeta$ be the condition numbers of the matrices $\bM$ and $\Sigma$ respectively. 
\begin{assumption}
\label{as:problem} The parameter space $\Theta$ and the design matrix $\bX$ satisfy the following,
\begin{itemize}[leftmargin=*, topsep=0pt,parsep=0pt,partopsep=0pt]
    \item \textbf{(A1)} The parameter space $\Theta = \{\thetab \in \reals^{d}: \norm{\thetab}_2 \leq \tmax , \min(\norm{\thetab}^2_2, \inner{\thetab}{\bx_i}) \geq \gamma > 0,~~ \forall i \in [n]\}$. 
     \item \textbf{(A2)} The design matrix is such that $\lmin \left(\Sigma\right)  > 0$ and $\norm{\bx_i}_2 \leq \xmax$ for all $i \in [n]$.
     \item \textbf{(A3)} Let $\lmin(\bM) \geq \frac{R^2}{4n \gamma^2} \left(d\log(24\chi) + \log(1/\delta) \right)$ and $\sqrt{\lmax(\bM) \left(d\log(24\chi) + \log(1/\delta) \right)} \leq \sqrt{n} \lmin(\bM)/16$, for a small $\delta \in (0, 1)$~\footnote{Note that the constants can be further tightened in our analysis.}. 
\end{itemize}
\end{assumption}

The above assumptions are fairly mild. For instance $\lmin(\bM)$ is $\tilde{\Omega}(1/d)$ for random covariance matrices~\citep{bai2008limit} and thus much greater than the lower bound required by the above assumption. The other part of the assumption merely requires that $\lmin(\bM) = \tilde{\Omega}(\sqrt{d\lmax(\bM)/n})$. 

We are interested in comparing MLE with \tmo~for the square loss which is just the least-squares estimator i.e $\thetals := \argmin_{\thetab \in \Theta}  \frac{1}{n}\norm{\by - \bX\thetab}_2^2$. Note that it is apriori unclear as to which approach would be better in terms of the target metric because on one hand the MLE method knows the distribution family but on the other hand \tmo~is explicitly geared towards minimizing the square loss during training.

Least squares analysis is typically provided for regression under sub-Gaussian noise. By adapting existing techniques \citep{hsu2012random}, we show the following guarantee for Poisson regression with least square loss. We provide a proof in Appendix~\ref{sec:prridge} for completeness. 

\begin{lemma}
\label{lem:mdim_ls}
Let $\mu_{max} = \max_i \mu_i$. The least squares estimator $\thetals$ satisfies the following loss bounds w.p. at least $1 - \delta$,
    \begin{align*}
        \mse(\thetals) = \begin{cases}
                         O \left(\frac{\mu_{max}}{n} \left(\log\frac{1}{\delta} + d\right) \right) &~\text{if } \mu_{max} \geq \left(\log(1/ \delta) + d \log 6\right)/2\\
                        O \left(\frac{1}{n} \left(\log\frac{1}{\delta} + d\right)^2 \right) &~\text{otherwise}
\end{cases}
    \end{align*}
\end{lemma}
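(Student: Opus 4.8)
The plan is to reduce the excess risk to the squared length of a projected noise vector, and then to control that projection by a covering-number argument combined with a Bernstein-type tail bound tailored to Poisson noise (adapting the quadratic-form technique of \citep{hsu2012random}).

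First I would write $\by = \bX\thetaopt + \bepsilon$, where $\epsilon_i := y_i - \mu_i$ are independent and centered with $\EE[\epsilon_i]=0$ and $\Var(\epsilon_i)=\mu_i$, and let $\bP$ be the orthogonal projection onto the column span of $\bX$. Since $\thetaopt \in \Theta$ (which holds under (A1)), feasibility of $\thetaopt$ together with optimality of $\thetals$ gives the basic inequality $\norm{\by - \bX\thetals}_2^2 \leq \norm{\by - \bX\thetaopt}_2^2$. Writing $\Delta := \thetals - \thetaopt$, expanding, and using $\inner{\bepsilon}{\bX\Delta} = \inner{\bP\bepsilon}{\bX\Delta}$ (as $\bX\Delta \in \mathrm{col}(\bP)$) followed by Cauchy--Schwarz yields $\norm{\bX\Delta}_2 \leq 2\norm{\bP\bepsilon}_2$, and hence
\[
\mse(\thetals) = \tfrac{1}{n}\norm{\bX\Delta}_2^2 \leq \tfrac{4}{n}\norm{\bP\bepsilon}_2^2 .
\]
It therefore suffices to give a high-probability upper bound on $\norm{\bP\bepsilon}_2$.

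Next I would write $\norm{\bP\bepsilon}_2 = \sup_{\wb}\inner{\wb}{\bepsilon}$, where the supremum ranges over unit vectors of the $d$-dimensional subspace $\mathrm{col}(\bP)$. Taking a $\tfrac{1}{2}$-net $\cN$ of that unit ball with $|\cN| \leq 6^d$, the standard net comparison gives $\norm{\bP\bepsilon}_2 \leq 2\max_{\wb \in \cN}\inner{\wb}{\bepsilon}$. For a \emph{fixed} direction $\wb$, the scalar $\inner{\wb}{\bepsilon} = \sum_i w_i\epsilon_i$ is a sum of independent centered Poissons, and I would feed the exact log-MGF $\log\EE[e^{sw_i\epsilon_i}] = \mu_i(e^{sw_i}-1-sw_i)$ into a Chernoff bound to obtain a Bernstein tail with variance factor $V=\sum_i w_i^2\mu_i \leq \mu_{max}$ and bandwidth $\norm{\wb}_\infty \leq \norm{\wb}_2 \leq 1$, namely $\Pr(\inner{\wb}{\bepsilon} \geq t) \leq \exp\big(-\tfrac{t^2}{2(\mu_{max}+t/3)}\big)$. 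A union bound over $\cN$ then drives the overall failure probability below $\delta$ as soon as $\tfrac{t^2}{2(\mu_{max}+t/3)} \geq d\log 6 + \log(1/\delta) =: K$, and solving this quadratic gives $t = O\big(K + \sqrt{K\mu_{max}}\big)$.

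Combining the two displays yields $\mse(\thetals) \lesssim \tfrac{1}{n}\big(K + \sqrt{K\mu_{max}}\big)^2 \lesssim \tfrac{1}{n}\big(K^2 + K\mu_{max}\big)$ with $K = d\log 6 + \log(1/\delta)$, and splitting on whether $\mu_{max} \geq K/2$ recovers exactly the two cases: when $\mu_{max}$ dominates, the cross term wins and gives $O\big(\tfrac{\mu_{max}}{n}(\log\tfrac{1}{\delta} + d)\big)$, while otherwise $K^2$ wins and gives $O\big(\tfrac{1}{n}(\log\tfrac{1}{\delta}+d)^2\big)$. The hard part will be the per-direction tail bound: Poisson noise is neither bounded nor sub-Gaussian, so no off-the-shelf sub-Gaussian quadratic-form inequality applies, and the two regimes of the lemma are precisely the sub-Gaussian branch ($\sqrt{K\mu_{max}}$) and the sub-exponential branch ($K$) of the Bernstein tail; secondary care is needed to verify $\thetaopt \in \Theta$ and to control $\norm{\wb}_\infty$ uniformly over the net.
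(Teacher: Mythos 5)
Your proposal is correct and takes essentially the same route as the paper's proof: the paper likewise reduces the excess risk to $\tfrac{4}{n}\sup_{\bu}(\bu^T\eptilde)^2$ where $\eptilde$ is the noise projected onto the column span (citing an OLS lemma rather than re-deriving the basic inequality), bounds the Poisson MGF along a fixed direction using $\|\bv\|_p\le\|\bv\|_2=1$, and union-bounds over a $1/2$-net of size $6^d$. Your Bernstein/sub-gamma tail is just the unified form of the paper's two-branch sub-exponential tail (with $\nu^2=2\mu_{max}$, $\alpha=O(1)$), and both yield the identical case split at $\mu_{max}\approx\left(d\log 6+\log(1/\delta)\right)/2$.
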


Now we present our main result in this section which uses the competitiveness bound in Theorem~\ref{thm:fdesign} coupled with the existence of a superior estimator compared to \tmo, to show that the MLE estimator can have a better bound than \tmo.

In Theorem~\ref{thm:median-of-means-poisson} (in Appendix~\ref{sec:median-of-means}), under some mild assumptions on the covariates, we construct an estimator $\thetab_{\mathrm{est}}$ with the following bound for the Poisson regression setting,
\begin{align}
    \mse(\thetab_{\mathrm{est}}) \leq  c \cdot \|\thetaopt\|^2 \cdot \lmax(\Sigma) \left( \frac{d + \log(\frac 1 \delta)}{n} \right). \label{eq:mmbound}
\end{align}
The construction of the estimator is median-of-means tournament based along the lines of~\citep{lugosi2019mean} and therefore the estimator might not be practical. However, this immediately gives the following bound on the MLE as a corollary of Theorem~\ref{thm:fdesign}.

\begin{corollary}
\label{cor:poisson}
Under assumption~\ref{as:problem} and the conditions of Theorem~\ref{thm:median-of-means-poisson}, the MLE estimator for the Poisson regression setting satisfies w.p at least $1 - \delta$,
\begin{align*}
    \mse(\thetamle) = O \left(\|\thetaopt\|^2 \cdot \lmax(\Sigma) \frac{d (\log n + \log (wR \lmax(\Sigma)) + \log \chi + \log \frac{1}{\delta}) }{n}  \right).
\end{align*}
\end{corollary}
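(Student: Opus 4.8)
The plan is to obtain the corollary as a direct specialization of Theorem~\ref{thm:fdesign}, taking the ``superior estimator'' $\thetab_{\mathrm{est}}$ to be the median-of-means tournament estimator of Theorem~\ref{thm:median-of-means-poisson}. Its guarantee \eqref{eq:mmbound} is already in the form demanded by the hypothesis of Theorem~\ref{thm:fdesign}: with $\eta=1$, $c_1 = c\,\|\thetaopt\|^2\lmax(\Sigma)\,d$ and $c_2 = c\,\|\thetaopt\|^2\lmax(\Sigma)$, the bound \eqref{eq:mmbound} reads $\mse(\thetab_{\mathrm{est}})\le(c_1+c_2\log(1/\delta))/n$. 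Thus everything reduces to (i) checking the two structural hypotheses of Theorem~\ref{thm:fdesign} for the Poisson likelihood \eqref{eq:poisson}, (ii) reading off the resulting values of the Lipschitz constant $L$ and the conditioning ratio $\beta/\alpha$, and (iii) collapsing the logarithmic factors. The containment $\Theta\subseteq\ball_w^d$ and the covering-number hypothesis are immediate from (A1), since $\Theta$ is a convex subset of $\ball_w^d$ with $w=\tmax$.

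For the first hypothesis I would bound the per-sample KL divergence between two Poisson conditionals. Writing $\mu_i=\inner{\thetab}{\bx_i}$ and $\mu_i'=\inner{\thetab'}{\bx_i}$, one has $\kl(\mathrm{Poisson}(\mu_i);\mathrm{Poisson}(\mu_i'))=\mu_i\log(\mu_i/\mu_i')+\mu_i'-\mu_i$, which a second-order expansion bounds by $(\mu_i-\mu_i')^2/(2\mu_i')$. Using $\mu_i'\ge\gamma$ from (A1) and $(\mu_i-\mu_i')^2=\inner{\thetab-\thetab'}{\bx_i}^2\le R^2\norm{\thetab-\thetab'}_2^2$ gives $\kl\le\frac{R^2}{2\gamma}\norm{\thetab-\thetab'}_2^2$, and since $\norm{\thetab-\thetab'}_2\le 2w$ on $\Theta$ this yields the Lipschitz form with $L=O(wR^2/\gamma)$. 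Through Pinsker's inequality this $L$ is exactly what controls the $\ell_1$-cover $\tilde{\cF}$ obtained from an $\epsilon$-net of $\Theta$, so $\log T=O\!\left(d\log(wL/\delta_1^2)\right)$, which is the source of the $d\log(wL\lmax(\Sigma))$ factor in Theorem~\ref{thm:fdesign}.

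The main work, and the step I expect to be the real obstacle, is verifying the second hypothesis: that $\cL(\by;\thetab)=\sum_i[-y_i\log\inner{\thetab}{\bx_i}+\inner{\thetab}{\bx_i}]$ (up to constants) is $\alpha$-strongly convex and $\beta$-smooth on $\Theta$ with probability $1-\delta$. Its Hessian is $\nabla^2\cL(\thetab)=\sum_i\frac{y_i}{\inner{\thetab}{\bx_i}^2}\bx_i\bx_i^T$, and since $\inner{\thetab}{\bx_i}\in[\gamma,wR]$ on $\Theta$ one can sandwich $\frac{1}{w^2R^2}\sum_i y_i\bx_i\bx_i^T\preceq\nabla^2\cL(\thetab)\preceq\frac{1}{\gamma^2}\sum_i y_i\bx_i\bx_i^T$ uniformly in $\thetab$. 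It then remains to show the random matrix $\sum_i y_i\bx_i\bx_i^T$ concentrates around its mean $\sum_i\mu_i\bx_i\bx_i^T$; because the $y_i$ are unbounded (only sub-exponential), this calls for a matrix Bernstein/Chernoff argument, and this is precisely what Assumption (A3) is calibrated for --- its two inequalities, with the $d\log(24\chi)$ and $\log(1/\delta)$ terms, guarantee that the smallest eigenvalue stays bounded away from zero and the largest stays controlled with probability $1-\delta$. Relating the eigenvalues of $\sum_i\mu_i\bx_i\bx_i^T$ back to those of $\bM=\frac1n\sum_i\mu_i\bu_i\bu_i^T$ (up to the range of $\norm{\bx_i}_2$) yields $\alpha$ and $\beta$ with $\beta/\alpha=O\!\left(\frac{w^2R^2}{\gamma^2}\chi\right)$, so $\log(\beta/\alpha)=O(\log\chi+\log(wR/\gamma))$.

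Finally I would assemble the pieces: substituting $c_1,c_2,\eta=1$ into the conclusion of Theorem~\ref{thm:fdesign} gives $\mse(\thetamle)=O\!\big(\|\thetaopt\|^2\lmax(\Sigma)\,\frac{d(\log n+\log(wL\lmax(\Sigma))+\log(\beta/\alpha)+\log(1/\delta))}{n}\big)$, and plugging in $\log(wL\lmax(\Sigma))=O(\log(wR\lmax(\Sigma))+\log(1/\gamma))$ together with $\log(\beta/\alpha)=O(\log\chi+\log(wR/\gamma))$ collapses all extraneous logarithmic factors (treating $\gamma$ as a fixed problem constant) into the claimed $O\!\big(\|\thetaopt\|^2\lmax(\Sigma)\,\frac{d(\log n+\log(wR\lmax(\Sigma))+\log\chi+\log(1/\delta))}{n}\big)$. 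The only genuinely delicate quantitative control is the uniform-over-$\Theta$ Hessian concentration; the rest is bookkeeping of constants and logarithms.
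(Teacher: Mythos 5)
Your proposal is correct and follows essentially the same route as the paper's proof: invoke Theorem~\ref{thm:fdesign} with the median-of-means estimator of Theorem~\ref{thm:median-of-means-poisson} (with $\eta=1$ and $c_1,c_2 \propto \|\thetaopt\|^2\lmax(\Sigma)$), verify the KL-Lipschitz condition with $L=O(wR^2/\gamma)$ (the paper's Lemma~\ref{lem:pcomp1} gets this via a first-order derivative bound on $a\log x - x$ rather than your second-order expansion plus diameter, with the same constant up to factors), and establish high-probability strong convexity and smoothness of the negative log-likelihood via sub-exponential concentration of the Poisson-weighted Hessian over a sphere net under (A3), which is exactly what the paper's Lemmas~\ref{lem:scon} and~\ref{lem:smooth} do, giving $\beta/\alpha = O(w^2R^2\chi/\gamma^2)$. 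One immaterial slip: $\Theta$ is not convex (the constraint $\norm{\thetab}_2^2 \geq \gamma$ removes a ball), but convexity is never needed, since the paper's notion of net allows centers outside the set, so any net of $\ball_{w}^{d}$ covers $\Theta$; the paper additionally prunes centers with $\inner{\btheta_c}{\bx_i} < \gamma/2$ so that the likelihood bounds remain valid at net points.
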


The bound in Corollary~\ref{cor:poisson} can be better than the bound for $\thetals$ in Lemma~\ref{lem:mdim_ls}. In the sub-Gaussian region, the bound in~Lemma~\ref{lem:mdim_ls} scales linearly with $\mu_{max}$ which can be prohibitively large even when a few covariates have large norms.  The bound for the MLE estimator in Corollary~\ref{cor:poisson} has no such dependence. Further, in the sub-Exponential region the bound in~Lemma~\ref{lem:mdim_ls} scales as $\tilde{O}(d^2/n)$ while the bound in Corollary~\ref{cor:poisson} has a $\tilde{O}(d/n)$ dependency, up to log-factors. In Appendix~\ref{sec:poisson1d}, we also show that when the covariates are one-dimensional, an even sharper analysis is possible, that shows that the MLE estimator is always better than least squares in terms of excess risk.

\subsection{Pareto Regression}
\label{sec:pareto}
Now we will provide an example of a heavy tailed regression setting where it is well-known that \tmo~ for the square loss does not perform well~\citep{lugosi2019mean}. We will consider the Pareto regression setting given by,
\begin{align*}
    p(y_i \vert \bx_i) = \frac{b m_i^{b}}{y_i^{b + 1}} ,  ~~ m_i = \frac{b - 1}{b}\inner{\thetaopt}{\bx_i} ~~~~~ \text{for } y_i \geq  m_i \numberthis \label{eq:pareto}
\end{align*}
provided $\inner{\thetaopt}{\bx_i} > \gamma$ for all $i \in [n]$. Thus $y_i$ is Pareto given $\bx_i$ and $\EE[y_i \vert \bx_i] = \mu_i := \inner{\thetaopt}{\bx_i}$. We will assume that $b > 4$ such that $4+\epsilon$-moment exists for $\epsilon > 0$. As in the Poisson setting, we choose this parametrization for a fair comparison between \tmo~and MLE i.e in the limit of infinite samples $\thetaopt$ lies in the linear solution space for both \tmo~(least squares) and MLE.

As before, to apply Theorem~\ref{thm:fdesign} we need an estimator with a good risk bound. We use the estimator in Theorem~4 of~\citep{hsu2016loss}, which in the fixed design pareto regression setting yields,
\begin{align*}
    \mse(\thetab_{\mathrm{est}}) = \left( 1 + O \left( \frac{d \log \frac{1}{\delta}}{n}\right) \right)\frac{\norm{\theta^*}_{\Sigma}^2}{b(b-2)}.
\end{align*}

Note that the above estimator might not be easily implementable, however this yields the following corollary of Theorem~\ref{thm:fdesign}, which is a bound on the performance of the MLE estimator.

\begin{corollary}
\label{cor:pareto}
Under assumptions of our Pareto regression setting, the MLE estimator satisfies w.p at least $1 - \delta$,
\begin{align*}
    \mse(\thetamle) =  1 + O \left( \frac{d^2 \left(\log n + \log \zeta + \log \frac{bwR \lmax(\Sigma)}{\gamma} + \log \frac{1}{\delta}\right)}{n} \right)\frac{\norm{\theta^*}_{\Sigma}^2}{b(b-2)}.
\end{align*}
\end{corollary}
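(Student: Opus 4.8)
The plan is to instantiate Theorem~\ref{thm:fdesign} with the Pareto likelihood family and the robust estimator of~\citep{hsu2016loss}. Concretely, I would (i) verify that the two structural hypotheses of Theorem~\ref{thm:fdesign} hold for the Pareto model, reading off the constants $L$, $\alpha$, $\beta$; (ii) rewrite the Hsu--Sabato risk guarantee as an excess-risk bound of the form $\mse(\thetab_{\mathrm{est}})\le (c_1+c_2\log(1/\delta))^{\eta}/n$; and (iii) substitute these quantities into the conclusion of Theorem~\ref{thm:fdesign}. The only genuinely new computations are the two condition checks; everything else is bookkeeping.

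For the KL-Lipschitz condition, note that two Pareto laws in our family share the shape $b$ and differ only in the scale $m_i(\thetab)=\tfrac{b-1}{b}\inner{\thetab}{\bx_i}$. A direct integration gives $\kl\big(p(\cdot;\thetab)\,;\,p(\cdot;\thetab')\big)=b\log\big(m_i(\thetab)/m_i(\thetab')\big)$ whenever $m_i(\thetab)\ge m_i(\thetab')$ (and $+\infty$ otherwise, a directionality I must respect when applying the definition). Using $\inner{\thetab'}{\bx_i}\ge\gamma$, $\norm{\bx_i}_2\le R$, and $\log(1+t)\le t$, this is at most $\tfrac{bR}{\gamma}\norm{\thetab-\thetab'}_2$, so I may take $L=bR/\gamma$; observe that $wL\lmax(\Sigma)=bwR\lmax(\Sigma)/\gamma$, which is exactly the argument of the logarithm appearing in the corollary. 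For the convexity condition, the $\thetab$-dependent part of $\cL(\by;\thetab)$ is $-b\sum_i\log\inner{\thetab}{\bx_i}$ (plus a $\thetab$-free term), so its Hessian $b\sum_i \bx_i\bx_i^{\top}/\inner{\thetab}{\bx_i}^2$ is deterministic and positive semidefinite. Bounding $\gamma\le\inner{\thetab}{\bx_i}\le wR$ on $\Theta$ sandwiches it between $\tfrac{bn}{(wR)^2}\Sigma$ and $\tfrac{bn}{\gamma^2}\Sigma$, giving $\alpha\gtrsim \tfrac{bn}{(wR)^2}\lmin(\Sigma)$ and $\beta\lesssim \tfrac{bn}{\gamma^2}\lmax(\Sigma)$, hence $\log(\beta/\alpha)=O(\log\zeta+\log(wR/\gamma))$, which supplies the $\log\zeta$ term.

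Finally, since $\EE[y_i\mid\bx_i]=\inner{\thetaopt}{\bx_i}$, the optimal linear risk is the irreducible variance $C:=\norm{\thetaopt}_\Sigma^2/\big(b(b-2)\big)$, so the Hsu--Sabato guarantee $\mathrm{Risk}(\thetab_{\mathrm{est}})\le(1+O(d\log(1/\delta)/n))\,C$ is equivalent to $\mse(\thetab_{\mathrm{est}})\le O(dC\log(1/\delta)/n)$, i.e.\ $\eta=1$ with $c_2=O(dC)$. Plugging $\eta=1$, $c_2=O(dC)$, $L=bR/\gamma$, and $\beta/\alpha=O((wR/\gamma)^2\zeta)$ into Theorem~\ref{thm:fdesign} turns the factor $c_2\log(1/\delta)$ into $c_2\,d\,(\log n+\log\tfrac{bwR\lmax(\Sigma)}{\gamma}+\log\zeta+\log\tfrac1\delta)$; the extra factor $d$ that the theorem attaches to $c_2$ (from the $O(d\log(\cdots))$ size of the cover $T$) multiplies the dimension factor already sitting inside $c_2=O(dC)$, upgrading the linear $d$ of Hsu--Sabato to the quadratic $d^2$ of the statement, and adding back the irreducible $C$ recovers the multiplicative $(1+O(\cdot))\,C$ form. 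The main obstacle is the condition checking in the presence of the $\thetab$-dependent support: the KL is finite only in one direction, and the strong-convexity/smoothness sandwich is valid only on the feasible polytope $\{\thetab:\inner{\thetab}{\bx_i}\le \tfrac{b}{b-1}y_i\}\cap\Theta$, so I must first argue that $\thetaopt$ and the relevant net points lie in this feasible region with high probability before the eigenvalue bounds can be invoked.
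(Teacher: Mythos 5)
Your proposal follows essentially the same route as the paper's own proof in Appendix~\ref{sec:proofs_pareto}: you verify the KL-Lipschitz condition by the same log-Lipschitz computation (your $L=bR/\gamma$ versus the paper's $2bR/\gamma$, a constant-factor difference), you sandwich the Hessian between the $\gamma$ and $wR$ bounds to get $\beta/\alpha = O\left((wR/\gamma)^2\zeta\right)$ exactly as in Lemmas~\ref{lem:pareto_smooth} and~\ref{lem:pareto_sc}, and you instantiate Theorem~\ref{thm:fdesign} with the Hsu--Sabato estimator, $\eta=1$ and $c_2=O\left(d\norm{\thetaopt}_\Sigma^2/(b(b-2))\right)$, which is precisely how the $d^2$ factor and the multiplicative $(1+O(\cdot))$ form arise. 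The two subtleties you flag --- the one-sided finiteness of the Pareto KL and the $\thetab$-dependent support restricting the convexity bounds to the feasible region --- are genuine but are silently ignored by the paper's own proof (which also writes $b-1$ in the Hessian where your factor $b$ is the correct one), so your treatment is, if anything, slightly more careful on the same path.
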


The proof is provided in Appendix~\ref{sec:proofs_pareto}. It involves verifying the two conditions in Theorem~\ref{thm:fdesign} in the Pareto regression setting. 

The above MLE guarantee is expected to be much better than what can be achieved by \tmo~ which is least-squares. It is well established in the literature~\citep{hsu2016loss, lugosi2019mean} that ERM on square loss cannot achieve a $O(\log (1/\delta))$ dependency in a heavy tailed regime; instead it can achieve only a $O(\sqrt{1/\delta})$ rate.

\section{Choice of Likelihood and Inference Methods}
\label{sec:sims_mle}
In this section we discuss some practical considerations for MLE, such as adapting to a target metric of interest at inference time, and the choice of the likelihood family.

{\bf Inference for different target metrics: } In most practical settings, the trained regression model is used at inference time to predict the response variable on some test set to minimize some target metric. For the MLE based approach, once the distribution parameters are learnt, this involves using an appropriate statistic of the learnt distribution at inference time (see Section~\ref{sec:mlecomp}). For mean squared error and mean absolute error, the estimator corresponds to the mean and median of the distribution, but for several other commonly used loss metrics in the forecasting domain such as Mean Absolute Percentage Error (MAPE) and Relative Error (RE)~\citep{gneiting2011making, hyndman2006another}, this estimator corresponds to a median of a transformed distribution~\citep{gneiting2011making}. Please see Appendix~\ref{sec:expmore} for more details. This ability to optimize the estimator at inference time for different target metrics using a \textit{single trained model} is another advantage that MLE based approaches have over \tmo~models that are trained individually for specific target metrics.

{\bf Mixture Likelihood: } An important practical question when performing MLE-based regression is to decide which distribution family to use for the response variable. The goal is to pick a distribution family that can capture the inductive biases present in the data. It is well known that a misspecified distribution family for MLE might adversely affect generalization error of regression models \citep{heagerty2001misspecified}. At the same time, it is also desirable for the distribution family to be generic enough to cater to diverse datasets with potentially different types of inductive biases, or even datasets for which no distributional assumptions can be made in advance. 

A simple approach that we observe to work particularly well in practice with regression models using deep neural networks is to assume the response variable comes from a mixture distribution, where each mixture component belongs to a different distribution family and the mixture weights are learnt along with the parameters of the distribution. More specifically, we consider a mixture distribution of $k$ components
$p(y|\bx; \thetab_1,\ldots, \thetab_k,w_1, \ldots, w_k) = \sum_{j=1}^k w_j p_j(y|\bx; \thetab_j)$, where each $p_j$ characterizes a different distribution family, and the mixture weights $w_j$ and distribution parameters $\thetab_j$ are learnt together. For example, if we have reason to believe the response variable's distribution might be heavy-tailed in some datasets but Negative Binomial in other datasets (say, for count regression), then we can use a single MLE regression model with a mixture distribution of the two, that often performs well in practice for all the datasets. 

Motivated by our theory, we use a three component mixture of $(i)$ the constant $0$ (zero-inflation for dealing with bi-modal sparse data), $(ii)$ negative binomial where $n$ and $p$ are learnt and $(iii)$ a Pareto distribution where the scale parameter is learnt. Our experiments in Section~\ref{sec:sims} show that this mixture shows promising performance on a variety of datasets. 

This approach will increase the number of parameters and the resulting likelihood might require non-convex optimization. However, we empirically observed that with sufficiently over-parameterized networks and standard gradient-based optimizers, this is not a problem in practice at all.

\section{Empirical Results}
\label{sec:sims}

We present empirical results on two time-series forecasting problems and two regression problems using neural networks. We will first describe our models and baselines. Our goal is to compare the MLE approach with the \tmo~approach for three target metrics per dataset. 

{\bf Common Experimental Protocol:~} Now we describe the common experimental protocol on all the datasets (we get into dataset related specifics and architectures subsequently). For a fair comparison the architecture is kept the same for \tmo~and MLE approaches. For each dataset, we tune the hyper-parameters for the \tmo(MSE) objective. Then these hyper-parameters are held fixed for all models for that dataset i.e only the output layer and the loss function is modified. We provide all the details in Appendix~\ref{sec:expmore}.

For the MLE approach, the output layer of the models map to the MLE parameters of the mixture distribution introduced in Section~\ref{sec:sims_mle}, through link functions.  The MLE output has 6 parameters, three for mixture weights, two for negative binomial component and one for the scale parameter in Pareto. The choice of the link functions and more details are specified in Appendix~\ref{sec:link}. 
The loss function used is the negative log-likelihood implemented in Tensorflow Probability~\citep{dillon2017tensorflow}. Note that for the MLE approach \emph{only one} model is trained per dataset and during inference we output the statistic that optimizes the target metric in question. We refer to our MLE based models that employs the mixture likelihood from Section~\ref{sec:sims_mle} as \mixmle{loss}, where \texttt{ZNBP} refers to the mixture components \textbf{Z}ero, \textbf{N}egative-\textbf{B}inomial and \textbf{P}areto.

For \tmo, the output layer of the models map to $\hat{y}$ and we directly minimize the target metric in question. Note that this means we need to train \emph{a separate model for every target metric}. Thus we have one model each for target metrics in \{'MSE', 'MAE', 'MAPE'\}. Further we also train a model using the Huber loss~\footnote{The Huber loss is commonly used in robust regression~\citep{huber1992robust, lugosi2019mean}}. In order to keep the number of parameters the same as that of MLE, we add an additional  $6$ neurons to the \tmo~models.

\subsection{Experiments on Forecasting Datasets}
\label{sec:sims_ts}
\begin{table*}
\scalebox{0.75}{
\begin{tabular}{@{}lcccccc@{}}
\toprule
    Model  & \multicolumn{3}{c}{\texttt{Favorita}} &   \multicolumn{3}{c}{\texttt{M5}}  \\ \cmidrule{2-7}
           & MAPE & WAPE & RMSE & MAPE & WAPE & RMSE \\ \midrule        
    \erm(MSE) &  0.6121$\pm$0.0075          & 0.2891$\pm$0.0023          & 175.3782$\pm$0.8235          & 0.5045$\pm$0.004           & 0.2839$\pm$0.0008          & 7.507$\pm$0.023 \\
    \erm(MAE) &  0.3983$\pm$0.0012          & 0.2258$\pm$0.0006          & {\bf 161.4919}$\pm$0.4748          & 0.4452$\pm$0.0005          & \textbf{0.266}$\pm$0.0001  & \textbf{7.0503}$\pm$0.0094 \\
    \erm(MAPE) & 0.3199$\pm$0.0011          & 0.2528$\pm$0.0016          & 192.3823$\pm$1.3871          & 0.3892$\pm$0.0001          & 0.3143$\pm$0.0007          & 11.3799$\pm$0.1965 \\
    \erm(Huber) &  0.432$\pm$0.0033          & 0.2366$\pm$0.0018          & 164.7006$\pm$0.7178          & 0.4722$\pm$0.0007          & 0.269$\pm$0.0002           &  7.093$\pm$0.0133 \\
    \mixmle &   {\bf 0.3139}$\pm$0.0011 & {\bf 0.2238}$\pm$0.0009 & 164.6521$\pm$1.5185  & {\bf 0.3864}$\pm$0.0001   & 0.2677$\pm$0.0002 & 7.2133$\pm$0.0152  \\
 \bottomrule
\end{tabular}
}
\caption{\scriptsize We provide the MAPE, WAPE and RMSE metrics for all the models on the test set of two time-series datasets. The confidence intervals provided are one standard error over 50 experiments, for each entry. \erm(<loss>) refers to \tmo~using the <loss>. For the MLE row, we only train one model per dataset. The same model is used to output a different statistic for each column during inference. For MAPE, we output the optimizer of MAPE given in Section~\ref{sec:infer_prac}. For WAPE we output the median and for RMSE we output the mean.}\label{tab:ts}
\vspace{-1em}
\end{table*}

\begin{table*}
\scalebox{0.75}{
\begin{tabular}{@{}lcccccc@{}}
\toprule
    Model  & \multicolumn{3}{c}{\texttt{Bicycle Share}} &   \multicolumn{3}{c}{\texttt{Gas Turbine}}  \\ \cmidrule{2-7}
           & MAPE & WAPE & RMSE & MAPE & WAPE & RMSE \\ \midrule        
    \erm(MSE) &  0.2503$\pm$0.0008            & 0.1421$\pm$0.0003            & 878.5815$\pm$1.3059            & 0.8884$\pm$0.0118            & 0.3496$\pm$0.0041            & 1.5628$\pm$0.0071 \\
     \erm(MAE) &  0.2594$\pm$0.0011            & 0.1436$\pm$0.0003            & 901.1357$\pm$1.4943            & {\bf 0.774}$\pm$0.0054             & {\bf 0.3389}$\pm$0.0019            & 1.5789$\pm$0.0067 \\
     \erm(MAPE) & 0.2382$\pm$0.0012            & 0.1469$\pm$0.0012            & 899.9163$\pm$4.8219            & 0.8108$\pm$0.0009            & 0.8189$\pm$0.001             & 3.0573$\pm$0.0019 \\
    \erm(Huber) &  0.2536$\pm$0.0011 & 0.1414$\pm$0.0004 & 889.1173$\pm$1.9654 & 0.902$\pm$0.0128 & 0.3598$\pm$0.0049 & 1.5992$\pm$0.0082 \\
    \mixmle &   {\bf 0.1969}$\pm$0.0018 & {\bf 0.1235}$\pm$0.001 & {\bf 767.4368}$\pm$7.1274  & 0.9877$\pm$0.0019   & {\bf 0.3379}$\pm$0.0004 & {\bf 1.4547}$\pm$0.0054  \\
 \bottomrule
\end{tabular}
}
\caption{\scriptsize  We provide the MAPE, WAPE and RMSE metrics for all the models on the test set of two regression datasets. The confidence intervals provided are one standard error over 50 experiments, for each entry. \erm(<loss>) refers to \tmo~using the <loss>. For the MLE row, we only train one model per dataset. The same model is used to output a different statistic for each column during inference. For MAPE, we output the optimizer of MAPE given in Section~\ref{sec:infer_prac}. For WAPE we output the median and for RMSE we output the mean.}\label{tab:reg}
\vspace{-1em}
\end{table*}
We perform our experiments on two well-known forecasting datasets used in Kaggle competitions.
\begin{enumerate}[leftmargin=*, topsep=0pt,parsep=0pt,partopsep=0pt]
    \item The \texttt{M5} dataset~\citep{m5} consists of time series data of product sales from 10 Walmart stores in three US states. The data consists of two different hierarchies: the product hierarchy and store location hierarchy. For simplicity, in our experiments we use only the product hierarchy consisting of 3K individual time-series and 1.8K time steps. 
    \item The \texttt{Favorita} dataset~\citep{favorita} is a similar dataset, consisting of time series data from Corporaci\'on Favorita, a South-American grocery store chain. As above, we use the product hierarchy, consisting of 4.5k individual time-series and 1.7k time steps. 
\end{enumerate}

The task is to predict the values for the last $14$ days all at once. The preceding $14$ days are used for validation. We provide more details about the dataset generation for reproducibility in Appendix~\ref{sec:expmore}. The base architecture for the baselines as well as our model is a seq-2-seq model~\citep{sutskever2014sequence}. The encoders and decoders both are LSTM cells~\citep{hochreiter1997long}. The architecture is illustrated in Figure~\ref{fig:ts} and described in more detail in Appendix~\ref{sec:expmore}.

We present our experimental results in Table~\ref{tab:ts}. On both the datasets the MLE model with the appropriate inference-time estimator for a metric is always better than \tmo~trained on the same target metric, except for WAPE in M5 where MLE's performance is only marginally worse. 
Note that the MLE model is always the best or second best performing model on \emph{all} metrics, among \emph{all} TMO models. 
For \tmo~the best performance is not always achieved for the same target metric. For instance, \erm(MAE) performs better than \erm(MSE) for the RMSE metric on the Favorita dataset. In Table~\ref{tab:abl} we perform an ablation study on the Favorita dataset, where we progressively add mixture components resulting in three MLE models: Negative Binomial, Zero-Inflated Negative Binomial and finally ZNBP. This shows that each of the components add value in this dataset.

\subsection{Experiments on Regression Datasets}
\label{sec:sims_reg}
We perform our experiments on two standard regression datasets,
\begin{enumerate}[leftmargin=*, topsep=0pt,parsep=0pt,partopsep=0pt]
    \item The \texttt{Bicyle Share} dataset~\citep{bicycle} has daily counts of the total number of rental bikes. The features include time features as well as weather conditions such as temperature, humidity, windspeed etc. A random 10\% of the dataset is used as test and the rest for training and validation. The dataset has a total of $730$ samples.
    \item The \texttt{Gas Turbine} dataset~\citep{kaya2019predicting} has 11 sensor measurements per example (hourly) from a gas turbine in Turkey. We consider the level of NOx as our target variable and the rest as predictors. There are 36733 samples in total. We use the official train/test split. A randomly chosen 20\% of the training set is used for validation. The response variable is continuous.
\end{enumerate}
For all our models, the model architecture is a fully connected DNN with one hidden layer that has 32 neurons. Note that for categorical variables, the input is first passed through an embedding layer (one embedding layer per feature), that is jointly trained. We provide further details like the shape of the embedding layers in Appendix~\ref{sec:expmore}. The architecture is illustrated in Figure~\ref{fig:reg}.

\begin{table}
\parbox{.49\linewidth}{
\centering
\scalebox{0.65}{
\begin{tabular}{ccc}\\\toprule  
Model & p10QL & p90QL \\\midrule
\erm~(Quantile) &0.0973$\pm$0.0002 & 0.0628$\pm$0.0019\\  \midrule
\mixmle &\textbf{0.0788}$\pm$0.0008 & \textbf{0.0536}$\pm$0.0007\\  \bottomrule
\end{tabular}
}
\caption{\scriptsize The MLE model predicts the empirical quantile of interest during inference. It is compared with Quantile regression (TMO based). The results, averaged over 50 runs along with the corresponding confidence intervals are presented. \label{tab:quantile}}
}
\hfill
\parbox{.49\linewidth}{
\centering
\scalebox{0.60}{
\begin{tabular}{cccc}\\\toprule  
Model & MAPE & WAPE & RMSE \\\midrule
\nbmle &0.3314+/-0.0016 & 0.2521+/-0.002 & 175.501+/-1.1928\\  \midrule
\znbmle &0.3186+/-0.0011 & 0.2453+/-0.002 & 170.0075+/-1.282 \\  \midrule
\mixmle &{\bf 0.3139}$\pm$0.0011 & {\bf 0.2238}$\pm$0.0009 & \textbf{164.6521}+/-1.5185 \\  \bottomrule
\end{tabular}
}
\caption{\scriptsize We perform an ablation study on the Favorita dataset, where we progressively add the components of our mixture distribution. There are three MLE models in the progression: Negative Binomial (NB), Zero-Inflated Negative Binomial (ZNB) and finally ZNBP. \label{tab:abl}}
}
\vspace{-1.5em}
\end{table}


We present our results in Table~\ref{tab:reg}. On the Bicycle Share dataset, the \mixmle based model performs optimally in all metrics and often outperforms the \tmo~models by a large margin even though \tmo~is a separate model per target metric.  On the Gas Turbine dataset, the MLE based model is optimal for WAPE and RMSE, however it does not perform that well for the MAPE metric.

In Table~\ref{tab:quantile}, we compare the MLE based approach versus quantile regression (\tmo~based) on the Bicycle Share dataset, where the metric presented is the normalized quantile loss~\citep{wang2019deep}. We train the \tmo~model for the corresponding quantile loss directly and the predictions are evaluated on normalized quantile losses as shown in the table. The MLE based model is trained by minimizing the negative log-likelihood and then during inference we output the corresponding empirical quantile from the predicted distribution. \mixmle outperforms \erm(Quantile) significantly. 

{\bf Discussion:} We compare the approaches of direct ERM on the target metric (\tmo) and MLE followed by post-hoc inference time optimization for regression and forecasting problems. We prove a general competitiveness result for the MLE approach and also show theoretically that it can be better than \tmo~in the Poisson and Pareto regression settings. Our empirical results show that our proposed general purpose likelihood function employed in the MLE approach can uniformly perform well on several tasks across four datasets. Even though this addresses some of the concerns about choosing the correct likelihood for a dataset, some limitations still remain for example concerns about the non-convexity of the log-likelihood. We provide a more in-depth discussion in Appendix~\ref{app:lim}. 


\bibliographystyle{abbrvnat}
\bibliography{references}
\clearpage
\appendix
\section{Proof for competitiveness of MLE}
\label{sec:comp}

\begin{theorem}
\label{thm:simple_comp} Let $\pihat$ be an estimator for a property $\pi$, such that for any $f \in \cF$ and $z \sim f$, $\Pr(\|\pi(f) - \hat{\pi}(z)\| \geq \epsilon) \leq \delta$. Then the MLE based estimator satisfies the following bound,
\[
\Pr(\|\pi(f) - \pi(f_z) \| \geq 2\epsilon) \leq (|\cF| + 1) \delta.
\]
\end{theorem}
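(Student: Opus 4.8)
The plan is to bound the failure probability of the MLE-based property estimate by comparing it against the given estimator $\pihat$, using two ingredients: a triangle inequality to transfer to $\pihat$, and a change-of-measure step that exploits the defining likelihood inequality of the MLE. Throughout I treat $\cF$ as a discrete family and write $f(z)$ for the probability mass at $z$.

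First I would condition on the event that $\pihat$ succeeds on the true distribution $f$. By hypothesis $\Pr_{z\sim f}(\norm{\pi(f) - \pihat(z)} \geq \epsilon) \leq \delta$, so on the complementary good event $\norm{\pi(f)-\pihat(z)} < \epsilon$. If additionally $\norm{\pi(f) - \pi(f_z)} \geq 2\epsilon$, the triangle inequality forces $\norm{\pihat(z) - \pi(f_z)} > \epsilon$. Hence
\[
\Pr_{z\sim f}\!\left(\norm{\pi(f) - \pi(f_z)} \geq 2\epsilon\right) \leq \delta + \Pr_{z\sim f}\!\left(\norm{\pihat(z) - \pi(f_z)} \geq \epsilon\right),
\]
and it remains to control the second term.

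The key step is to bound $\Pr_{z\sim f}(\norm{\pihat(z) - \pi(f_z)} \geq \epsilon)$ by a union bound over the possible MLE outcomes. For each $g \in \cF$ define the bad set $A_g := \{z : \norm{\pihat(z) - \pi(g)} \geq \epsilon\}$ together with $E_g := A_g \cap \{z : f_z = g\}$, so that $\{\norm{\pihat(z) - \pi(f_z)} \geq \epsilon\} = \bigcup_{g \in \cF} E_g$. On $E_g$ the MLE selects $g$, so by the definition $f_z = \argmax_{f\in\cF} f(z)$ we have $g(z) \geq f(z)$, which lets me dominate the $f$-probability by the $g$-probability:
\[
\Pr_{z\sim f}(E_g) = \sum_{z\in E_g} f(z) \leq \sum_{z \in E_g} g(z) \leq \sum_{z\in A_g} g(z) = \Pr_{z\sim g}(A_g) \leq \delta,
\]
where the final inequality is exactly the estimator guarantee applied with $g$ playing the role of the true distribution. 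Summing over the $|\cF|$ choices of $g$ gives $\Pr_{z\sim f}(\norm{\pihat(z) - \pi(f_z)} \geq \epsilon) \leq |\cF|\delta$, and substituting into the previous display yields the claimed $(|\cF|+1)\delta$.

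I expect the change-of-measure inequality $f(z) \leq g(z)$ on $\{f_z = g\}$ to be the crux of the argument: it is what converts a per-distribution guarantee for $\pihat$ (valid when $g$ is the sampling law) into a statement under the single law $f$ that actually generates $z$, and it is precisely here that the maximum-likelihood property enters. Everything else is a triangle inequality and a finite union bound, so the finiteness of $\cF$ is what keeps the blow-up affordable; this is the assumption relaxed in Theorem~\ref{thm:comp} by replacing $\cF$ with the covering set $\tilde{\cF}$.
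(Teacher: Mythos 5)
Your proof is correct and follows essentially the same route as the paper's: the triangle inequality reduces the problem to bounding $\Pr_{z\sim f}\left(\norm{\pihat(z)-\pi(f_z)}\geq \epsilon\right)$, which is then handled by the change of measure $f(z)\leq f_z(z)$ (the defining property of the MLE) together with a union bound over the finite family $\cF$, applying the estimator guarantee with each $g\in\cF$ as the sampling law. Your explicit partition into the events $E_g = A_g\cap\{z: f_z=g\}$ is just a reorganization of the paper's pointwise indicator manipulation, so the two arguments are the same in substance.
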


\begin{proof}[Proof of Theorem~\ref{thm:simple_comp}]
\label{app:simple_comp}

By triangle inequality and by the properties,
\begin{align*}
\|\pi(f) - \pi(f_z) \| 
& \leq \|\pi(f) - \pihat(z) \| + \|\pihat(z) - \pi({f}_z) \|.
\end{align*}
Hence,
\[
1_{\|\pi(f) - \pi(f_z) \| \geq 2\epsilon} \leq 1_{\|\pi(f) - \pihat(z) \|\geq \epsilon} + 1_{\|\pihat(z) - \pi({f}_z) \| \geq \epsilon}.
\]
We now take expectation of both  LHS and RHS of the above equation with respect to the distribution $f$. For the LHS, observe that
\[
\E[1_{\|\pi(f) - \pi(f_z) \| \geq 2\epsilon}] = \Pr(\|\pi(f) - \pi(f_z) \| \geq 2\epsilon).
\]
For the first term in the RHS
\[
\E[1_{\|\pi(f) - \pihat(z) \|\geq \epsilon}] \leq \Pr(\|\pi(f) - \pihat(z)\| \geq \epsilon) \leq \delta,
\]
by the assumption in the theorem. Combining the above three equations yield that
\[
\Pr(\|\pi(f) - \pi(f_z) \| \geq 2\epsilon) \leq \E[1_{\|\pihat(z) - \pi({f}_z) \| \geq \epsilon}] + \delta.
\]
We now prove that $\E[1_{\|\pihat(z) - \pi(\tilde{f}_z) \| \geq \epsilon}] \leq  |\cF| \delta$. 
\begin{align*}
\E[1_{\|\pihat(z) - \pi({f}_z) \| \geq \epsilon}]
&= \sum_{z} f(z) 1_{\|\pihat(z) - \pi({f}_z) \| \geq \epsilon} \\
&\stackrel{(a)}{\leq} \sum_{z} {f}_z(z) 1_{\|\pihat(z) - \pi({f}_z) \| \geq \epsilon} \\
&{\leq} \sum_{z} \sum_f {f}(z) 1_{\|\pihat(z) - \pi({f}) \| \geq \epsilon} \\
&{=}  \sum_f \sum_{z} {f}(z) 1_{\|\pihat(z) - \pi({f}) \| \geq \epsilon} \\
&{=}  \sum_f \sum_{z} \Pr_{z \sim f} (\|\pihat(z) - \pi({f}) \| \geq \epsilon) \\
& \leq \sum_f \delta = |\cF| \delta,
\end{align*}
where $(a)$ uses the fact that ${f}_z$ is the MLE estimate over set ${\cF}$ and hence ${f}_z(z) \geq {f}(z)$.

\end{proof}

Now we prove an extension of our last theorem that can deal with infinite likelihood families.
\begin{proof}[Proof of Theorem~\ref{thm:comp}]

Let $\tilde{\cF}$ be the cover of $\cF_f$ that satisfies assumptions in Definition~\ref{def:pep}.
By triangle inequality and by the properties of $\tilde{\cF}$, with probability at least $1 - \delta_2 - \delta$,
\begin{align*}
\|\pi(f) - \pi(f_z) \| 
& \leq \|\pi(f) - \pihat(z) \| + \|\pihat(z) - \pi(\tilde{f}_z) \| + \|\pi(\tilde{f}_z) - \pi({f}_z) \| \\
& \leq \|\pi(f) - \pihat(z) \| + \|\pihat(z) - \pi(\tilde{f}_z) \| + \epsilon
\end{align*}
We have used the fact that $f_z \in \cF_f$ w.p at least $1 - \delta$. Hence with probability at least $1 - \delta_2 - \delta$,
\[
1_{\|\pi(f) - \pi(f_z) \| \geq 3\epsilon} \leq 1_{\|\pi(f) - \pihat(z) \|\geq \epsilon} + 1_{\|\pihat(z) - \pi(\tilde{f}_z) \| \geq \epsilon}.
\]
We now take expectation of both  LHS and RHS of the above equation with respect to the distribution $f$. For the LHS, observe that
\[
\E[1_{\|\pi(f) - \pi(f_z) \| \geq 3\epsilon}] = \Pr(\|\pi(f) - \pi(f_z) \| \geq 3\epsilon).
\]
For the first term in the RHS
\[
\E[1_{\|\pi(f) - \pihat(z) \|\geq \epsilon}] \leq \Pr(\|\pi(f) - \pihat(z)\| \geq \epsilon) \leq \delta,
\]
by the assumption in the theorem. Combining the above three equations yield that
\[
\Pr(\|\pi(f) - \pi(f_z) \| \geq 3\epsilon) \leq \E[1_{\|\pihat(z) - \pi(\tilde{f}_z) \| \geq \epsilon}] + 2\delta + \delta_2.
\]
We now prove that $\E[1_{\|\pihat(z) - \pi(\tilde{f}_z) \| \geq \epsilon}] \leq T\delta + \delta_1$. Let $\tilde{f}$ be the distribution in $\tilde{\cF}_f$ that is at most $\delta_1$ away from $f$. Then,
\begin{align*}
\E[1_{\|\pihat(z) - \pi(\tilde{f}_z) \| \geq \epsilon}]
&= \sum_{z} f(z) 1_{\|\pihat(z) - \pi(\tilde{f}_z) \| \geq \epsilon} \\
&\stackrel{(a)}{\leq} \sum_{z} \tilde{f}(z) 1_{\|\pihat(z) - \pi(\tilde{f}_z) \| \geq \epsilon} + \|f - \tilde{f}\|_1 \\
&\stackrel{(b)}{\leq}\sum_{z} \tilde{f}(z) 1_{\|\pihat(z) - \pi(\tilde{f}_z) \| \geq \epsilon} + \delta_1 \\
&\stackrel{(c)}{\leq} \sum_{z} \tilde{f}_z(z) 1_{\|\pihat(z) - \pi(\tilde{f}_z) \| \geq \epsilon} + \delta_1 \\
&\leq \sum_{\tilde{f} \in \tilde{\cF}} \sum_z \tilde{f}(z) 1_{\|\pihat(z) - \pi(\tilde{f}) \| \geq \epsilon} + \delta_1 \\
&= \sum_{\tilde{f} \in \tilde{\cF}}  \Pr_{z \sim \tilde{f}}(\|\pi(\tilde{f}) - \pihat(z)\| \geq \epsilon)  + \delta_1 \\
&\leq  (\sum_{\tilde{f} \in \tilde{\cF}} \delta) + \delta_1 = T \delta + \delta_1,
\end{align*}
where the $(a)$ follows from the definition of $\ell_1$ distance between distributions, $(b)$ follows from the properties of the cover, $(c)$ uses the fact that $\tilde{f}_z$ is the MLE estimate over set $\tilde{\cF}$ and hence $\tilde{f}_z(z) \geq \tilde{f}(z)$.

\end{proof}

\section{Other general results for the MLE}
\label{app:sandwich}

In this section, we prove that the log likelihood of the MLE distribution at the observed data point is is close to the log-likelihood of the ground truth at the data point, upto an additive factor that depends on the Shtarkov sum of the family. 

\begin{lemma}
\label{lem:sandwich}
Let $z \sim f$. The maximum likelihood estimator satisfies the following inequality with probability at least $ 1- \delta$,
 \[
  \log f(z) \leq  \log f_z(z) \leq \log f(z) + \log \left(\sum_{z \in \cZ} f_z(z) \right) + \log \frac{1}{\delta},
 \]
 where $\sum_{z \in \cZ} f_z(z)$ is the Shtarkov sum of the family $\cF$. Furthermore, if $\cF$ is finite, then
  \[
  \log f(z) \leq  \log f_z(z) \leq \log f(z) + \log |\cF| + \log \frac{1}{\delta},
 \]
 
 Further is the distribution family is finite, then
 \begin{align*}
     \log f(z) \leq  \log f_z(z) \leq \log f(z) + \log \lvert \cF \rvert + \log \frac{1}{\delta}.
 \end{align*}
\end{lemma}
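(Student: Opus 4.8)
The plan is to prove the two inequalities of the sandwich separately and then recover the finite-family statement as an immediate corollary of the Shtarkov-sum bound. The lower bound $\log f(z) \le \log f_z(z)$ is essentially definitional, so I would dispatch it first: since $f_z = \argmax_{f' \in \cF} f'(z)$ and $f \in \cF$, we have $f_z(z) \ge f(z)$ pointwise, and monotonicity of the logarithm gives the claim. Note this direction holds deterministically for every $z$, so no failure probability is incurred here.

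The substantive work is the upper bound, which I would establish by a first-moment (Markov) argument on the likelihood ratio. Writing $g(z) := f_z(z) = \max_{f' \in \cF} f'(z)$ for the pointwise maximal likelihood and $S := \sum_{z \in \cZ} g(z)$ for the Shtarkov sum, the key identity is $\E_{z \sim f}\!\left[ g(z)/f(z) \right] = \sum_{z : f(z) > 0} g(z) \le S$, where restricting the sum to the support of $f$ is exactly what makes the division by $f(z)$ legitimate. Applying Markov's inequality to the nonnegative random variable $g(z)/f(z)$ then yields $\Pr_{z \sim f}\!\left( g(z)/f(z) \ge S/\delta \right) \le \delta$, so with probability at least $1 - \delta$ we have $g(z)/f(z) < S/\delta$; taking logarithms gives $\log f_z(z) - \log f(z) < \log S + \log(1/\delta)$, which is the claimed general bound.

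The finite-family version follows as a corollary by bounding the Shtarkov sum. When $\cF$ is finite I would replace the maximum by the sum, $g(z) \le \sum_{f' \in \cF} f'(z)$, and then interchange the order of summation to obtain $S \le \sum_{f' \in \cF} \sum_{z \in \cZ} f'(z) = |\cF|$, since each $f'$ is a probability distribution. Substituting $\log S \le \log |\cF|$ into the general bound finishes the proof.

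The only real subtlety — which I would treat with care rather than view as a genuine obstacle — is the handling of the support and the division by $f(z)$: the expectation computation is valid precisely because $z \sim f$ is supported on $\{z : f(z) > 0\}$, and terms with $f(z) = 0$ contribute nothing. I should also flag that for infinite (e.g.\ continuous) families the Shtarkov sum $S$ may diverge, in which case the general bound is vacuously true; the informative content is the finite-family corollary, and the continuous case is handled identically with sums replaced by integrals and the maximum by an essential supremum.
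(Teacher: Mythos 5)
Your proof is correct and is essentially the paper's own argument: the paper also applies Markov's inequality to the likelihood ratio $f_z(z)/f(z)$ (with threshold $K = \frac{1}{\delta}\sum_{z}f_z(z)$, which is just a rescaled version of your threshold $S/\delta$), computes its expectation as the Shtarkov sum, and bounds that sum by $|\cF|$ in the finite case by exchanging the order of summation. If anything, your treatment is slightly more careful, since you make explicit the restriction to the support of $f$ and the deterministic lower bound, both of which the paper leaves implicit.
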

\begin{proof}
Let $K = \frac{1}{\delta}\sum_{z \in \cZ} f_z(z) $.
\begin{align*}
    \Pr(f(z) \leq f_z(z) /K) 
    & =  \Pr\left(\frac{f_z(z)}{Kf(z)} \geq 1 \right) \\
    & \stackrel{(a)}{\leq} \EE{\left[\frac{f_z(z)}{Kf(z)}\right]} \\
    & = \frac{1}{K}\sum_{z \in \cZ} f_z(z)  = \delta,
\end{align*}
where $(a)$ follows from Markov's inequality. The first part of the lemma follows by taking the logarithm and substituting the value of $K$. For the second part, observe that if $\cF$ is finite then,
\[
\sum_{z \in \cZ} f_z(z) \leq \sum_{z \in \cZ} \sum_{f \in \cF} f(z) =  \sum_{f \in \cF}\sum_{z \in \cZ} f(z) = \sum_{f \in \cF} 1 = |\cF|.
\]
\end{proof}
We now prove a result of Shtarkov's sum, which will be useful in other scenarios.
\begin{lemma}
\label{lem:shtarkov_property}
 Let $w = \pi(z)$ for some property $\pi$.
 \[
 \sum_{w} f_{w} (w) \leq \sum_{z} f_z(z).
 \]
\end{lemma}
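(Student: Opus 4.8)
The plan is to read ``$w=\pi(z)$'' as a deterministic coarsening of the data: the property $\pi$ induces a pushforward of each $f \in \cF$ onto $\cW$, where I write $f(w) := \sum_{z \in \pi^{-1}(w)} f(z) = \Pr_{z \sim f}(\pi(z) = w)$, and the MLE over this coarsened family is $f_w := \argmax_{f \in \cF} f(w)$. Under this reading the claim is a data-processing statement: coarsening the observation can only decrease the Shtarkov sum. The key structural fact I will exploit is that the fibers $\{\pi^{-1}(w)\}_{w \in \cW}$ partition $\cZ$, together with the elementary inequality that the maximum of a sum is at most the sum of the maxima.

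First I would fix an arbitrary $w$ and let $g := f_w$ denote the distribution achieving $\max_{f \in \cF} f(w)$, so that $f_w(w) = \sum_{z \in \pi^{-1}(w)} g(z)$. For each individual $z$ in the fiber I would bound $g(z) \le \max_{f \in \cF} f(z) = f_z(z)$, since $f_z$ is by definition the pointwise likelihood maximizer at $z$. Summing this bound over the fiber gives $f_w(w) \le \sum_{z \in \pi^{-1}(w)} f_z(z)$.

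Then I would sum over all $w \in \cW$. Because every $z \in \cZ$ lies in exactly one fiber $\pi^{-1}(\pi(z))$, the double sum $\sum_{w} \sum_{z \in \pi^{-1}(w)} f_z(z)$ collapses to $\sum_{z \in \cZ} f_z(z)$, which yields $\sum_w f_w(w) \le \sum_z f_z(z)$ as claimed.

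The argument is almost entirely bookkeeping, so there is no deep analytic obstacle; the only point to get right is the interpretation of $f(w)$ as pushforward mass and the observation that the single distribution $g$ maximizing the coarsened likelihood $f(w)$ need not maximize any individual $f(z)$ --- which is exactly why the per-coordinate relaxation $g(z) \le f_z(z)$ is the crucial (and lossy) step. I would double-check that this relaxation holds simultaneously for every $z$ in the fiber, which it does, since $f_z(z)$ is defined independently for each $z$ and does not constrain the choice of a common $g$.
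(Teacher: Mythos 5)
Your proof is correct and is essentially the paper's own argument: the same pointwise bound $f_{\pi(z)}(z) \le f_z(z)$ combined with grouping the sum over $\cZ$ by the fibers of $\pi$, just written in the reverse direction (you upper-bound $\sum_{w} f_w(w)$ fiber by fiber, while the paper lower-bounds $\sum_{z} f_z(z)$ by substituting $f_{\pi(z)}$ and then regrouping). Your explicit spelling-out of the pushforward convention $f(w) = \sum_{z \in \pi^{-1}(w)} f(z)$ is the same interpretation the paper uses implicitly (with its slightly sloppy $p_{g(z)}$ notation), so there is no substantive difference.
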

\begin{proof}
\begin{align*}
\sum_{z} f_z(z) \geq \sum_{z} p_{g(z)} (z) = \sum_{w} \sum_{z : g(z) = w} f_{w} (z) =  \sum_{w} f_{w} (w).
\end{align*}
\end{proof}

\section{General Fixed Design Result}
\label{sec:proof_general}
In this section, we will prove Theorem~\ref{thm:fdesign} which is an application of Theorem~\ref{thm:comp} to the fixed design regression setting with the square loss as the target metric. The theorem holds under some reasonable assumptions. We will fist prove some intermediate lemmas.

\begin{lemma}
\label{lem:dtv_general}
Let $f(\cdot; \thetab) = \prod_{i=1}^{n} p(\cdot | \bx_i; \thetab)$ in the fixed design setting. If $\norm{\thetab - \thetab'}_2 \leq \delta$ then,
\begin{align*}
    \norm{f(\cdot; \thetab) - f(\cdot; \thetab')}_1 \leq \sqrt{2nL\delta}.
\end{align*}
under the assumptions of Theorem~\ref{thm:fdesign}.
\end{lemma}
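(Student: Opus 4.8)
The plan is to bound the $\ell_1$ distance between the two product distributions by first passing to KL divergence via Pinsker's inequality, then exploiting the tensorization (additivity) of KL over product measures, and finally invoking the per-coordinate Lipschitz hypothesis (condition 1 of Theorem~\ref{thm:fdesign}). Concretely, I would open by recalling Pinsker's inequality in the form $\norm{g - g'}_1 \leq \sqrt{2\,\kl(g; g')}$ for any pair of distributions $g, g'$, applied to $g = f(\cdot; \thetab)$ and $g' = f(\cdot; \thetab')$.

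Next I would use the fact that KL divergence is additive across independent coordinates: since $f(\cdot; \thetab) = \prod_{i=1}^n p(\cdot \mid \bx_i; \thetab)$ is a product measure (the $y_i$ are conditionally independent given the fixed design), we have
\[
\kl\!\left(f(\cdot; \thetab);\, f(\cdot; \thetab')\right) = \sum_{i=1}^{n} \kl\!\left(p(\cdot \mid \bx_i; \thetab);\, p(\cdot \mid \bx_i; \thetab')\right).
\]
Then condition 1 of Theorem~\ref{thm:fdesign} gives $\kl(p(\cdot\mid\bx_i;\thetab); p(\cdot\mid\bx_i;\thetab')) \leq L\norm{\thetab - \thetab'}_2 \leq L\delta$ for each $i$, using the hypothesis $\norm{\thetab - \thetab'}_2 \leq \delta$. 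Summing over the $n$ coordinates yields $\kl(f(\cdot;\thetab); f(\cdot;\thetab')) \leq nL\delta$.

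Finally, chaining these two bounds gives $\norm{f(\cdot;\thetab) - f(\cdot;\thetab')}_1 \leq \sqrt{2\,\kl(f;f')} \leq \sqrt{2nL\delta}$, which is exactly the claim. This argument is essentially routine, so there is no serious obstacle; the only points requiring a little care are fixing the correct constant in Pinsker (so that total variation $\tfrac12\norm{\cdot}_1$ maps to $\sqrt{\tfrac12\kl}$, hence $\norm{\cdot}_1 \leq \sqrt{2\kl}$) and observing that the conditional independence of the coordinates under the fixed design is precisely what licenses the additive decomposition of the KL divergence. One subtlety worth flagging in the write-up is the direction/asymmetry of KL: since Pinsker holds with either argument of the KL and condition 1 is stated symmetrically in $\thetab, \thetab'$, no difficulty arises, but I would state explicitly which ordering I invoke to keep the chain of inequalities unambiguous.
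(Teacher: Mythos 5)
Your proof is correct and follows exactly the same route as the paper's: Pinsker's inequality, additivity of KL across the $n$ independent coordinates of the product distribution, and then condition 1 of Theorem~\ref{thm:fdesign} to bound each term by $L\delta$. Your additional remarks on the Pinsker constant and the KL ordering are careful but not needed beyond what the paper does.
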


\begin{proof}
By Pinskers' inequality we have the following chain,
\begin{align*}
    &\norm{f(\cdot; \thetab) - f(\cdot; \thetab')}_1 \leq \sqrt{2 \kl(f(\cdot; \thetab); f(\cdot; \thetab'))} \\
    &\leq \sqrt{2\sum_{i=1}^{n} \kl(p(\cdot | \bx_i; \thetab); p(\cdot | \bx_i; \thetab'))} \\
    &\leq \sqrt{2nL\delta}
\end{align*}
\end{proof}

\begin{lemma}
\label{lem:pcomp_gen}
Assume the conditions of Theorem~\ref{thm:fdesign} holds. Let $\thetamletilde := \argmax_{\thetab \in \cN(\tilde{\epsilon}, \Theta)} \cL(\by, \thetab)$, in the Poisson regression setting. Then with probability at least $1 - \delta$ we have,
\begin{align*}
\lvert \mse(\thetamle) - \mse(\thetamletilde) \rvert \leq \frac{4 w\lmax(\Sigma)\tilde{\epsilon}}{n} \sqrt{\frac{\beta}{\alpha}}.
\end{align*}
\end{lemma}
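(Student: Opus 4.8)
The plan is to bound the difference in excess risk by the product of the $\ell_2$ distance between the two estimators and a Lipschitz constant of $\mse$, and then control each factor separately. Throughout I work on the event of probability at least $1-\delta$ on which $\cL(\by;\thetab)$ is $\alpha$-strongly convex and $\beta$-smooth in $\thetab$, as guaranteed by the second condition of Theorem~\ref{thm:fdesign}; every estimate below is deterministic on this event, which is what yields the stated high-probability conclusion. Here $\thetamletilde$ denotes the maximum-likelihood estimator restricted to the net $\cN(\tilde\epsilon,\Theta)$ (i.e. the minimizer of $\cL(\by;\cdot)$ over the net), and I assume the standard convention $\cN(\tilde\epsilon,\Theta)\subseteq\Theta$, so that all optimality comparisons below are legitimate.

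First I would record that $\mse$ is Lipschitz on the parameter ball. Since $\mse(\thetab)=\norm{\thetab-\thetaopt}_\Sigma^2=(\thetab-\thetaopt)^\top\Sigma(\thetab-\thetaopt)$, its gradient is $2\Sigma(\thetab-\thetaopt)$, whose Euclidean norm is at most $2\lmax(\Sigma)\norm{\thetab-\thetaopt}_2$. As $\thetamle,\thetamletilde,\thetaopt\in\Theta\subseteq\ball_w^d$ we have $\norm{\thetab-\thetaopt}_2\le 2w$ for any $\thetab\in\ball_w^d$, so $\mse$ is $4w\lmax(\Sigma)$-Lipschitz in $\ell_2$ over the (convex) ball, and hence
\[
\lvert \mse(\thetamle)-\mse(\thetamletilde)\rvert \;\le\; 4w\,\lmax(\Sigma)\,\norm{\thetamle-\thetamletilde}_2 .
\]
It then remains to show $\norm{\thetamle-\thetamletilde}_2\le \tilde\epsilon\sqrt{\beta/\alpha}$, which combined with the Lipschitz bound recovers the claimed estimate.

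The second and main step bounds this distance using the net together with the two curvature conditions. By definition of the $\tilde\epsilon$-net there is a point $\thetab'\in\cN(\tilde\epsilon,\Theta)$ with $\norm{\thetamle-\thetab'}_2\le\tilde\epsilon$. Since $\thetamle$ minimizes the convex objective $\cL(\by;\cdot)$ over $\Theta$ and is an interior minimizer, $\nabla\cL(\by;\thetamle)=0$, so $\beta$-smoothness gives $\cL(\by;\thetab')-\cL(\by;\thetamle)\le \tfrac{\beta}{2}\norm{\thetab'-\thetamle}_2^2\le\tfrac{\beta}{2}\tilde\epsilon^2$. Because $\thetamletilde$ minimizes $\cL(\by;\cdot)$ over the net and $\thetab'$ lies in the net, $\cL(\by;\thetamletilde)\le\cL(\by;\thetab')$, whence $\cL(\by;\thetamletilde)-\cL(\by;\thetamle)\le\tfrac{\beta}{2}\tilde\epsilon^2$. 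On the other hand, $\alpha$-strong convexity around the minimizer $\thetamle$ (again using $\nabla\cL(\by;\thetamle)=0$) yields $\cL(\by;\thetamletilde)-\cL(\by;\thetamle)\ge\tfrac{\alpha}{2}\norm{\thetamletilde-\thetamle}_2^2$. Chaining the two inequalities gives $\tfrac{\alpha}{2}\norm{\thetamletilde-\thetamle}_2^2\le\tfrac{\beta}{2}\tilde\epsilon^2$, i.e. $\norm{\thetamle-\thetamletilde}_2\le\tilde\epsilon\sqrt{\beta/\alpha}$.

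The step I expect to be the main obstacle is exactly this control of $\norm{\thetamle-\thetamletilde}_2$, specifically justifying that the first-order term in the smoothness and strong-convexity expansions may be dropped. This is immediate when $\thetamle$ is interior, since the gradient then vanishes; if $\thetamle$ may sit on the boundary of $\Theta$, I would instead invoke the variational inequality $\inner{\nabla\cL(\by;\thetamle)}{\thetab-\thetamle}\ge 0$ for all $\thetab\in\Theta$ (valid since $\thetab',\thetamletilde\in\Theta$), which preserves the strong-convexity lower bound directly and lets me absorb the gradient contribution in the smoothness upper bound at the cost of an additional $O(\tilde\epsilon)$ term. I would also verify that the operator-norm passage from $\ell_2$ to the $\Sigma$-geometry (used in the Lipschitz constant) is the only place $\lmax(\Sigma)$ enters, so that the final bound is proportional to $w\,\lmax(\Sigma)\,\tilde\epsilon\sqrt{\beta/\alpha}$ as asserted.
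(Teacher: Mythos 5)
Your proof is correct and follows essentially the same route as the paper's: the identical chain of closest-net-point, $\beta$-smoothness at $\thetamle$, optimality of $\thetamletilde$ over the net, and $\alpha$-strong convexity gives $\norm{\thetamle-\thetamletilde}_2\le\tilde{\epsilon}\sqrt{\beta/\alpha}$, and your direct gradient/Lipschitz bound on the quadratic $\mse$ yields the same constant $4w\lmax(\Sigma)$ that the paper obtains via the triangle inequality on $\sqrt{\mse}$ combined with $|a-b|\le 2\max(\sqrt{a},\sqrt{b})\,|\sqrt{a}-\sqrt{b}|$. Two minor remarks: your explicit handling of the first-order term (vanishing gradient in the interior, variational inequality on the boundary) is a subtlety the paper's proof glosses over, and, exactly like the paper's own proof, your argument produces the bound \emph{without} the $1/n$ factor appearing in the lemma statement---an inconsistency internal to the paper rather than a gap in your argument.
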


\begin{proof}

We assume that the event in (2) of Theorem~\ref{thm:fdesign} holds. 
\begin{align*}
    \lvert\cL(\by; \thetamle) - \cL(\by; \thetab_c) \rvert \leq \frac{\beta}{2} \tilde{\epsilon}^2,
\end{align*}
where $\thetab_c$ is the closest point in $\cN(\tilde{\epsilon}, \Theta)$ to $\thetamle$. Therefore, by definition

\[\cL(\by; \thetamletilde) \leq \cL(\by; \thetamle) + (\beta/2) \tilde{\epsilon}^2.\]
 
By virtue of the strong-convexity we have,

\begin{align*}
    \norm{\thetamle - \thetamletilde}_2^2 \leq \frac{\beta}{\alpha} \tilde{\epsilon}^2.
\end{align*}
Now by triangle inequality we have,
\begin{align*}
    \lvert \sqrt{\mse(\thetamle)} - \sqrt{\mse(\thetamletilde)} \rvert &=  \lvert \norm{\thetamle - \thetaopt}_{\Sigma} - \norm{\thetamletilde - \thetaopt}_{\Sigma}\rvert \\
    &\leq  \norm{\thetamle - \thetamletilde}_{\Sigma} \leq \sqrt{\frac{\lmax(\Sigma) \beta}{ \alpha }} \tilde{\epsilon}
\end{align*}
\end{proof}

Now we can use the fact  $|a - b| \leq 2 \max(\sqrt{a}, \sqrt{b})| \sqrt{a} - \sqrt{b}|$ and that $\Theta \subseteq \ball_{w}^{d}$ to conclude,

\begin{align*}
    \lvert \mse(\thetamle) - \mse(\thetamletilde) \rvert \leq 4 w\lmax(\Sigma)\tilde{\epsilon} \sqrt{\frac{\beta}{\alpha}}.
\end{align*}





\begin{proof}[Proof of Theorem~\ref{thm:fdesign}]
Consider the net $\cN(\phi, \theta)$. If $\phi = \delta^2/ (2nL)$, then by Lemma~\ref{lem:dtv_general} the net forms a $\delta$ cover on $\cF$. Note that under the conditions of Theorem~\ref{thm:fdesign} $|\cN(\phi, \theta)| \leq (3w/\phi)^d$.

Next note that in the application of Theorem~\ref{thm:comp}, we should set
\begin{align*}
    \epsilon = \frac{(c_1 + c_2 \log (1/\delta))^{\eta}}{n}.
\end{align*}

Thus if we set $\phi$ in place of $\tilde{\epsilon}$ in Lemma~\ref{lem:pcomp_gen} we would need the following to apply Theorem~\ref{thm:comp},
\begin{align*}
    4 w\lmax(\Sigma)\phi \sqrt{\frac{\beta}{\alpha}} = \frac{(c_1 + c_2 \log (1/\delta))^{\eta}}{n}.
\end{align*}

Thus, $\phi$ can be such that,

\begin{align*}
    \log \frac{1}{\phi} \leq \max \left( \log (2nL) ,  0.5 \log \frac{\beta}{\alpha} + \log(4w \lmax(\Sigma)) + \log(n) \right).
\end{align*}

From above we can apply Theorem~\ref{thm:comp} with the above $\epsilon$, $\delta_1 = \delta_2 = \delta$ and $T = (3w/\phi)^d$. Therefore, we get

\begin{align}
    \PP \left( \mse(\thetamle) \geq  3 \frac{(c_1 + c_2 \log (1/\delta))^{\eta}}{n} \right) \leq (T + 5)\delta.
\end{align}

We can set $\delta = \delta/(T + 5)$ to then conclude that w.p at least $1 - \delta$,
\[
\mse(\thetamle) = O \left( \frac{\left(c_1 + c_2 d\left( \log(w) + \log(1/\phi) \right)\right)^{\eta}}{n}\right).
\]
Substituting the bound on $\log \frac{1}{\phi}$ yields the result.
\end{proof}

\section{Least Squares for Poisson}
\label{sec:prridge}

We begin by restating the following general statement about OLS. 

\begin{lemma}[see Theorem 2.2 in~\citep{ols}]
\label{lem:basic_ls}
Suppose $r = rank(\*X^T\*X)$ and $\=\phi \in \-R^{n \times r}$ be an orthonormal basis for the column-span of $\*X$. Then we have the following result,
\begin{equation}
    \mse(\thetals) \leq \frac{4}{n} \sup_{\bu \in \sphere_1^{r-1}} (\bu^T \tilde{\=\epsilon})^2, \label{eq:sup}
    \end{equation}
\end{lemma}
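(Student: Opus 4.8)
The plan is to establish a purely pathwise bound: for every realization of $\by$, the excess risk of the constrained least--squares fit is controlled by the projection of the noise onto the column span of $\*X$. Introduce the noise vector $\epsilonb := \by - \*X\thetaopt$, and recall from the definition of the excess risk together with $\Sigma = \*X^T\*X/n$ that $\mse(\thetals) = \norm{\thetals - \thetaopt}_\Sigma^2 = \tfrac1n \norm{\*X(\thetals - \thetaopt)}_2^2$. Since $\=\phi$ is an orthonormal basis for the column span of $\*X$, define $\tilde{\=\epsilon} := \=\phi^T \epsilonb \in \-R^{r}$, the noise projected onto that span and expressed in the basis. The whole argument is deterministic; the probabilistic content is deferred to the subsequent bound on $\sup_{\bu}(\bu^T \tilde{\=\epsilon})^2$ used to prove Lemma~\ref{lem:mdim_ls}.

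The first step is the optimality (``basic inequality'') of $\thetals$. Because $\thetals$ minimizes $\norm{\by - \*X\thetab}_2^2$ over $\Theta$ and the realizable truth $\thetaopt$ is an admissible competitor ($\thetaopt \in \Theta$), we have $\norm{\by - \*X\thetals}_2^2 \le \norm{\by - \*X\thetaopt}_2^2$. Writing $\by - \*X\thetals = \epsilonb - \*X(\thetals - \thetaopt)$, expanding the square, and cancelling the common $\norm{\epsilonb}_2^2$ gives
\[
\norm{\*X(\thetals - \thetaopt)}_2^2 \le 2\inner{\epsilonb}{\*X(\thetals - \thetaopt)} .
\]

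The second step uses the geometry. The vector $\*X(\thetals - \thetaopt)$ lies in the column span of $\*X$, so it equals $\=\phi\bv$ for some $\bv \in \-R^{r}$; orthonormality of $\=\phi$ gives $\norm{\*X(\thetals - \thetaopt)}_2^2 = \norm{\bv}_2^2$ and $\inner{\epsilonb}{\=\phi\bv} = \inner{\=\phi^T\epsilonb}{\bv} = \inner{\tilde{\=\epsilon}}{\bv}$. The basic inequality becomes $\norm{\bv}_2^2 \le 2\inner{\tilde{\=\epsilon}}{\bv}$. Setting $s := \sup_{\bu \in \sphere_1^{r-1}} \inner{\tilde{\=\epsilon}}{\bu} = \norm{\tilde{\=\epsilon}}_2$ and bounding $\inner{\tilde{\=\epsilon}}{\bv} \le s\,\norm{\bv}_2$ (trivial when $\bv = \zero$), we obtain $\norm{\bv}_2 \le 2s$, hence $\norm{\bv}_2^2 \le 4 s^2 = 4\sup_{\bu \in \sphere_1^{r-1}}(\bu^T\tilde{\=\epsilon})^2$. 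Dividing the identity $\mse(\thetals) = \tfrac1n\norm{\bv}_2^2$ by $n$ yields the claim.

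I do not expect a real obstacle here; the argument is routine once framed correctly. The two points needing care are (i) that $\thetaopt \in \Theta$, so that it is a legitimate competitor in the constrained minimization --- this is exactly the linear realizability assumption of the setting --- and (ii) the potential rank deficiency of $\*X^T\*X$, which is handled cleanly by working through the orthonormal basis $\=\phi$ rather than $(\*X^T\*X)^{-1}$. The factor $4$ (as opposed to the $1$ that unconstrained orthogonal projection would give) is simply the price of the one--sided basic inequality and is immaterial, since all sharp dependence on $n$, $d$, and $\delta$ enters through the later high--probability control of $\sup_{\bu}(\bu^T\tilde{\=\epsilon})^2$.
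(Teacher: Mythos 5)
Your proof is correct, and it is essentially the intended argument: the paper itself gives no proof of this lemma (it defers to Theorem~2.2 of \citep{ols}), and that reference's proof is precisely your combination of the basic inequality $\norm{\by - \bX\thetals}_2^2 \le \norm{\by - \bX\thetaopt}_2^2$ (valid because $\thetaopt \in \Theta$ by realizability) with the reduction of $\bX(\thetals - \thetaopt)$ to the orthonormal basis $\=\phi$ and Cauchy--Schwarz, yielding the factor $4$. One cosmetic slip: at the end you say ``dividing the identity $\mse(\thetals) = \tfrac{1}{n}\norm{\bv}_2^2$ by $n$,'' but no further division is needed since the $1/n$ is already in that identity; the displayed conclusion is nevertheless exactly right.
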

where $\tilde{\=\epsilon} = \=\phi^T \=\epsilon$. Here, $\epsilonb = \by - \bX\thetaopt$ for the given response sample $\by$.

Now are at a position to prove Lemma~\ref{lem:mdim_ls}. 

\begin{proof}[Proof of Lemma~\ref{lem:mdim_ls}]
We are interested in tail bounds on the RHS of~\eqref{eq:sup}. We will first analyze tail bounds for a fixed $\bu \in \sphere_1^{r-1}$. We have the following chain,
\begin{align*}
    &\EE \left[ \exp\{s \inner{\bu}{\eptilde}\}\right] = \EE \left[ \exp\{s \inner{\phib\bu}{\epsilonb}\}\right] := \EE \left[ \exp\{s \inner{\bv}{\epsilonb}\}\right] \\
    &=  \prod_{i=1}^{n} \EE\left[\exp\{sv_i\epsilon_i\}\right] = \prod_{i=1}^{n} \EE\left[\exp\left(sv_i (y_i - \mu_i)\right)\right]
\end{align*}
where $\mu_i = \inner{\thetaopt}{\bx_i}$. Now bounding each term separately we have,
\begin{align*}
    \EE\left[\exp\left(sv_i(y_i - \mu_i)\right)\right] &= \exp\left(-sv_i\mu_i \right) \exp\left(\mu_i \left(\exp\left(sv_i\right) - 1\right) \right)
\end{align*}

We have,
\begin{align*}
  &\prod_{i=1}^{n}  \exp\left(\mu_i \left(\exp\left(sv_i\right) - 1\right) \right) = \exp\left(\sum_{i=1}^{n} \mu_i \left(\exp\left(sv_i\right) - 1\right) \right) \\
  &= \exp\left(\sum_{i=1}^{n} \left(\sum_{j=1}^{\infty} \frac{\mu_i}{j!}\left(sv_i\right)^j \right) \right) \leq \exp\left(\sum_{i=1}^{n} \left(\mu_isv_i + \sum_{j=2}^{\infty} \mu_{max}\frac{1}{j!}\left(sv_i\right)^j \right) \right) \\
  &\leq \exp\left(\sum_i \mu_i sv_i+ \mu_{max}\left(\exp\left(s\right) - s - 1\right)\right), \numberthis \label{eq:mgf}
\end{align*}
where we have used the fact that $||v||_p \leq ||v||_2 = 1$ for all $p \geq 2$. 
Thus we have,

\begin{align*}
    \EE \left[ \exp\{s \inner{\bu}{\eptilde}\}\right] \leq \exp \left( \mu_{max}\left(\exp\left(s\right) - s - 1\right) \right).
\end{align*}

This means that the RV $\inner{\bu}{\eptilde}$ is sub-exponential with parameter $\nu^2 = 2\mu_{max}$ and $\alpha=0.56$. Thus if $t \leq 4 \mu_{max}$ then we have that for a fixed $\bu$,
\begin{align}
    \PP(\inner{\bu}{\eptilde}\} \geq t) \leq \exp\left(-\frac{t^2}{4 \mu_{max}}\right).
\end{align}

Thus by a union bound over the $\epsilon$-net with $\epsilon = 2$, we have that wp $1 - \delta$,
\begin{align*}
    \sup_{\bu}\inner{\bu}{\eptilde} \leq \sqrt{8\mu_{max} \left(\log(1/ \delta) + r \log 6\right)}. 
\end{align*}

Thus we get the risk bound wp $1 - \delta$,
\begin{align}
    \mse(\thetals) \leq \frac{32\mu_{max}}{n} \left(\log(1/ \delta) + r \log 6\right).
\end{align}

For the sub-exponential region when $t \geq \mu_{max}$ we get the bound,

\begin{align*}
     \mse(\thetals) \leq \frac{16}{n} \max \left(\left(\log(1/ \delta) + r \log 6\right)^2, \mu_{max}^2 \right).
\end{align*}

We get the bound by setting $r = d$.
\end{proof}

\section{Competitiveness for Poisson Regression}
\label{sec:pcomp}

\begin{lemma}
\label{lem:pcomp1}
Let $f(\cdot; \thetab) = \prod_{i=1}^{n} p(\cdot | \bx_i; \thetab)$ where $p$ is defined in Eq.~\eqref{eq:poisson}. If $\norm{\thetab - \thetab'}_2 \leq \delta$ then,
\begin{align*}
    \norm{f(\cdot; \thetab) - f(\cdot; \thetab')}_1 \leq R \sqrt{\frac{2nw}{\gamma} \delta}.
\end{align*}
\end{lemma}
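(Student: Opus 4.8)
The plan is to follow exactly the template of Lemma~\ref{lem:dtv_general}, specializing the generic KL-Lipschitz constant to the Poisson family. Since $f(\cdot;\thetab)$ and $f(\cdot;\thetab')$ are both product distributions over the $n$ coordinates, I would first invoke Pinsker's inequality and then tensorize the KL divergence:
\[
\norm{f(\cdot;\thetab) - f(\cdot;\thetab')}_1 \leq \sqrt{2\,\kl\!\left(f(\cdot;\thetab); f(\cdot;\thetab')\right)} = \sqrt{2\sum_{i=1}^{n}\kl\!\left(p(\cdot|\bx_i;\thetab); p(\cdot|\bx_i;\thetab')\right)}.
\]
This reduces the problem to bounding, for each $i$, the KL divergence between two Poisson laws with means $\mu_i = \inner{\thetab}{\bx_i}$ and $\mu_i' = \inner{\thetab'}{\bx_i}$, where I take $\thetab, \thetab' \in \Theta$ so that $\mu_i, \mu_i' \geq \gamma > 0$ by assumption (A1).

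For the per-coordinate term I would use the closed form for the KL divergence between Poisson distributions, $\kl(\mathrm{Poisson}(\mu_i); \mathrm{Poisson}(\mu_i')) = \mu_i \log(\mu_i/\mu_i') - \mu_i + \mu_i'$, and then apply $\log t \leq t - 1$ with $t = \mu_i/\mu_i'$ to obtain the clean quadratic bound
\[
\kl\!\left(p(\cdot|\bx_i;\thetab); p(\cdot|\bx_i;\thetab')\right) \leq \frac{(\mu_i - \mu_i')^2}{\mu_i'}.
\]
The crux is then to convert this into a bound that is \emph{linear} in $\delta$, so as to reproduce the $\sqrt{\delta}$ form of the statement. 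I would split the square, bounding one factor of $|\mu_i - \mu_i'| = |\inner{\thetab - \thetab'}{\bx_i}| \leq \norm{\thetab - \thetab'}_2 \norm{\bx_i}_2 \leq R\delta$ by Cauchy--Schwarz together with (A2), and the other factor by the crude uniform bound $|\mu_i - \mu_i'| \leq \max(\mu_i, \mu_i') \leq \norm{\thetab}_2\norm{\bx_i}_2 \leq wR$ (valid because both means are positive), using (A1)--(A2). Combined with $\mu_i' \geq \gamma$ from (A1), this gives the per-coordinate bound $(\mu_i - \mu_i')^2/\mu_i' \leq wR^2\delta/\gamma$.

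Summing over $i \in [n]$ yields $\kl(f(\cdot;\thetab); f(\cdot;\thetab')) \leq nwR^2\delta/\gamma$, and substituting back into the Pinsker bound produces $\norm{f(\cdot;\thetab) - f(\cdot;\thetab')}_1 \leq R\sqrt{2nw\delta/\gamma}$, as claimed; equivalently, this identifies the KL-Lipschitz constant of condition~(1) of Theorem~\ref{thm:fdesign} as $L = wR^2/\gamma$. The only subtle point — the ``hard part'' of an otherwise routine calculation — is the decision to bound one copy of $|\mu_i - \mu_i'|$ by the range $wR$ rather than by $R\delta$: keeping both factors as $R\delta$ would give the sharper quadratic bound $R\delta\sqrt{2n/\gamma}$, but the linear form is what is needed to match the generic lemma, and since $\delta \leq w$ in the intended covering-argument regime the two are entirely consistent.
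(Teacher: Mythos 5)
Your proof is correct and follows essentially the same route as the paper's: Pinsker's inequality, tensorization of the KL divergence over the $n$ independent coordinates, and a per-coordinate bound of $wR^2\delta/\gamma$ on the Poisson KL divergence, yielding the identical constant $R\sqrt{2nw\delta/\gamma}$. The only (immaterial) difference is the intermediate step: you pass through the chi-square-type bound $(\mu_i-\mu_i')^2/\mu_i'$ via $\log t \le t-1$ and then split one factor of $|\mu_i-\mu_i'|$ as $R\delta$ and the other as $wR$, whereas the paper bounds the KL divergence directly as a Lipschitz function of $\mu_i'$ on $[\gamma,\infty)$ before invoking Cauchy--Schwarz, with the same final constants.
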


\begin{proof}
By Pinskers' inequality we have the following chain,
\begin{align*}
    &\norm{f(\cdot; \thetab) - f(\cdot; \thetab')}_1 \leq \sqrt{2 \kl(f(\cdot; \thetab); f(\cdot; \thetab'))} \\
    &\leq \sqrt{2\sum_{i=1}^{n} \kl(p(\cdot | \bx_i; \thetab); p(\cdot | \bx_i; \thetab'))} \\
    &\leq \sqrt{2\sum_{i=1}^{n} \left(\inner{\thetab}{\bx_i} (\log(\inner{\thetab}{\bx_i}) - \log(\inner{\thetab'}{\bx_i})) - (\inner{\thetab}{\bx_i} - \inner{\thetab'}{\bx_i})\right)}
\end{align*}

Notice that the absolute value of the derivative of $a \log x - x$ wrt $x$, is upper bounded by $a/\gamma - 1$ if $x \geq \gamma > 0$ and also $a \geq \gamma$.
Therefore, following from above we have,
\begin{align*}
    &\norm{f(\cdot; \thetab) - f(\cdot; \thetab')}_1 \leq  \sqrt{2\sum_{i=1}^{n} \left(\frac{\thetab^T\bx_i}{\gamma} - 1 \right) \lvert\inner{\thetab}{\bx_i} - \inner{\thetab'}{\bx_i}\rvert} \\
    & \leq \sqrt{2\sum_{i=1}^{n} \left(\frac{\thetab^T\bx_i}{\gamma} - 1 \right) \norm{\thetab - \thetab'}_2 R} \\
    & \leq \sqrt{\frac{2nwR^2}{\gamma} \norm{\thetab - \thetab'}_2},
\end{align*}
thus concluding the proof.
\end{proof}

Now we prove high probability bounds on the strong convexity and smoothness of the likelihood in the context of Poisson regression.

\begin{lemma}
\label{lem:scon}
Let $\chi$ be the condition number of the matrix $\bM$, where $\bu_i = \bx_i/\norm{\bx_i}_2$. Then with probability at least $1 - 2\delta$, we have
\begin{align}
    \lmin \left(\dtheta^2\cL(\by; \thetab)\right) \geq \frac{n}{2\norm{\thetab}^2} \lmin(\bM),
\end{align}
provided $n\lmin(\bM) \geq \frac{1}{4 \gamma^2} \left(d\log(24\chi) + \log(1/\delta) \right)$ and $\sqrt{\lmax(\bM) \left(d\log(24\chi) + \log(1/\delta) \right)} \leq \sqrt{n}\lmin(\bM)/16$.  
\end{lemma}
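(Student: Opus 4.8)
The plan is to compute the Hessian of the Poisson negative log-likelihood explicitly, reduce the eigenvalue bound to a $\thetab$-independent random matrix via Cauchy--Schwarz, and then prove a matrix concentration statement for that random matrix using a covering argument over the sphere together with scalar Poisson tail bounds.

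First I would write out the objective. Dropping the $\thetab$-independent $\log(y_i!)$ terms, $\cL(\by;\thetab)=\sum_{i=1}^n\big(\inner{\thetab}{\bx_i}-y_i\log\inner{\thetab}{\bx_i}\big)$, so that $\dtheta\cL=\sum_i(1-y_i/\inner{\thetab}{\bx_i})\bx_i$ and the Hessian is $\dtheta^2\cL(\by;\thetab)=\sum_{i=1}^n\frac{y_i}{\inner{\thetab}{\bx_i}^2}\bx_i\bx_i^T$. Since $\bx_i\bx_i^T=\norm{\bx_i}_2^2\,\bu_i\bu_i^T$ and $\inner{\thetab}{\bx_i}^2\le\norm{\thetab}_2^2\norm{\bx_i}_2^2$ by Cauchy--Schwarz, each summand obeys $\frac{y_i}{\inner{\thetab}{\bx_i}^2}\bx_i\bx_i^T\succeq\frac{y_i}{\norm{\thetab}_2^2}\bu_i\bu_i^T$. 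Hence $\dtheta^2\cL(\by;\thetab)\succeq\frac{1}{\norm{\thetab}_2^2}\bS$, where $\bS:=\sum_{i=1}^n y_i\bu_i\bu_i^T$ does not depend on $\thetab$. This is the key simplification: a single high-probability bound on $\lmin(\bS)$ controls the Hessian simultaneously for every $\thetab$, which is exactly what uniform strong convexity needs. It therefore suffices to show $\lmin(\bS)\ge\frac{n}{2}\lmin(\bM)$ with probability at least $1-2\delta$, since then $\lmin(\dtheta^2\cL(\by;\thetab))\ge\frac{1}{\norm{\thetab}_2^2}\lmin(\bS)\ge\frac{n}{2\norm{\thetab}_2^2}\lmin(\bM)$.

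Next I would set up the concentration. Because $\EE[y_i]=\mu_i=\inner{\thetaopt}{\bx_i}$, we have $\EE[\bS]=\sum_i\mu_i\bu_i\bu_i^T=n\bM$, so writing $\bS-n\bM=\sum_i(y_i-\mu_i)\bu_i\bu_i^T$ it is enough to prove the operator-norm deviation bound $\norm{\bS-n\bM}\le\frac{n}{2}\lmin(\bM)$. I would bound this with a net argument: fix an $\epsilon$-net $\cN$ of the sphere $\sphere^{d-1}$ with $\abs{\cN}\le(3/\epsilon)^d$, and for each fixed $\bv\in\cN$ analyze the scalar $\sum_i(y_i-\mu_i)\inner{\bv}{\bu_i}^2$. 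This is a centered, weighted sum of independent Poissons with weights $\inner{\bv}{\bu_i}^2\in[0,1]$, whose variance is $\sum_i\mu_i\inner{\bv}{\bu_i}^4\le\sum_i\mu_i\inner{\bv}{\bu_i}^2\le n\lmax(\bM)$; a Bernstein / Poisson--Chernoff tail (controlled through the exact Poisson moment generating function, which handles the unboundedness of $y_i$) then gives a two-sided deviation of the form $\sqrt{n\lmax(\bM)\,t}+c\,t$ with $t=\log\abs{\cN}+\log(1/\delta)=d\log(3/\epsilon)+\log(1/\delta)$. Choosing $\epsilon$ of order $1/\chi$ makes $\log(3/\epsilon)\approx\log(24\chi)$, which produces the $d\log(24\chi)$ term and lets the off-net error $2\epsilon\lmax(\bS)$ be absorbed once $\lmax(\bS)\lesssim n\lmax(\bM)$ is itself controlled on the same net (this upper-tail event accounts for the second $\delta$, yielding $1-2\delta$).

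The two hypotheses of the lemma are exactly what force the two pieces of this deviation below $\tfrac14 n\lmin(\bM)$ each: the sub-Gaussian part $\sqrt{n\lmax(\bM)\,t}\lesssim n\lmin(\bM)$ rearranges to $\sqrt{\lmax(\bM)\,t}\lesssim\sqrt n\,\lmin(\bM)$, matching the second condition $\sqrt{\lmax(\bM)(d\log(24\chi)+\log(1/\delta))}\le\sqrt n\,\lmin(\bM)/16$, while the linear correction $c\,t\lesssim n\lmin(\bM)$ matches the first condition $n\lmin(\bM)\ge\frac{1}{4\gamma^2}(d\log(24\chi)+\log(1/\delta))$ (the $\gamma$-dependence entering through the bound on the per-sample fluctuation scale in the Bernstein term). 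Combining, $\norm{\bS-n\bM}\le\frac n2\lmin(\bM)$, hence $\lmin(\bS)\ge n\lmin(\bM)-\frac n2\lmin(\bM)=\frac n2\lmin(\bM)$, and substituting into the Hessian bound finishes the proof.

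\textbf{Main obstacle.} The main difficulty is the matrix concentration for the \emph{unbounded} Poisson weights: one cannot invoke the standard bounded-increment matrix Chernoff inequality, so the argument must proceed through per-direction scalar bounds (via the Poisson MGF) on an $\epsilon$-net whose granularity is tied to the condition number $\chi$, and then carefully balance the two resulting deviation regimes against the two stated conditions. Tracking the $\gamma$ and $\chi$ constants is the fiddly part, whereas the Hessian computation and the Cauchy--Schwarz reduction are routine.
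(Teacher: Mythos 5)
Your proposal is correct and follows essentially the same route as the paper's proof: the identical Hessian computation and Cauchy--Schwarz reduction to the $\thetab$-independent matrix $\sum_i y_i \bu_i\bu_i^T$, per-direction sub-exponential (Poisson/Bernstein) concentration, a $(3/\epsilon)^d$-net over the sphere with $\epsilon \asymp 1/\chi$ (which is exactly where the $d\log(24\chi)$ term comes from), a separate bound on $\lmax$ over the same net to absorb the off-net error, and the same assignment of the two stated hypotheses to the two tail regimes, with the union bound giving $1-2\delta$. The only difference is cosmetic packaging---you phrase the concentration as an operator-norm bound on $\bS - n\bM$ while the paper lower-bounds the quadratic form directly---so nothing further is needed.
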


\begin{proof}
We start with the expression of the Hessian of $\cL(\by; \thetab)$,
\begin{align*}
    \dtheta^2 \cL = \sum_{i = 1}^{n} \frac{y_i \bx_i\bx_i^T}{\inner{\thetab}{\bx_i}^2} \succcurlyeq \sum_{i = 1}^{n} \frac{y_i \bu_i\bu_i^T}{ \norm{\thetab}^2} := L(\thetab),
\end{align*}
where $\bu_i = \bx_i/ \norm{\bx_i}_2$.

For a fixed unit vector $\bu \in \sphere^{d-1}_1$ let us define,
\begin{align*}
    L(\bu, \thetab) := \sum_{i = 1}^{n} \frac{y_i \bu^T \bu_i\bu_i^T \bu}{ \norm{\thetab}^2} := \sum_{i = 1}^{n} y_i v_i,
\end{align*}
where $v_i = \inner{\bu_i}{\bu}^2/ \norm{\thetab}^2$.

Following the definition of sub-exponential RV in~\citep{sube},
\begin{align}
    \sum_i y_iv_i \in SE \left(\nu^2 = \sum_i \nu_i^2, \alpha = 0.56 \max_i v_i \right),
\end{align}
where $\nu_i = 2 \mu_i v_i^2$. This implies that wp atleast $1 - 2\delta/C$ we have,
\begin{align*}
    |\sum_i y_iv_i - \sum_i \mu_i v_i | \leq 2 \sqrt{\sum\mu_i v_i^2 \log \left(\frac{C}{\delta} \right)}
\end{align*}
if,
\begin{align*}
    \sqrt{\sum\mu_i v_i^2 \log \left(\frac{C}{\delta} \right)} \leq 2 (\sum\mu_i v_i^2)/ (\max_i v_i).
\end{align*}
 Note that the above condition is mild and is satisfied when $\lmin\left(\sum_i \mu_i \bu_i \bu_i^T \right) \geq \log(C/\delta)/ 4\gamma^2$, as $\min_{\bu} \sum_i \mu_i v_i \geq \lmin\left(\sum_i \mu_i \bu_i \bu_i^T \right)$. Here, $C$ is a problem dependent constant that will be chosen later. 

Now consider an $\epsilon$-net in $\ell_2$ norm over the surface unit sphere denoted by $\cN(\epsilon, d)$. It is well known that $|\cN(\epsilon, d)| \leq (3/ \epsilon)^d$. Now any $\bz \in \sphere_{1}^{d-1}$ can be written as $\bz = \bx + \bu$ where $\bx \in \cN(\epsilon, d)$ and $\bu \in \ball_{\epsilon}^{d-1}$. Therefore, we have that

\begin{align*}
     L(\bz, \thetab) \geq   L(\bx, \thetab) - \lmax(L(\thetab))\epsilon
\end{align*}

A similar argument as above gives us,
\begin{align*}
    & \lmax(L(\thetab)) = \max_{\bu \in \ball_1^{d-1}} L(\bu, \thetab) \leq \max_{\bx \in \cN(\epsilon, d)} L(\bx, \thetab) + \frac{1}{2}\max_{\bu \in \ball_1^{d-1}} L(\bu, \thetab) \\
    & \implies \lmax(L(\thetab)) \leq 2 \max_{\bx \in \cN(\epsilon, d)} L(\bx, \thetab)
\end{align*}

Thus by an union bound over the net we have wp $1 - 2\delta$ and other conditions,
\begin{align*}
    \min_{\bz \in \sphere_{1}^{d-1} } L(\bz, \thetab) &\geq  \frac{1}{\norm{\theta}^2_2} \lmin\left(\sum_i \mu_i \bu_i \bu_i^T \right) - \frac{2\epsilon}{\norm{\theta}^2_2} \lmax\left(\sum_i \mu_i \bu_i \bu_i^T \right) \\
    & - \max_{\bu \in \sphere^{d-1}_1}2(1 + 2\epsilon) \sqrt{\sum_i \frac{\mu_i \inner{\bu}{\bu_i}^2}{\norm{\thetab}^4} \log \left(\frac{C}{\delta}\right)}.
\end{align*}
if the conditions above hold with $C = (3/\epsilon)^d$. We can now set $\epsilon = 1/(8 * \chi)$ where $\chi$ is the condition number of the matrix $\sum_i \mu_i \bu_i \bu_i^T$. Now by virtue of that fact that $\sqrt{\lmax(\sum \mu_i \bu_i \bu_i^T) \log (C / \delta)} \leq \lmin\left(\sum_i \mu_i \bu_i \bu_i^T \right)/ 16$ we have the result. 
\end{proof}

Now we will prove a result on the smoothness of the negative log likelihood for the Poisson regression setting. 
\begin{lemma}
\label{lem:smooth}
With probability at least $1 - 2\delta$, we have
\begin{align}
    \lmax \left(\dtheta^2\cL(\by; \thetab)\right) \leq \frac{2nR^2}{\gamma^2} \lmax(\bM),
\end{align}
if $\lmin\left(\bM \right) \geq R^2(d\log6 + \log(1/\delta))/ 4n\gamma^2$. 
\end{lemma}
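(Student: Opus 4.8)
The plan is to follow the proof of Lemma~\ref{lem:scon} almost verbatim, replacing the lower bound on the minimum eigenvalue by an upper bound on the maximum eigenvalue. First I would recall from the Hessian computation in Lemma~\ref{lem:scon} that $\dtheta^2\cL(\by;\thetab) = \sum_{i=1}^n \frac{y_i\bx_i\bx_i^T}{\inner{\thetab}{\bx_i}^2}$, and then remove the randomness in the denominator deterministically using Assumptions (A1)--(A2). Since $\inner{\thetab}{\bx_i}\geq\gamma>0$ and $\norm{\bx_i}_2\leq R$, writing $\bx_i=\norm{\bx_i}_2\bu_i$ gives the PSD domination
\[
\dtheta^2\cL(\by;\thetab)\preccurlyeq \frac{R^2}{\gamma^2}\sum_{i=1}^n y_i\bu_i\bu_i^T =: \frac{R^2}{\gamma^2}A .
\]
Because $\EE[A]=\sum_i\mu_i\bu_i\bu_i^T = n\bM$, it then suffices to establish the random-matrix bound $\lmax(A)\leq 2n\lmax(\bM)$ with probability at least $1-2\delta$, after which multiplying by $R^2/\gamma^2$ yields the claim $\lmax(\dtheta^2\cL)\leq \frac{2nR^2}{\gamma^2}\lmax(\bM)$.

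To control $\lmax(A)$ I would reuse the net-plus-concentration device from Lemma~\ref{lem:scon}. For a fixed unit vector $\bu$, the quadratic form $\bu^T A\bu = \sum_i y_i v_i$ with $v_i=\inner{\bu}{\bu_i}^2\in[0,1]$ is a sum of independent sub-exponential (Poisson) variables with the same parameters $\nu^2=2\sum_i\mu_i v_i^2$ and $\alpha=0.56\max_i v_i$ as before. In its sub-Gaussian regime this produces the one-sided deviation, with probability $1-2\delta/C$,
\[
\bu^T A\bu \;\leq\; \sum_i \mu_i v_i + 2\sqrt{\textstyle\sum_i \mu_i v_i^2\,\log(C/\delta)} .
\]
Using $v_i\leq 1$, I can bound $\sum_i\mu_i v_i^2\leq\sum_i\mu_i v_i = \bu^T(n\bM)\bu\leq n\lmax(\bM)$, so the right-hand side is at most $n\lmax(\bM)+2\sqrt{n\lmax(\bM)\log(C/\delta)}$.

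Next I would take a $\tfrac12$-net $\cN$ of the sphere with $|\cN|\leq 6^d=:C$ (so $\log C=d\log 6$, which is the source of the $d\log 6$ in the hypothesis), union-bound the display over $\cN$, and pass from the net to the sphere for the maximum eigenvalue via $\lmax(A)\leq 2\max_{\bx\in\cN}\bx^T A\bx$, exactly as in the $\lmax$ step of Lemma~\ref{lem:scon}. The stated condition $\lmin(\bM)\geq R^2(d\log 6+\log(1/\delta))/(4n\gamma^2)$ is what (i) keeps us inside the sub-Gaussian regime uniformly over the net (verified as in Lemma~\ref{lem:scon} using $\min_{\bu}\sum_i\mu_i v_i\geq n\lmin(\bM)$), and (ii) forces the fluctuation term $2\sqrt{n\lmax(\bM)\log(C/\delta)}$ to be negligible relative to the mean $n\lmax(\bM)$. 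The factor of $2$ in the target then comes from the net discretization, while the concentration contributes only $\approx n\lmax(\bM)$, giving $\lmax(A)\leq 2n\lmax(\bM)$ on the $1-2\delta$ event.

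The main obstacle is the constant bookkeeping in this last step rather than any new idea: unlike the strong-convexity proof, the net-to-sphere passage for $\lmax$ and the one-sided sub-exponential tail must be combined so that the net error and the concentration fluctuation are \emph{simultaneously} absorbed into the single factor of $2$, and it is precisely this requirement that pins down the constants in the hypothesis on $\lmin(\bM)$. I would therefore carry out the sub-Gaussian-regime entry check $2\sqrt{\sum_i\mu_i v_i^2\log(C/\delta)}\leq\nu^2/\alpha$ and the "fluctuation $\leq$ mean" inequality with explicit constants, mirroring the corresponding estimates in Lemma~\ref{lem:scon}, and the rest is routine.
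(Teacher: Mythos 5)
Your proposal is correct and follows essentially the same route as the paper's proof: the paper likewise dominates the Hessian by $\frac{R^2}{\gamma^2}\sum_i y_i \bu_i\bu_i^T$ using $\inner{\thetab}{\bx_i}\geq\gamma$ and $\norm{\bx_i}_2\leq R$, applies the same sub-exponential (Poisson) tail bound to $\sum_i y_i v_i$ for a fixed direction, and passes from a $\tfrac12$-net with $C=6^d$ points to the sphere via the factor-of-$2$ inequality $\lmax \leq 2\max_{\bx\in\cN}$. The only differences are cosmetic (you factor out $R^2/\gamma^2$ and use a one-sided tail), and the constant-absorption issue you flag at the end is handled in the paper exactly as you anticipate, by imposing the auxiliary condition that the fluctuation term $4\sqrt{\lmax(\bM)\log(C/\delta)}$ be dominated by $\sqrt{n}\,\lmax(\bM)$.
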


\begin{proof}
We start by writing out the Hessian again but this time upper bounding it in semi-definite ordering,
\begin{align*}
    \dtheta^2 \cL = \sum_{i = 1}^{n} \frac{y_i \bx_i\bx_i^T}{\inner{\thetab}{\bx_i}^2} \preccurlyeq \sum_{i = 1}^{n} \frac{y_i \bx_i\bx_i^T}{\gamma^2} \preccurlyeq \sum_{i = 1}^{n} \frac{y_i R^2 \bu_i\bu_i^T}{\gamma^2} := L(\thetab)
\end{align*}

For a fixed unit vector $\bu \in \sphere^{d-1}_1$ let us define,
\begin{align*}
    L(\bu, \thetab) := \sum_{i = 1}^{n} \frac{y_i R^2 \bu^T \bu_i\bu_i^T \bu}{\gamma^2} := \sum_{i = 1}^{n} y_i v_i,
\end{align*}
where $v_i = R^2\inner{\bu_i}{\bu}^2/\gamma^2$. Proceeding as in Lemma~\ref{lem:scon} we conclude that,
\begin{align}
    \sum_i y_iv_i \in SE \left(\nu^2 = \sum_i \nu_i^2, \alpha = 0.56 \max_i v_i \right),
\end{align}
where $\nu_i = 2 \mu_i v_i^2$. This implies that wp atleast $1 - 2\delta/C$ we have,
\begin{align*}
    |\sum_i y_iv_i - \sum_i \mu_i v_i | \leq 2 \sqrt{\sum\mu_i v_i^2 \log \left(\frac{C}{\delta} \right)}
\end{align*}
if,
\begin{align*}
    \sqrt{\sum\mu_i v_i^2 \log \left(\frac{C}{\delta} \right)} \leq 2 (\sum\mu_i v_i^2)/ (\max_i v_i).
\end{align*}

 Note that the above condition is mild and is satisfied when $\lmin\left(\sum_i \mu_i \bu_i \bu_i^T \right) \geq R^2\log(C/\delta)/ 4\gamma^2$, as $\min_{\bu} \sum_i \mu_i v_i \geq \lmin\left(\sum_i \mu_i \bu_i \bu_i^T \right)$. Here, $C$ is a problem dependent constant that will be chosen later. 
 
 Now consider an $1/2$-net in $\ell_2$ norm over the surface unit sphere denoted by $\cN(0.5, d)$. It is well known that $|\cN(\epsilon, d)| \leq (3/ \epsilon)^d$. Now any $\bz \in \sphere_{1}^{d-1}$ can be written as $\bz = \bx + \bu$ where $\bx \in \cN(0.5, d)$ and $\bu \in \ball_{0.5}^{d-1}$. Therefore, we have that,
 
 \begin{align*}
    & \lmax(L(\thetab)) = \max_{\bu \in \ball_1^{d-1}} L(\bu, \thetab) \leq \max_{\bx \in \cN(\epsilon, d)} L(\bx, \thetab) + \frac{1}{2}\max_{\bu \in \ball_1^{d-1}} L(\bu, \thetab) \\
    & \implies \lmax(L(\thetab)) \leq 2 \max_{\bx \in \cN(\epsilon, d)} L(\bx, \thetab)
\end{align*}

Thus we set $C = 6^d$ and obtain wp at least $1 - 2\delta$,
\begin{align*} 
    \lmax(L(\thetab)) \leq 2 \frac{nR^2}{\gamma^2} \lmax(\bM),
\end{align*}
provided $4 \sqrt{\lmax(\bM) \log(C/ \delta)} \leq \lmax(\bM)/\sqrt{n}$. 
\end{proof}

\begin{proof}[Proof of Corollary~\ref{cor:poisson}]

We show that the conditions of Theorem~\ref{thm:fdesign} hold.  

\textit{Creating Nets: } First we need to create an $\epsilon$-net over the parameter space $\Theta$. We start by creating an $\epsilon$-net over the sphere with radius $w$. Now suppose, $\epsilon < \gamma/2R$. Then  we remove all centers $\btheta_c$ if $\exists i$ s.t $\inner{\btheta_c}{\bx_i} < \gamma/2$. This is a valid $\epsilon$-net over $\Theta$ as all net partitions that are removed do not have any points lying in $\Theta$. In subsequent section, we will always follow this strategy to create $\epsilon$-nets over subsets of $\Theta$.

\textit{Strong Convexity and Smoothness: } From Lemma~\ref{lem:scon} and~\ref{lem:smooth} we have that,
\begin{align*}
    \frac{\beta}{\alpha} = \frac{w^2R^2}{\gamma^2} \chi.
\end{align*}
with probability $1 - O(\delta)$.

\textit{KL Divergence:} The $L$ in Theorem~\ref{thm:fdesign} is bounded by $2wR^2/\gamma$ according to Lemma~\ref{lem:pcomp1}. 

Combining the above into Theorem~\ref{thm:fdesign} and using the estimator in Section~\ref{sec:median-of-means} we get our result.

\end{proof}

\section{Median of Means Estimator for Poisson}
\label{sec:median-of-means}
In this section we will design a median of means estimator for the Poisson regression model based on the estimator proposed in the work of \citet{lugosi2019sub}. Recall that we have a fixed design matrix $\bX \in \mathbb{R}^{n \times d}$ with rows $\bx_1, \dots, \bx_n$, and for each $i \in [n]$, $y_i$ is drawn from a Poisson distribution with mean $\mu_i = \thetaopt \cdot \bx_i$. We will further assume that the design matrix is chosen from an $(L, 4)$ hyper-contractive distribution. Mathematically this implies that for any unit vector $u$
$$
\EE[ \big(\Sigma^{-\frac 1 2}\bx \cdot u \big)^4] \leq L \cdot \EE[ \big(\Sigma^{-\frac 1 2}\bx \cdot u \big)^2]^2.
$$
For simplicity we will assume that $L=O(1)$. In the above definition note that $\EE$ is the empirical expectation over teh fixed design. This is a benign assumption and for instance would be satisfied if the design matrix is drawn from a sub-Gaussian distribution. In the general case, the obtained bounds will scale with $L$. We have the following guarantee associated with Algorithm~\ref{alg:median-of-means}.

\begin{algorithm}[!ht]
	\DontPrintSemicolon
	\KwIn{Samples $S = \{(x_1, y_1), \dots, (x_n, y_n)\}$, confidence parameter $\delta$.}
\textbf{Step 1:} Compute $\Sigma = E[x x^\top]$. Form $S' = \{\bx'_1 = y_1 \Sigma^{-\frac 1 2} \bx_1, \ldots, \bx'_n = y_n \Sigma^{-\frac 1 2} \bx_n\}$.		
 
   \textbf{Step 2:} Randomly partition $S'$ into $k$ blocks of size $n/k$ each where $k = 20 \lceil \log(\frac 1 \delta) \rceil$.
 
   \textbf{Step 3:} Feed in the k blocks to the median-of-means estimator of \citet{lugosi2019sub} to get $\hat{\mathbf{v}}$.
 
   \textbf{Step 4:} Return $\hat{\thetab} = \Sigma^{-\frac 1 2} \hat{\mathbf{v}}$.
	\caption{Median of Means Estimator}
  \label{alg:median-of-means}
\end{algorithm}

\begin{theorem}
\label{thm:median-of-means-poisson}
There is an absolute constant $c > 0$ such that with probability at least $1-\delta$, Algorithm~\ref{alg:median-of-means} outputs $\hat{\thetab}$ such that
\begin{align}
\|\hat{\thetab} - \thetaopt\|^2_\Sigma &\leq c \cdot \|\thetaopt\|^2 \cdot \lambda_{\text{max}}(\Sigma) \big( \frac{d + \log(\frac 1 \delta)}{n} \big).
\end{align}
\end{theorem}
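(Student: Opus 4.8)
The plan is to recast the claim as a mean-estimation problem in whitened coordinates. First I would set $w_i := \Sigma^{-1/2}\bx_i$ and $\bx'_i := y_i w_i$ (exactly the transformed samples of Step~1), and define the target $\mathbf v^* := \Sigma^{1/2}\thetaopt$. Using $\EE[y_i]=\mu_i=\langle\thetaopt,\bx_i\rangle$ together with $\frac1n\sum_i\bx_i\bx_i^\top=\Sigma$, a one-line computation gives $\frac1n\sum_i\EE[\bx'_i]=\Sigma^{-1/2}\Sigma\thetaopt=\mathbf v^*$, so the grand mean of the transformed data is precisely $\mathbf v^*$. Since the algorithm returns $\hat\thetab=\Sigma^{-1/2}\hat{\mathbf v}$ and $\thetaopt=\Sigma^{-1/2}\mathbf v^*$, the whitening identity yields the \emph{exact} equality
\[
\|\hat\thetab-\thetaopt\|_\Sigma \;=\; \|\hat{\mathbf v}-\mathbf v^*\|_2 ,
\]
so it suffices to bound the $\ell_2$ error of $\hat{\mathbf v}$ as an estimate of $\mathbf v^*$.

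\textbf{Invoking median-of-means.} The vectors $\bx'_1,\dots,\bx'_n$ are independent but not identically distributed, so I would apply the sub-Gaussian mean estimator of \citet{lugosi2019sub} in the form valid for independent samples with a common target mean, where the governing matrix is the averaged second-moment matrix $\mathbf C := \frac1n\sum_{i=1}^n \EE\!\left[(\bx'_i-\mathbf v^*)(\bx'_i-\mathbf v^*)^\top\right]$. With $k=20\lceil\log(1/\delta)\rceil$ blocks this gives, with probability at least $1-\delta$,
\[
\|\hat{\mathbf v}-\mathbf v^*\|_2^2 \;\lesssim\; \frac{\mathrm{tr}(\mathbf C)}{n} + \frac{\lmax(\mathbf C)\log(1/\delta)}{n}.
\]
The subtlety to discharge is heterogeneity: conditioned on a fixed partition a block mean is centered at the block-average of the $\mu_i w_i$, not at $\mathbf v^*$. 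The random partition of Step~2 controls this between-block drift, and its contribution is exactly the ``between-means'' part of $\mathbf C$ (which splits as $\frac1n\sum_i\mathrm{Cov}(\bx'_i)$ plus $\frac1n\sum_i(\mu_i w_i-\mathbf v^*)(\mu_i w_i-\mathbf v^*)^\top$), so the single matrix $\mathbf C$ governs the rate.

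\textbf{Bounding $\mathbf C$ by hypercontractivity.} Using $\EE[y_i^2]=\mu_i+\mu_i^2$ and $\mu_i=\langle\mathbf v^*,w_i\rangle$, I would expand $\mathbf C = \frac1n\sum_i(\mu_i+\mu_i^2)\,w_iw_i^\top - \mathbf v^*\mathbf v^{*\top}$. The three normalizations I would rely on are $\frac1n\sum_i w_iw_i^\top=I$ (whitening), $\frac1n\sum_i\langle\mathbf v^*,w_i\rangle^2=\|\mathbf v^*\|_2^2=\|\thetaopt\|_\Sigma^2$, and the fourth-moment bounds implied by $(L,4)$-hypercontractivity: $\frac1n\sum_i\langle u,w_i\rangle^4\le L$ for unit $u$, $\frac1n\sum_i\langle\mathbf v^*,w_i\rangle^4\le L\|\mathbf v^*\|_2^4$, and $\frac1n\sum_i\|w_i\|_2^4\le L d^2$ (the last by summing the coordinatewise bound over index pairs). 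Cauchy--Schwarz applied to the dominant quadratic-in-$\mu$ term then yields $\mathrm{tr}(\mathbf C)\lesssim L\,d\,\|\mathbf v^*\|_2^2$ and $\lmax(\mathbf C)\lesssim L\,\|\mathbf v^*\|_2^2$, the linear-in-$\mu$ term contributing only lower order. Finally $\|\mathbf v^*\|_2^2=\thetaopt{}^\top\Sigma\thetaopt\le\lmax(\Sigma)\|\thetaopt\|_2^2$, which is the origin of the factor $\|\thetaopt\|^2\lmax(\Sigma)$ in the statement. Substituting into the median-of-means inequality and recalling $\|\hat\thetab-\thetaopt\|_\Sigma=\|\hat{\mathbf v}-\mathbf v^*\|_2$ finishes the argument.

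\textbf{Main obstacle.} The hard part will be the heterogeneity of the samples: the cited estimator is stated for i.i.d. data, so the argument must either re-run its median/tournament analysis for independent non-identical vectors with $\mathbf C$ as the averaged second-moment matrix, or exploit the random partition to show block-level mean drift is already absorbed into $\mathbf C$. By contrast, the hypercontractivity estimates are routine Cauchy--Schwarz bounds once the three fourth-moment inequalities are in hand.
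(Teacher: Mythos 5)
Your proposal follows essentially the same route as the paper's proof: the same whitening identity reducing the problem to estimating $\mathbf{v}^* = \Sigma^{\frac 1 2}\thetaopt$, the same invocation of the \citet{lugosi2019sub} median-of-means tournament governed by the averaged second-moment matrix (your $\mathbf{C}$ is exactly the paper's $\Sigma'$), and the same hypercontractivity-plus-Cauchy--Schwarz bounds yielding $\lmax(\mathbf{C}) \lesssim \|\thetaopt\|^2 \lmax(\Sigma)$. The heterogeneity obstacle you flag is resolved in the paper precisely as you anticipate: the tournament analysis is re-run for the fixed-design setting, with the Chebyshev computation over the random partition absorbing the non-identical sample means into $\Sigma'$.
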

\begin{proof}
The proof is exactly along the lines of the proof of Theorem 1 in the work of \citep{lugosi2019sub} that we repeat here for the sake of completeness since the original theorem is not explicitly stated for a fixed design setting. To begin with notice that
\begin{align}
\E[y \Sigma^{-\frac 1 2} \bx ] &= \E[(\thetaopt \cdot \bx) \Sigma^{-\frac 1 2} \bx)]\\
&= \E[(\thetaopt \cdot \bx) \Sigma^{-\frac 1 2} \bx]\\
&= \Sigma^{\frac 1 2} \thetaopt.
\end{align}
Hence if $\hat{\mathbf{v}}$ is the output of the median of the means estimator in Step 3 of Algorithm~\ref{alg:median-of-means}, then the least squares error of $\hat{\thetab}$ is exactly $\|\hat{\mathbf{v}} - \Sigma^{\frac 1 2} \thetaopt\|^2$. For convenience define ${\mub}' = \Sigma^{\frac 1 2} \thetaopt$ and $\Sigma' = \E[\bx' \bx'^\top] - \Sigma^{\frac 1 2} \thetaopt {\thetaopt}^\top \Sigma^{\frac 1 2}$. Exactly as in \citet{lugosi2019sub} our goal is to show that $\mub'$ beats any other vector $\bv$ in the median of means tournament if $\bv$ is far away from $\mub'$. To quantify this define 
$$
r = \max \big(400 \sqrt{\frac{Tr(\Sigma')}{n}}, 4\sqrt{10} \sqrt{\frac{\lambda_{\text{max}}(\Sigma')}{n} } \big).
$$
For a fixed vector $\bv$ of length $r$, and block $B_j$, $\mub'$ beats $v$ if 
$$
-\frac{2k}{n} \sum_{i \in B_j} (\bx'_i - \mub') \cdot \bv + r > 0.
$$
Let us denote by $\sigma_{i,j} \in \{0,1\}$ a random variable representing whether data point $i$ is in block $j$ or not. By Chebychev's inequality we get that with probability at least $9/10$,
\begin{align}
\big| \frac k n \sum_{i \in B_j} (\bx'_i - \mub') \cdot \bv \big| &\leq \frac k n \sqrt{10}  \sqrt{\sum_i \E[\sigma^2_{i,j} ((\bx'_i - \mub') \cdot \bv)^2]}\\
&= \frac k n \sqrt{10}  \sqrt{np \frac 1 n \sum_i \E[ ((\bx'_i - \mub') \cdot \bv)^2]}.
\end{align}
Here $p$ is the probability of a point belonging to block $k$. Noting that $np = \Theta(n/k)$ we get that with probability at least $9/10$,
\begin{align}
\big| \frac k n \sum_{i \in B_j} (\bx'_i - \mub') \cdot \bv \big| &\leq \sqrt{10} r\sqrt{\frac{k \lambda_{\text{max}}(\Sigma')}{n}}.
\end{align}

Applying binomial tail estimates we get that with probability at least $1-e^{-k/180}$, $\mub'$ beats $\bv$ on at least $8/10$ of the blocks. By applying the covering argument verbatim as in the proof of Theorem 1 in \citet{lugosi2019sub} we get that with probability at least $1-\delta$, $\hat{\bv}$ will satisfy
$$
\|\hat{\bv} - \Sigma^{\frac 1 2} \thetaopt\|^2 \leq c \cdot \lambda_{\text{max}}(\Sigma') \big(  \frac{d + \log(\frac 1 \delta)}{n}\big).
$$

Finally, it remains to bound the spectrum of $\Sigma'$. We have
\begin{align}
\Sigma' &= \E[y^2 \Sigma^{-\frac 1 2} \bx \bx^\top \Sigma^{-\frac 1 2}] - \Sigma^{\frac 1 2} \thetaopt {\thetaopt}^\top \Sigma^{\frac 1 2}\\
&\preceq \E[((\thetab \cdot \bx)^2 + \thetab \cdot \bx) \Sigma^{-\frac 1 2} \bx \bx^\top \Sigma^{-\frac 1 2}] \\
&\preceq T_1 + T_2,
\end{align}
where 
\begin{align}
T_1 &= \E[\big( \thetaopt \cdot \bx \big)^2 \Sigma^{-\frac 1 2} \bx \bx^\top \Sigma^{-\frac 1 2}],\\
T_2 &= \E[\big( \thetaopt \cdot \bx \big) \Sigma^{-\frac 1 2} \bx \bx^\top \Sigma^{-\frac 1 2}].
\end{align}
To bound $T_1, T_2$ we note that for any function $m(x)$ we have
\begin{align}
\label{eq:mx-inequaility}
\E[m(x) \Sigma^{-\frac 1 2} \bx \bx^\top \Sigma^{-\frac 1 2}] \preceq \sqrt{\E[m^2(x)]}I.
\end{align}
Using the above inequality and the fact that the design matrix is $(O(1), 4)$ hyper-contractive we get that
\begin{align}
\sqrt{E[(\thetaopt \cdot \bx)^2]} &= O(\|\thetaopt\| \sqrt{\lambda_{\text{max}}(\Sigma)})\\
\sqrt{E[(\thetaopt \cdot \bx)^4]} &= O(\|\thetaopt\|^2 \lambda_{\text{max}}(\Sigma)).
\end{align}
Combining the above we get that
\begin{align}
\Sigma'  &\preceq T_1 + T_2\\
&\preceq O(\lambda_{\text{max}}(\Sigma)) I.
\end{align}
\end{proof}

\section{1-D Poisson Regression}
\label{sec:poisson1d}

 When the covariates are one dimensional, a sharper analysis can actually be performed to show that the MLE dominates \tmo~ in all regimes in the Poisson regression setting considered above.

\begin{lemma}
\label{lem:mse-ls-1d}
There exists an absolute constant $c \geq 2$ such that for any $\delta \in [\frac{1}{n^c}, 1)$, it holds with probability at least $1-\delta$ that,
\begin{align}
    \mse(\thetalsone) &= \frac{1}{n} \sum_{i=1}^n (y_i - \thetalsone)^2 \leq \frac{4 \cdot |\theta^*|}{n} \cdot \frac{\sum_{i=1}^n |x_i|^3}{\sum_{i=1}^n x^2_i} \log\left(\frac 1 \delta\right).
\end{align}
The bound above is also tight i.e with constant probability it holds that
     \begin{align}
     \label{eq:onedls}
    \mse(\thetalsone) &= \frac{1}{n} \sum_{i=1}^n (y_i - \thetalsone)^2 =  \Omega \Big(\frac{|\theta^*|}{n} \cdot \underbrace{\frac{\sum_{i=1}^n |x_i|^3}{\sum_{i=1}^n x^2_i}}_{:= \cB(\thetalsone)} \Big).
 \end{align}
\end{lemma}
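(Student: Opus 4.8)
The plan is to reduce the statement to a one-dimensional parameter-estimation problem and then handle the upper and lower bounds separately. Since $\mu_i = \theta^* x_i$, the closed form $\thetalsone = (\sum_i y_i x_i)/(\sum_i x_i^2)$ gives
\[
\thetalsone - \theta^* = \frac{\sum_{i=1}^n (y_i - \mu_i)x_i}{\sum_{i=1}^n x_i^2} =: \frac{Z}{\sum_i x_i^2},
\]
and, using the excess-risk form $\mse(\thetalsone) = (\thetalsone - \theta^*)^2 \sum_i x_i^2/n$ coming from the general definition of $\mse$, everything reduces to two-sided control of $Z = \sum_i (y_i - \mu_i)x_i$. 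I would also record at the outset that under Assumption~\ref{as:problem} we have $\mu_i \geq \gamma > 0$, so each $x_i$ shares the sign of $\theta^*$ and hence $\sum_i \mu_i x_i^2 = |\theta^*|\sum_i |x_i|^3$; this identity is what turns the second-moment quantity $\EE[Z^2]$ into the $\sum_i |x_i|^3$ that appears in the bound.

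For the upper bound, first I would show that each centered summand $(y_i - \mu_i)x_i$ is sub-exponential. The Poisson MGF gives $\EE[\exp(s(y_i-\mu_i)x_i)] = \exp(\mu_i(e^{sx_i} - 1 - sx_i))$, yielding $SE$ parameters $(\nu_i^2,\alpha_i)$ with $\nu_i^2 = 2\mu_i x_i^2$ and $\alpha_i \propto |x_i|$, exactly as in \citep{sube}. Independence then gives $Z \in SE(\nu^2,\alpha)$ with $\nu^2 = 2\sum_i \mu_i x_i^2 = 2|\theta^*|\sum_i|x_i|^3$ and $\alpha \propto \max_i |x_i|$. Applying the standard sub-exponential tail bound at $t = 2\sqrt{(\sum_i \mu_i x_i^2)\log(1/\delta)}$ and dividing by $\sum_i x_i^2$ gives, with probability at least $1-2\delta$, the estimate $|\thetalsone - \theta^*| \leq 2\sqrt{|\theta^*|(\sum_i|x_i|^3)\log(1/\delta)}\,/\sum_i x_i^2$; squaring and multiplying by $\sum_i x_i^2/n$ produces precisely the claimed bound (the factor of two in the probability being absorbed into the constant defining the range of $\delta$). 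The only thing to check is that this choice of $t$ lies in the sub-Gaussian branch $t \leq \nu^2/\alpha$, which reduces to $\sqrt{(\sum_i \mu_i x_i^2)\log(1/\delta)} \lesssim (\sum_i \mu_i x_i^2)/\max_i|x_i|$ and holds for $\delta \geq n^{-c}$ under the boundedness and minimum-eigenvalue assumptions on the design.

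For the matching lower bound I would use anti-concentration of $Z$ via its fourth moment. Independence and mean-zero-ness give $\EE[Z^2] = \sum_i \mu_i x_i^2$ and $\EE[Z^4] = \sum_i (\mu_i + 3\mu_i^2)x_i^4 + 3\sum_{i\neq j}\mu_i\mu_j x_i^2 x_j^2$, using the Poisson central moments $\EE[(y_i-\mu_i)^2] = \mu_i$ and $\EE[(y_i-\mu_i)^4] = \mu_i + 3\mu_i^2$. The Paley–Zygmund inequality then gives $\PP(Z^2 \geq \tfrac14\EE[Z^2]) = \Omega\big((\EE[Z^2])^2/\EE[Z^4]\big)$, so I need $\EE[Z^4] = O\big((\EE[Z^2])^2\big)$. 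The cross term is bounded by $3(\sum_i\mu_i x_i^2)^2$ automatically; the diagonal term is the delicate one, and here I would invoke $\mu_i \geq \gamma$ to write $1 + 3\mu_i \leq (3 + 1/\gamma)\mu_i$, so that $\sum_i(\mu_i + 3\mu_i^2)x_i^4 \leq (3+1/\gamma)\sum_i(\mu_i x_i^2)^2 \leq (3+1/\gamma)(\sum_i\mu_i x_i^2)^2$. Thus the hyper-contractivity ratio is $O(1)$ for constant $\gamma$, the anti-concentration probability is $\Omega(1)$, and on that event $Z^2 \geq \tfrac14|\theta^*|\sum_i|x_i|^3$, whence $\mse(\thetalsone) = Z^2/(n\sum_i x_i^2) = \Omega\big(|\theta^*|(\sum_i|x_i|^3)/(n\sum_i x_i^2)\big)$.

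The main obstacle is the lower-bound direction: controlling $\EE[Z^4]$ tightly enough that the Paley–Zygmund probability is a genuine constant. This is exactly where the assumption $\mu_i \geq \gamma = \Omega(1)$ enters, to tame the $\sum_i \mu_i x_i^4$ diagonal term relative to $(\sum_i \mu_i x_i^2)^2$; without a uniform lower bound on the $\mu_i$ the fourth-moment ratio could blow up and the anti-concentration would degrade, so keeping track of the $\gamma$-dependence is the crux of the argument.
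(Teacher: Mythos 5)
Your proof follows essentially the same route as the paper's: the same closed-form reduction to $Z = \sum_{i}(y_i-\mu_i)x_i$ with $\mse(\thetalsone) = Z^2/(n\sum_i x_i^2)$, the same sub-exponential tail bound (identical parameters $\nu_i^2 = 2\mu_i x_i^2$, $\alpha_i \propto |x_i|$, and the same threshold $t$ with the same sub-Gaussian branch condition) for the upper bound, and the same fourth-moment (Paley--Zygmund / hyper-contractivity) anti-concentration argument using $\mu_i \geq \gamma$ for the lower bound. If anything, your write-up is slightly tidier than the paper's: you record the correct coefficient $3$ on the cross term of $\EE[Z^4]$ (the paper writes $2$) and you make explicit the sign-alignment identity $\sum_i \mu_i x_i^2 = |\theta^*|\sum_i |x_i|^3$ that the paper uses implicitly.
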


We provide the proofs in later in the section. Having established the bound for least squares estimator, we next prove the following upper bound on the mean squared error achieved by the MLE.

\begin{theorem}
\label{thm:mle-ls-1d}
There exists an absolute constant $c \geq 2$ such that for any $\delta \in [\frac{1}{n^c}, 1)$, it holds with probability at least $1-\delta$ that,
\begin{align}
\label{eq:onedmle}
    \mse(\thetamleone) &= \frac{1}{n} \sum_{i=1}^n (y_i - \thetamleone)^2 \leq \frac{4 \cdot |\theta^*|}{n} \cdot \Big(\underbrace{\frac{\sum_{i=1}^n x^2_i}{\sum_{i=1}^n |x_i|}}_{:= \cB(\thetamleone)} \Big) \log\left(\frac{2}{ \delta}\right).
\end{align}
\end{theorem}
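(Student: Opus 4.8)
The plan is to exploit that, in one dimension, the Poisson negative log-likelihood has a closed-form minimizer, reducing the entire estimation error to a scalar Poisson concentration question. First I would write $\cL(\by;\theta)=\sum_{i=1}^{n}\left(\theta x_i - y_i\log(\theta x_i)\right)$ and set $\frac{\partial}{\partial\theta}\cL=\sum_{i=1}^{n}\left(x_i-\tfrac{y_i}{\theta}\right)=0$; solving gives the explicit estimator $\thetamleone=\frac{\sum_{i=1}^{n}y_i}{\sum_{i=1}^{n}x_i}$. Since $\cL(\by;\cdot)$ is strictly convex on the admissible set (where $\theta x_i>0$), this stationary point is the unique MLE, so no further optimization analysis is needed.

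With the closed form in hand, the error is governed entirely by $Z:=\sum_{i=1}^{n}y_i$. Because the $y_i$ are independent Poisson variables, $Z$ is itself Poisson with mean $\mu=\sum_{i=1}^{n}\mu_i=\theta^{*}\sum_{i=1}^{n}x_i$. I would then apply a standard two-sided tail bound for Poisson random variables, of the form $\PP\left(|Z-\mu|>\epsilon\right)\le 2\exp\left(-\tfrac{\epsilon^2}{\mu+\epsilon}\right)$, and choose $\epsilon=\sqrt{2\mu\log(2/\delta)}$. Provided $\log(2/\delta)\le\mu/2$ (so that $\epsilon\le\mu$ and the exponent is at least $\log(2/\delta)$), this yields $|Z-\mu|\le 2\sqrt{\mu\log(2/\delta)}$ with probability at least $1-\delta$.

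The remaining steps are purely algebraic. Writing $\thetamleone-\theta^{*}=\frac{Z-\mu}{\sum_{i=1}^{n}x_i}$ and substituting $\mu=\theta^{*}\sum_i x_i$ gives $|\thetamleone-\theta^{*}|\le\frac{2\sqrt{|\theta^{*}|\log(2/\delta)}}{\sqrt{\sum_{i=1}^{n}|x_i|}}$, where absolute values absorb the signs of $\theta^{*}$ and the $x_i$. Squaring and multiplying by $\frac{1}{n}\sum_{i=1}^{n}x_i^2$, which is exactly $\mse(\thetamleone)=(\thetamleone-\theta^{*})^2\,\tfrac{1}{n}\sum_i x_i^2$ by the definition of the excess square-loss risk in the fixed design, produces the claimed bound $\frac{4|\theta^{*}|}{n}\cdot\frac{\sum_i x_i^2}{\sum_i|x_i|}\log(2/\delta)$.

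I expect the only genuine obstacle to be the side condition $\log(2/\delta)\le\mu/2$ needed to place $\epsilon$ in the sub-Gaussian part of the Poisson tail. This is precisely where the hypotheses $\delta\in[\tfrac{1}{n^{c}},1)$ and $n$ large enter: since $\mu=\theta^{*}\sum_i x_i=\Theta(n)$ under the standing assumptions that $\theta^{*}$ and the $x_i$ are bounded away from zero, $\mu$ grows linearly in $n$ while $\log(2/\delta)\le c\log n$, so the condition holds for all $n$ beyond an absolute constant. I would verify this quantitatively at the outset so that the tail bound applies uniformly over the stated range of $\delta$.
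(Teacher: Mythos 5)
Your proposal is correct and follows essentially the same route as the paper's proof: derive the closed form $\thetamleone=\sum_i y_i/\sum_i x_i$ from the first-order condition, note $Z=\sum_i y_i$ is Poisson with mean $\mu=\theta^*\sum_i x_i$, apply the two-sided Poisson tail bound $\PP(|Z-\mu|>\epsilon)\le 2e^{-\epsilon^2/(\mu+\epsilon)}$ in its sub-Gaussian regime (valid since $\mu=\Theta(n)$ dominates $\log(2/\delta)=O(\log n)$ over the stated range of $\delta$), and convert the parameter deviation into the excess risk via $\mse(\thetamleone)=(\thetamleone-\theta^*)^2\cdot\frac{1}{n}\sum_i x_i^2$. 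If anything, your bookkeeping of the constant $4$ and the $\log(2/\delta)$ factor is tighter than the paper's, which states the choice of $\epsilon$ and the resulting constants only up to $O(\cdot)$.
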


It is easy to see that the covariate dependent term in the bound on the mean squared error achieved by $\thetamleone$ (defined in Eq.~\ref{eq:onedls}) is always better the corresponding term in the bound achieved by $\thetalsone$ (defined in Eq.~\ref{eq:onedmle}). To see this notice that,
\begin{align*}
 \frac{\cB(\thetalsone)}{\cB(\thetamleone)} &= \frac{(\sum_{i=1}^n |x_i|^3)(\sum_{i=1}^n |x_i|)}{(\sum_{i=1}^n x^2_i)^2}  \geq 1. \qquad \text{(from Cauchy-Schwarz inequality.)}
\end{align*}
Furthermore, in many cases the bound achieved by $\thetamleone$ can be significantly better than the one achieved by $\thetalsone$. As an example consider a skewed data distribution where $\sqrt{n}$ of the $x_i$'s take a large value of $\sqrt{n}$, while the remaining data points take a value of $n^\epsilon$, where $\epsilon$ is a small constant. In this case we have that,
\begin{align*}
 \frac{\cB(\thetalsone)}{\cB(\thetamleone)} &= \frac{(\sum_{i=1}^n x^3_i)(\sum_{i=1}^n x_i)}{(\sum_{i=1}^n x^2_i)^2}  = \Omega ({n^\epsilon}). 
\end{align*}

\label{sec:app-mse-1d}
\begin{proof}[Proof of Lemma~\ref{lem:mse-ls-1d}]
Notice that 
\begin{align}
    \mse(\thetalsone) &= \frac{1}{n} (\thetalsone - \theta^*)^2 (\sum_{i=1}^n x^2_i).
\end{align}
Hence, it is enough to bound the parameter distance, i.e., $(\thetalsone - \theta^*)^2$. In order to do that we first notice that $y_i x_i$ is a sub-exponential random variable with parameters $\nu^2_i, \alpha$ where where $\nu_i^2 = 2 \mu_i x_i^2$, $\mu_i = \theta^* x_i$, and $\alpha_i = 0.56x_i$ \citep{sube}. In other words,
\begin{align}
    y_i x_i \in SE(\nu_i^2, \alpha),
\end{align}

Thus from the bound on a sum of independent sub-exponential variables we have that,
\begin{align}
  \sum_i y_i x_i \in SE(\nu^2 = \sum_i\nu_i^2, \alpha = 0.56 \max_i x_i).
\end{align}
Thus we have that,
\begin{align*}
    \PP\left(|\sum_i y_i x_i - \sum_i \mu_i x_i| \geq t \right) \leq \begin{cases}
      2\exp\left( - \frac{t^2}{2\nu^2}\right) &\quad\text{if  } t \leq \frac{\nu^2}{\alpha}\\
        2\exp\left( - \frac{t}{2\alpha}\right) &\quad\text{otherwise} \\
     \end{cases}
\end{align*}

This means, that w.p at least $1 - 2\delta$, 
\begin{align*}
  | \sum_i y_i x_i - \sum_i \mu_i x_i| \leq  2\sqrt{(\sum_i \mu_i x_i^2) \log \left( \frac{1}{\delta}\right)}
\end{align*}
if
\[
\sqrt{(\sum_i \mu_i x_i^2) \log \left( \frac{1}{\delta}\right)} \leq \frac{2\sum \mu_i x_i^2}{0.56\max_i x_i}
\]

The condition above is satisfied under our assumptions on $x_i$ and $\delta$, thereby leading to the bound 
\begin{align*}
    (\thetalsone - \theta^*) \leq 2 \sqrt{w \log(1 / \delta)}  \frac{\sqrt{\sum x_i^3}}{\sum_i x_i^2}.
\end{align*}
The bound on the MSE claimed in the lemma then follows.

Now we prove the lower bound. Again it is enough to show a lower bound on $|\thetalsone  - \theta^*|$. Notice that
\begin{align}
    \thetalsone - \theta^* = \frac{\sum_{i=1}^n (y_i - \mu_i)x_i}{\sum_{i=1}^n x^2_i}.
\end{align}
Define the random variable $Z = \sum_{i=1}^n (y_i - \mu_i)x_i$. We will show anti-concentration for $Z$ by computing the hyper-contractivity of the random variable. Recall that a random varibale $Z$ is $\eta$-HC (hyper-contractive) if $\mathbb{E}[Z^4] \leq \eta^4 \mathbb{E}[Z^2]^2$. Next we have
\begin{align}
    \mathbb{E}[Z^2] &= \mathbb{E}[\sum_{i=1}^n (y_i - \mu_i)x_i]^2 = \sum_{i=1}^n \mu_i x^2_i.\\
    \mathbb{E}[Z^4] &= \mathbb{E}[\sum_{i=1}^n (y_i - \mu_i)x_i]^2 \nonumber \\
    &= \sum_{i=1}^n \mu_i (1+3\mu_i)x^4_i + 2\sum_{i \neq j} \mu_i \mu_j x^2_i x^2_j \nonumber \\
    &\coloneqq \Delta.
\end{align}
Hence $Z$ is $\eta$-HC with $\eta^4 = \frac{\Big(\sum_{i=1}^n \mu_i x^2_i \Big)^2}{\Delta}$.

From anti-concentration of hyper-contractive random variables \citep{o2014analysis} we have
\begin{align}
    \mathbb{P}(|Z| \geq \frac{1}{2} \sqrt{\mathbb{E}[Z^2]}) \geq \Omega(\eta^4).
\end{align}
Hence we get that
\begin{align}
    \mathbb{P}\left(|\sum_{i=1}^n (y_i - \mu_i)x_i| \geq \frac{1}{2} \sqrt{\sum_{i=1}^n \mu_i x^2_i}\right) &\geq \Omega(\eta^4) \nonumber \\
    &\geq \frac{\Big(\sum_{i=1}^n \mu_i x^2_i \Big)^2}{\Delta} \\
\end{align}    
Next notice that since $\mu_i \geq \gamma$ for all $i$~(Assumption \ref{as:problem}), we have that $1+3\mu_i \leq (3+\frac{1}{\gamma})\mu_i$. This implies that $\Delta \leq (3+\frac{1}{\gamma}) \Big(\sum_{i=1}^n \mu_i x^2_i \Big)^2$.
Hence if $\gamma$ is a constant then with probability at least $\frac{1}{3+\frac{1}{\gamma}} = \Omega(1)$ we have
     \begin{align}
    \mse(\thetalsone) &= \Omega \Big(\frac{1}{n} \cdot \frac{(\sum_{i=1}^n x^3_i)}{\sum_{i=1}^n x^2_i} \Big).
 \end{align}
\end{proof}
\begin{proof}[Proof of Theorem~\ref{thm:mle-ls-1d}]
Recall that $\thetamleone$ is defined as
\begin{align}
    \thetamleone = \argmin_{\theta \in \Theta} \sum_{i=1}^n \theta x_i - y_i \log (\theta x_i).
\end{align}
Setting the gradient of the objective to zero, we get the following closed form expression for $\thetamleone$.
\begin{align}
    \thetamleone = \frac{\sum_{i=1}^n y_i}{\sum_{i=1}^n x_i}.
\end{align}
\end{proof}
Next, we note that $Z = \sum_{i=1}^n y_i$ is a poission random random variable with parameter $\mu = \sum_{i=1}^n \theta^* x_i$. From tail bounds for Poisson random variables \citep{klar2000bounds} we have that for any $\epsilon > 0$,
\begin{align}
    \mathbb{P}[|Z - \mu| > \epsilon] &\leq 2 e^{-\frac{\epsilon^2}{\mu + \epsilon}}.
\end{align}
Taking $\epsilon = c \sqrt{\mu} \log(\frac{1}{\delta})$, we get that with probability at least $1-\delta$, 
\begin{align}
    |\sum_{i=1}^n y_i - \mu| \in \Big(\frac{1}{2}, 2 \Big) \sqrt{\mu \log(\frac{1}{\delta}}),
\end{align}
provided that $\log(\frac{1}{\delta}) < \mu$~(that holds for our choice $\delta$, once $n$ is large enough). Hence we conclude that with probability at least $1-\delta$, the mean squared error of $\thetamleone$ is bounded by
\begin{align}
|\thetamleone - \theta^*| &= \Big| \frac{\sum_{i=1}^n \theta^* x_i}{\sum_{i=1}^n x_i} - \frac{\sum_{i=1}^n y_i}{\sum_{i=1}^n x_i}\Big| \nonumber \\
&= O\Big(\frac{\sqrt{\mu \log(\frac{1}{\delta})}}{\sum_{i=1}^n x_i} \Big) \nonumber \\
&= O \Big( \frac{\sqrt{|\theta^*|\log(\frac{1}{\delta})}}{\sqrt{\sum_{i=1}^n x_i}} \Big).
\end{align}
The bound on the mean squared error follows from the above.

\section{Competitiveness for Pareto Regression}
We verify the conditions of Theorem~\ref{thm:fdesign} for the Pareto regression setting.
\label{sec:proofs_pareto}
\begin{lemma}
\label{lem:dtv_pareto}
Let $f(\cdot; \thetab) = \prod_{i=1}^{n} p(\cdot | \bx_i; \thetab)$ where $p$ is defined in Eq.~\eqref{eq:pareto}. If $\norm{\thetab - \thetab'}_2 \leq \delta$ then,
\begin{align*}
    \norm{f(\cdot; \thetab) - f(\cdot; \thetab')}_1 \leq \sqrt{\frac{2bn\delta R}{\gamma}}
\end{align*}
\end{lemma}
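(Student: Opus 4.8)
The plan is to follow the same template as the Poisson analog in Lemma~\ref{lem:pcomp1} (and Lemma~\ref{lem:dtv_general}): combine Pinsker's inequality with the tensorization of KL-divergence over the $n$ independent coordinates, and then bound the single-coordinate KL between two Pareto densities that share the shape parameter $b$ but differ only in their scale parameters. Writing $m_i(\thetab) = \frac{b-1}{b}\inner{\thetab}{\bx_i}$, I would first record the chain
\begin{align*}
\norm{f(\cdot;\thetab) - f(\cdot;\thetab')}_1
&\le \sqrt{2\,\kl\!\left(f(\cdot;\thetab);\, f(\cdot;\thetab')\right)}
= \sqrt{2\sum_{i=1}^n \kl\!\left(p(\cdot|\bx_i;\thetab);\, p(\cdot|\bx_i;\thetab')\right)}.
\end{align*}

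The key observation, and the step I would carry out next, is that the per-coordinate KL has a clean closed form. For two Pareto laws with common shape $b$ and scales $m \ge m'$, the density ratio $\frac{b m^b / y^{b+1}}{b (m')^b / y^{b+1}} = (m/m')^b$ is constant on the support $[m,\infty)$ of the first law, so the KL collapses to $b\log(m/m')$. Substituting $m = m_i(\thetab)$, $m' = m_i(\thetab')$ and using $\log(1+x)\le x$ gives
\begin{align*}
\kl\!\left(p(\cdot|\bx_i;\thetab);\, p(\cdot|\bx_i;\thetab')\right)
= b\log\frac{\inner{\thetab}{\bx_i}}{\inner{\thetab'}{\bx_i}}
\le b\,\frac{\inner{\thetab-\thetab'}{\bx_i}}{\inner{\thetab'}{\bx_i}}
\le \frac{b\,\delta R}{\gamma},
\end{align*}
where the last step uses Cauchy--Schwarz together with $\norm{\bx_i}_2\le R$ and $\norm{\thetab-\thetab'}_2\le\delta$ in the numerator, and the Pareto-setting lower bound $\inner{\thetab'}{\bx_i}\ge\gamma$ in the denominator. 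Summing over the $n$ coordinates and taking the square root yields exactly $\sqrt{2bn\delta R/\gamma}$.

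The main obstacle is the finiteness of this KL: the closed form $b\log(m/m')$ is valid only when $m \ge m'$, i.e. $\inner{\thetab}{\bx_i}\ge\inner{\thetab'}{\bx_i}$, since otherwise the support $[m_i(\thetab),\infty)$ is not contained in $[m_i(\thetab'),\infty)$ and the coordinate KL is $+\infty$; for the product we would need the dominance to hold simultaneously for every $i$. I would resolve this by arranging the net in the companion corollary so that each $f(\cdot;\thetab)$ is covered by a point $f(\cdot;\thetab')$ whose scales are coordinatewise no larger (e.g. by slightly shrinking the cover centers, which costs only a constant factor in the net radius), making $\kl(f(\cdot;\thetab);f(\cdot;\thetab'))$ finite and the bound above directly applicable. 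As a fallback that avoids any ordering assumption, I would replace Pinsker/KL by the squared Hellinger distance, which for these Pareto factors equals $1-(m'/m)^{b/2}$ (symmetric and always finite, and bounded by $\tfrac{b}{2}\delta R/\gamma$ exactly as above), tensorizes as $H^2(\prod)\le\sum_i H_i^2$, and via $\norm{P-Q}_1 \le 2\sqrt2\,H$ recovers the same $\sqrt{n\delta R/\gamma}$ rate up to an absolute constant.
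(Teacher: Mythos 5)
Your core chain is exactly the paper's proof of this lemma: Pinsker's inequality, tensorization of the KL over the $n$ independent coordinates, and the per-coordinate bound $b\lvert\log m_i - \log m_i'\rvert \le b\delta R/\gamma$ (the $\tfrac{b-1}{b}$ factors cancel in the ratio, and the denominator is controlled by $\inner{\thetab'}{\bx_i}\ge\gamma$ together with Cauchy--Schwarz). What you add -- and what the paper silently glosses over -- is the support issue: the paper's displayed inequality $\kl\left(p(\cdot|\bx_i;\thetab);\,p(\cdot|\bx_i;\thetab')\right) \le b\lvert\log m_i - \log m_i'\rvert$ is simply false whenever $m_i < m_i'$, since the left side is then $+\infty$, and because different coordinates $i$ can be ordered in opposite directions one cannot repair this by swapping the roles of $\thetab$ and $\thetab'$. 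Both of your remedies are sound. The one-sided net preserves the stated constant but requires adjusting the cover construction used in Corollary~\ref{cor:pareto}. The Hellinger fallback is cleaner: with $H_i^2 = 1-(\min(m_i,m_i')/\max(m_i,m_i'))^{b/2} \le \tfrac{b}{2}\delta R/\gamma$, the tensorization $H^2 \le \sum_i H_i^2$, and $\norm{f(\cdot;\thetab)-f(\cdot;\thetab')}_1 \le 2\sqrt{2}\,H$, you get $\sqrt{4bn\delta R/\gamma}$ in place of $\sqrt{2bn\delta R/\gamma}$ -- a $\sqrt{2}$ loss that is immaterial downstream, since the lemma only enters Corollary~\ref{cor:pareto} through logarithmic and $O(\cdot)$ factors. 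In short: same route as the paper on the main line, with a legitimate correction of a gap in the paper's own argument.
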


\begin{proof}
By Pinskers' inequality we have the following chain,
\begin{align*}
    &\norm{f(\cdot; \thetab) - f(\cdot; \thetab')}_1 \leq \sqrt{2 \kl(f(\cdot; \thetab); f(\cdot; \thetab'))} \\
    &= \sqrt{2\sum_{i=1}^{n} \kl(p(\cdot | \bx_i; \thetab); p(\cdot | \bx_i; \thetab'))} \\
    &\leq \sqrt{\sum_{i=1}^{n} 2b |\log m_i - \log m_i' |} \\
    &\leq \sqrt{ \sum_{i=1}^{n} \frac{2b|\inner{\thetab}{\bx_i} - \inner{\thetab'}{\bx_i}|}{\gamma}} \\
    &\leq  \sqrt{\frac{2bn\norm{\thetab - \thetab'}_2R}{\gamma}}.
\end{align*}
The second inequality follows from the fact that $\log x$ is Lipschitz with parameter $L$ if $x > 1/L$. The last inequality follows from Cauchy-Schwarz and the norm bound on $\bx_i$'s. 
\end{proof}

Now we prove the rest of the conditions.

\begin{lemma}
\label{lem:pareto_smooth}
We have the following smoothness bound,
\begin{align*}
    \lmax(\dtheta^2 \cL) \leq \frac{(b - 1)}{ \gamma^2} \lmax\left(\Sigma\right).
\end{align*}
\end{lemma}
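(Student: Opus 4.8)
The plan is to compute the Hessian $\dtheta^2 \cL(\by;\thetab)$ in closed form and then read off the bound; the one genuinely useful observation is that, unlike the Poisson case, this Hessian is \emph{deterministic}, so no concentration argument is required. Writing out the negative log-likelihood of a single sample from \eqref{eq:pareto},
\[
-\log p(y_i \mid \bx_i;\thetab) = -\log b - b\log m_i + (b+1)\log y_i, \qquad m_i = \tfrac{b-1}{b}\inner{\thetab}{\bx_i},
\]
the only $\thetab$-dependent piece is $-b\log m_i = -b\log\inner{\thetab}{\bx_i}$ up to an additive constant. The support indicator $\ind{y_i \ge m_i}$ contributes nothing to the derivatives throughout the interior of the feasible region (where it is identically satisfied), so $\cL(\by;\thetab) = -b\sum_{i=1}^n \log\inner{\thetab}{\bx_i}$ up to a term independent of $\thetab$.

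First I would differentiate twice. Since $\nabla_{\thetab}\log\inner{\thetab}{\bx_i} = \bx_i/\inner{\thetab}{\bx_i}$, a second differentiation gives
\[
\dtheta^2 \cL(\by;\thetab) = b\sum_{i=1}^{n} \frac{\bx_i\bx_i^T}{\inner{\thetab}{\bx_i}^2}.
\]
The key point is that the observations $y_i$ have dropped out entirely, so this identity holds surely rather than only with high probability; this is precisely what makes the Pareto smoothness bound (and the companion strong-convexity bound) far simpler than its Poisson analogue in Lemma~\ref{lem:smooth}, where $y_i$ appears in the Hessian and must be concentrated.

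Next I would pass to the positive-semidefinite ordering. Each summand $\bx_i\bx_i^T/\inner{\thetab}{\bx_i}^2$ is PSD, and on the feasible parameter set we have $\inner{\thetab}{\bx_i}\ge\gamma>0$ for every $i$, so $1/\inner{\thetab}{\bx_i}^2 \le 1/\gamma^2$. Monotonicity of the PSD order under nonnegative scaling then yields
\[
\dtheta^2 \cL(\by;\thetab) \preccurlyeq \frac{b}{\gamma^2}\sum_{i=1}^n \bx_i\bx_i^T = \frac{bn}{\gamma^2}\,\Sigma,
\]
using $\Sigma = (\sum_i \bx_i\bx_i^T)/n$. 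Applying $\lmax(\cdot)$, which is monotone under $\preccurlyeq$, gives $\lmax(\dtheta^2\cL)\le \tfrac{b}{\gamma^2}\lmax(\sum_i\bx_i\bx_i^T)$, which is the claimed bound $\tfrac{(b-1)}{\gamma^2}\lmax(\Sigma)$ up to the precise leading $b$-dependent constant and the normalization convention relating $\sum_i\bx_i\bx_i^T$ to $\Sigma$. These conventions are downstream-immaterial, since only the ratio $\beta/\alpha$ enters Theorem~\ref{thm:fdesign}.

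I do not expect a real obstacle here: the computation is elementary once the log-density is written out, and the content is the data-independence observed in the second step. The only two points requiring care are (i) ensuring $\inner{\thetab}{\bx_i}\ge\gamma$ uniformly over the parameter set on which smoothness is needed — this is exactly the feasibility constraint built into the Pareto setting — and (ii) justifying that the support indicator may be ignored under differentiation, which holds on the interior where each $y_i \ge m_i$.
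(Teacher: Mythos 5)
Your proof is correct and follows essentially the same route as the paper: write out the Hessian $\dtheta^2\cL$ explicitly (which, as you rightly emphasize, is deterministic because the $y_i$ drop out, so no concentration is needed) and then bound each rank-one term using $\inner{\thetab}{\bx_i}\geq\gamma$ on the feasible set. Your more careful derivation also correctly flags that the paper's stated constants are imprecise --- the Hessian coefficient is $b$ rather than $(b-1)$, and with $\Sigma = (\sum_i \bx_i\bx_i^T)/n$ the bound should carry an extra factor of $n$ --- but, as you observe, these discrepancies cancel in the ratio $\beta/\alpha$ that enters Theorem~\ref{thm:fdesign}, so nothing downstream is affected.
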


\begin{proof}
We start by writing out the Hessian,
\begin{align*}
    \dtheta^2 \cL  = \sum_{i=1}^{n} \frac{(b - 1)}{ \inner{\thetab}{\bx_i}^2} \bx_i \bx_i^T := L(\thetab)
\end{align*}
Using the fact that $\inner{\thetab}{\bx_i} \geq \gamma$ gives us the result.
\end{proof}

\begin{lemma}
\label{lem:pareto_sc}
We have the following strong convexity bound,
\begin{align*}
    \lmin(\dtheta^2 \cL) \geq \frac{(b - 1)}{ w^2R^2} \lmin\left( \Sigma\right).
\end{align*}
\end{lemma}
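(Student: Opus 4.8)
The plan is to mirror the proof of Lemma~\ref{lem:pareto_smooth} almost verbatim, replacing the lower bound on the linear predictor by an upper bound. The starting point is the same closed form for the Hessian of the negative log-likelihood already derived in that proof, namely
\[
\dtheta^2 \cL = \sum_{i=1}^{n} \frac{(b-1)}{\inner{\thetab}{\bx_i}^2}\bx_i\bx_i^T.
\]
The crucial observation is that, unlike the Poisson case, this Hessian does not depend on the random responses $y_i$ at all; it is a deterministic function of $\thetab$ and the design. Consequently the strong convexity bound holds deterministically, with no appeal to concentration and no failure probability, which is why (as with smoothness) the statement carries no ``with probability $1-\delta$'' qualifier.

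To obtain a lower bound on $\lmin(\dtheta^2\cL)$ I would bound each denominator from above. By Cauchy--Schwarz together with the constraints $\norm{\thetab}_2 \le w$ (since $\Theta \subseteq \ball_w^d$) and $\norm{\bx_i}_2 \le R$, we have $\inner{\thetab}{\bx_i} \le \norm{\thetab}_2\norm{\bx_i}_2 \le wR$ for every $i$, hence $\inner{\thetab}{\bx_i}^2 \le w^2R^2$. Since each summand $\frac{(b-1)}{\inner{\thetab}{\bx_i}^2}\bx_i\bx_i^T$ is a nonnegative multiple of the positive semidefinite matrix $\bx_i\bx_i^T$, replacing the scalar coefficient by its smaller value $\frac{b-1}{w^2R^2}$ can only shrink the matrix in the Loewner order:
\[
\dtheta^2\cL \succcurlyeq \frac{(b-1)}{w^2R^2}\sum_{i=1}^{n}\bx_i\bx_i^T.
\]

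Finally I would take minimum eigenvalues of both sides, using that $A \succcurlyeq cB$ with $c \ge 0$ and $B$ positive semidefinite implies $\lmin(A) \ge c\,\lmin(B)$, together with $\sum_i \bx_i\bx_i^T$ being (a scalar multiple of) $\Sigma$, to conclude $\lmin(\dtheta^2\cL) \ge \frac{(b-1)}{w^2R^2}\lmin(\Sigma)$ under the normalization convention used in Lemma~\ref{lem:pareto_smooth}. There is essentially no hard step here: the only things to watch are keeping the direction of the inequality consistent (the upper bound $\inner{\thetab}{\bx_i}\le wR$ used for the lower Loewner bound is exactly dual to the lower bound $\inner{\thetab}{\bx_i}\ge\gamma$ used for smoothness), and being consistent about whether $\Sigma$ carries the $1/n$ factor, so that Lemmas~\ref{lem:pareto_smooth} and~\ref{lem:pareto_sc} combine cleanly into the condition-number ratio $\beta/\alpha = \frac{w^2R^2}{\gamma^2}\zeta$ that Theorem~\ref{thm:fdesign} consumes.
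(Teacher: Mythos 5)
Your proposal is correct and is essentially identical to the paper's proof, which likewise takes the Hessian expression $\dtheta^2\cL = \sum_{i=1}^{n}\frac{(b-1)}{\inner{\thetab}{\bx_i}^2}\bx_i\bx_i^T$ from Lemma~\ref{lem:pareto_smooth} and applies Cauchy--Schwarz, $\inner{\thetab}{\bx_i} \le \norm{\thetab}_2\norm{\bx_i}_2 \le wR$, to lower-bound each coefficient and hence the minimum eigenvalue of the Hessian. The normalization caveat you flag (whether $\Sigma$ carries the $1/n$ factor) is a genuine ambiguity, but it is inherited from the paper's own statements of Lemmas~\ref{lem:pareto_smooth} and~\ref{lem:pareto_sc} rather than introduced by your argument.
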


\begin{proof}
It follows from the expression of the Hessian in Lemma~\ref{lem:pareto_smooth} and using the fact $\inner{\thetab}{\bx_i} \leq \norm{\thetab}_2 \norm{\bx_i}_2$
\end{proof}

\begin{proof}[Proof of Corollary~\ref{cor:pareto}]

We show that the conditions of Theorem~\ref{thm:fdesign} hold.  

\textit{Creating Nets: } First we need to create an $\epsilon$-net over the parameter space $\Theta$. We start by creating an $\epsilon$-net over the sphere with radius $w$. Now suppose, $\epsilon < \gamma/2R$. Then  we remove all centers $\btheta_c$ if $\exists i$ s.t $\inner{\btheta_c}{\bx_i} < \gamma/2$. This is a valid $\epsilon$-net over $\Theta$ as all net partitions that are removed do not have any points lying in $\Theta$. In subsequent section, we will always follow this strategy to create $\epsilon$-nets over subsets of $\Theta$.

\textit{Strong Convexity and Smoothness: } From Lemma~\ref{lem:pareto_sc} and~\ref{lem:pareto_smooth} we have that,
\begin{align*}
    \frac{\beta}{\alpha} = \frac{w^2R^2}{\gamma^2} \zeta.
\end{align*}

\textit{KL Divergence:} The $L$ in Theorem~\ref{thm:fdesign} is bounded by $2bR/\gamma$ according to Lemma~\ref{lem:dtv_pareto}. 

Combining the above into Theorem~\ref{thm:fdesign} and using the estimator in~\citep{hsu2016loss} we get our result.

\end{proof}

\section{More on Experiments}
\label{sec:expmore}

We provide more experimental details in this section. 

\subsection{Metrics}
The metrics and loss functions used are as follows:

{\bf MSE:} The metric is \[\frac{1}{n}\sum_{i=1}^{n} (\yhat_i - y_i)^2.\] RMSE is just the square-root of this metric.

{\bf MAE:} The metric is \[\frac{1}{n}\sum_{i=1}^{n} \lvert \yhat_i - y_i\rvert.\]

{\bf WAPE:} The metric is \[\frac{\sum_{i=1}^{n} \lvert \yhat_i - y_i \rvert}{\sum_{i=1}^{n} |y_i|}.\]

{\bf MAPE:} The metric is \[\frac{1}{n}\sum_{i: y_i \neq 0} \bigg \lvert1 - \frac{\yhat_i}{y_i} \bigg \rvert.\]

{\bf Quantile Loss: } The reported metrics in Table~\ref{tab:quantile} are the normalized quantile losses defined as,
\begin{align}
    \sum_{i=1}^{n} \frac{2\rho(y_i - \yhat_i) \mathds{I}_{y_i \geq \yhat_i} + 2\rho(\yhat_i - y_i) \mathds{I}_{y_i < \yhat_i}}{\sum_{i=1}^{n} |y_i|}.
\end{align}
During training the unnormalized version is used for quantile regression.

{\bf Huber Loss: } The loss is given by,
\begin{align*}
    L_{\delta}(y, \yhat) = \begin{cases}
    \frac{1}{2} (y - \yhat)^2 ,& \text{if } |y - \yhat| \leq \delta\\
    \delta |y - \yhat| - \frac{\delta^2}{2},   &\text{otherwise}.
\end{cases}
\end{align*}

\subsection{Mapping of outputs for ZNBP}
\label{sec:link}
For the \texttt{ZNBP} model we require an output dimension size of 6. The first three dimensions are mapped through a $\mathrm{softmax}$ layer~\citep{goodfellow2016deep} to mixture weights. The fourth dimension is mapped to the 'n' in negative-binomial likelihood through the link function,
\begin{align*}
    \phi(x) = \begin{cases} x + 1, & \mbox{if } x > 0 \\ 1/ (1 - x) & \mbox{otherwise} \end{cases}.
\end{align*}
The fifth dimension is mapped to 'p' of the negative-binomial though the sigmoid function. The last dimension is mapped to the scale parameter for the Pareto component using the $\phi(x)$ link function above.

\subsection{Hardware}
We use the Tesla V100 architecture GPU for our experiments. We use Intel Xeon Silver 4216 16-Core, 32-Thread, 2.1 GHz (3.2 GHz Turbo) CPU and our post-hoc inference for MLE is parallelized over all the cores.

\subsection{More details about Inference for MLE}
\label{sec:infer_prac}
We follow the approach of monte-carlo sampling. For each inference sample $\bx'$, we generate 10k samples from the learnt distribution $p(\cdot | \bx'; \thetamle)$. Then we compute the correct statistics. For MSE, RMSE the statistic is just the mean and for WAPE, MAE it is the median. For a quantile, it is the corresponding quantile from the empirical distribution. For any loss of the form,
\begin{align*}
    \ell(y, \yhat) = \bigg \lvert 1 - \left(\frac{y}{\yhat}\right)^{\beta} \bigg \rvert
\end{align*}
the optimal statistic is the median from the distribution proportional to $y^{\beta}p(y | \bx'; \thetamle)$~\citep{gneiting2011making}. Note that the MAPE falls under the above with $\beta=-1$ and relative error corresponds to $\beta=1$. The statistic can be computed by importance weighing the empirical samples. 

\subsection{Models, Hyperparameters and Tuning}

	\begin{figure}[!ht]
		\centering
		\includegraphics[width=4in]{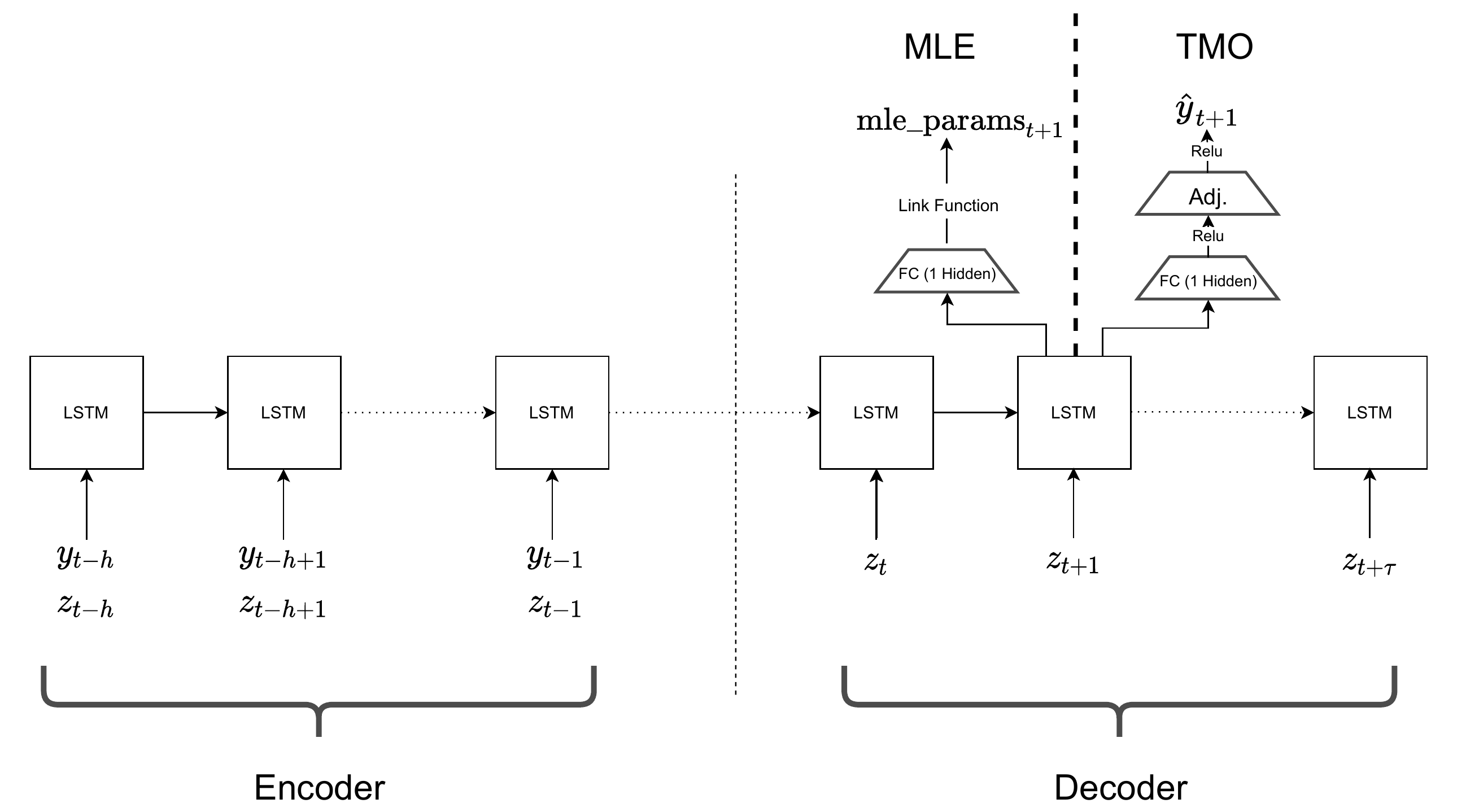}
		\caption{Time-Series Seq-2-Seq models. The MLE config is shown on the left and the TMO config is shown at the right. The main difference is the output dimension and the loss function. In order to keep the number of parameters the same, in the TMO model we add an extra layer of size 6 with Relu activation (shown as Adj. (adjustment))}\label{fig:ts}		
	\end{figure}
	
	\begin{figure}[!ht]
		\centering
		\includegraphics[width=3in]{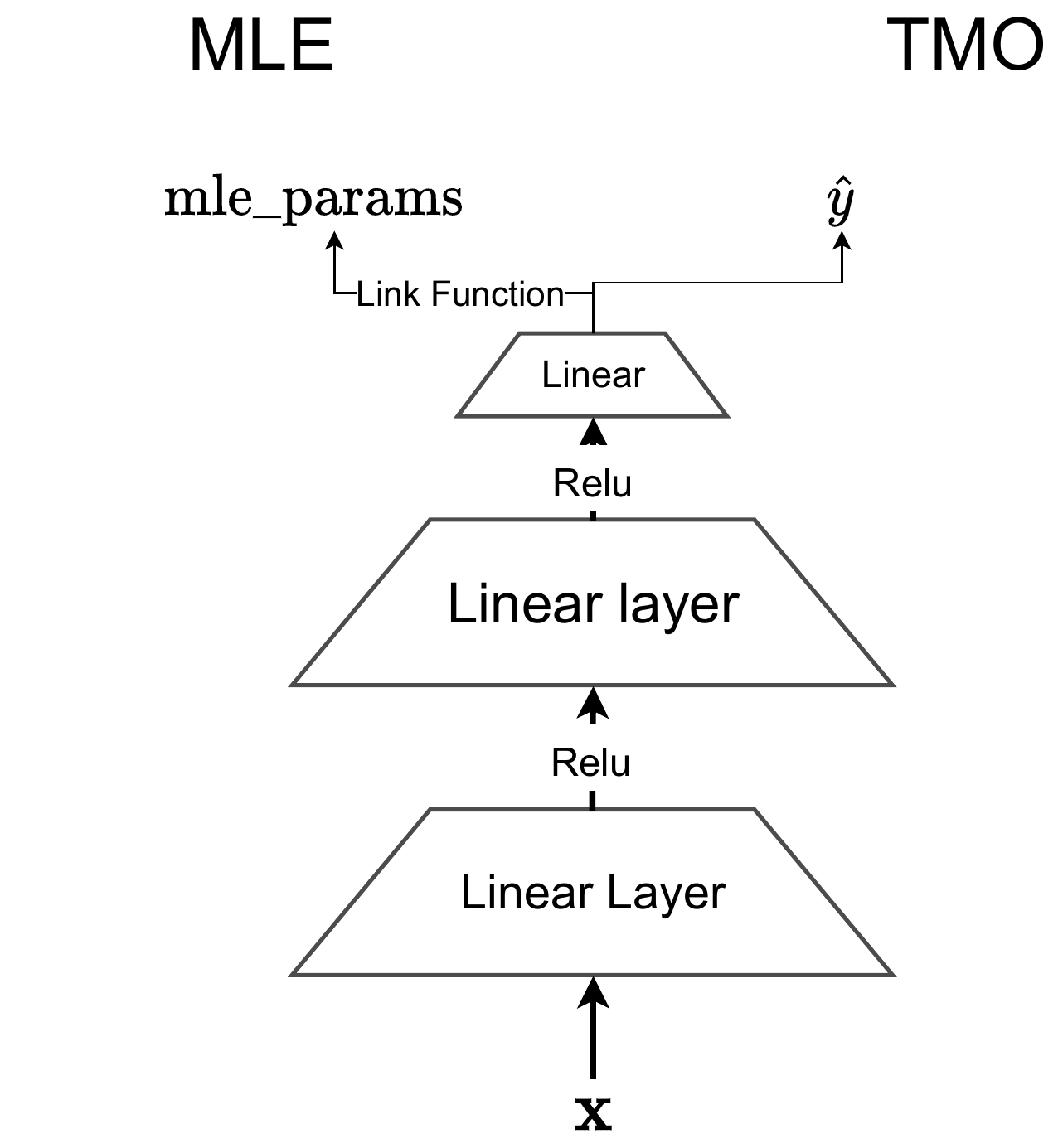}
		\caption{Fully connected network for regression models. The MLE config is shown on the left and the TMO config is shown at the right. The main difference is output dimension and the loss function.}\label{fig:reg}		
	\end{figure}

For the time-series datasets, the model is a seq-2-seq model with one hidden layer LSTMs for both the encoder and the decoder. The hidden layer size is $h=256$. The output layer of the LSTM is connected to the final output layer through one hidden layer also having $h$ neurons. Note that $h$ was tuned in the set [8, 16, 32, 64, 128, 256] and for all datasets and models 256 was chosen. In order to be keep the number of parameters exactly the same, in the \tmo~models we add an extra layer with ReLU with 6 neurons before the output.

We tuned the learning rate for Adam optimizer in the range [1e-5, 1e-1] in log-scale. The batch-size was also tuned in [64, 128, 256, 512] and the Huber-$\delta$ in [$2^i$ for i in range(-8, 8)]. The learning rate was eventually chosen as 2.77e-3 for both datasets, as it was close to the optimal values selected for all baseline models. The batch-size was chosen to be 512 and the Huber-$\delta$ was 32 and 64 for M5 and Favorita respectively.

For the regression datasets the model is a DNN with one hidden layer of size 32. For the Bicycle dataset the categorical features had there own embedding layer. The features [season, month, weekday, weathersit] had embedding layer sizes [2,4,4,2].

We tuned the learning rate for Adam optimizer in the range [1e-5, 1e-1] in log-scale. The batch-size was also tuned in [64, 128, 256, 512] and the Huber-$\delta$ in [$2^i$ for i in range(-8, 8)]. The learning rate was eventually chosen as 3e-3 for the gas turbine dataset based on the best perforamce of the baseline models and 1.98e-3 for the gas turbine dataset. The batch-size was chosen to be 512 and the Huber-$\delta$ was 128 and 32 for Bicyle Share and Gas turbine respectively.

For the ZNBP model we also tune the $\alpha$ parameter in the pareto component between $[3, 4, 5]$. The value of $3$ was selected for all datasets, except for gas turbine where we used $\alpha=5$.

We used a batched version of GP-UCB~\citep{srinivas2009gaussian} to tune the hyper-parameters. We used Tensorflow~\citep{abadi2016tensorflow} to train our models. 

\subsection{Datasets}
For the Favorita and M5 dataset we used the product hierarchy over the item-level time series. Along with the item-level (leaf) time-series, we also add all the higher-level (parent) time-series from the product hierarchy (i.e. we add family and class level time-series for Favorita, and department and category level time series for M5). The time-series for a parent time-series is obtained as the mean of the time-series of its children. This is closer to a real forecasting setting in practice where one is interested in all levels of the hierarchy. The metrics reported are over all the time-series (both parents and leaves) treated equally. The history length for our predictions is set to 28.

For the M5 dataset the validation scores are computed using the predictions from time steps 1886 to 1899, and test scores on steps 1900 to 1913. For the Favorita dataset the validation scores are computed using the predictions from time steps 1660 to 1673, and test scores on steps 1674 to 1687.

The train test splits are as mentioned in the main paper. For the Gas turbine dataset we use the official train test split. For the Bicycle share data there is no official split, but we use a randomly chosen fixed $10\%$ as the test set for all our experiments.

\section{Extended Discussion and Limitations}
\label{app:lim}
We advocate for MLE with a suitably chosen likelihood family followed by post-hoc inference tuned to the target metric of interest, in favor of ERM directly with the target metric. On the theory side, we prove a general result that shows competitiveness of MLE with any estimator for a target metric under fairly general conditions. Application of the bound in the case of MSE for Poisson regression and Pareto regression is shown. We believe that our general result is of independent interest and can be used as a tool to prove competitiveness of MLE for a wide variety of problems. Such applications can be an interesting direction of future work. 

On the empirical side we show that a well designed mixture likelihood like the one from Section~\ref{sec:sims_mle} can adapt quite well to different datasets as the mixture weights are trained. As we have mentioned before, the MLE log-likelihood loss in such cases can be non-convex which might lead to some limitations in terms of optimization. However, we observed that this is usually not a problem in practice and the solutions that can be reached by mini-batch SGD can be quite good in terms of performance.  

In conclusion we would recommend the following protocol for a practitioner based on our theoretical and empirical observations:

If the overall problem is convex for \tmo~but introducing a MLE loss makes the problem non-convex, then the gains from the MLE approach might be neutralized by the added hardness of the non-convexity introduced. An example of such a situation is \tmo~for minimizing square loss on a linear function class, which is just least-squares linear regression, but introduction of a mixture likelihood like the one in Section 4 makes the problem non-convex. In this case it might be better to stick with \tmo~or at least proceed with caution with the MLE approach. Note that if the chosen MLE retains the convexity of the problem, for example Poisson MLE in Section~\ref{sec:poisson}, then we would still recommend going with the MLE approach.

However, in many practical scenarios when training using a deep network, the \tmo~approach is non-convex to begin with, even when the target metric itself is something simple and convex like the square loss. In such a case we would recommend the MLE approach with a likelihood class that can capture inductive biases about the dataset. This is because both \tmo~and MLE are non-convex and it is better to capitalize on the potential gains from the MLE approach.

Finally, note that the user can always choose between \tmo~and even between different likelihood classes through cross-validation in a practical setting. If the practitioner would like to forgo the decision making in choosing the likelihood class, we recommend using a versatile likelihood like the mixture likelihood in Sections~\ref{sec:sims_mle}.

We do not anticipate this work to have any negative social or ethical impact.

\end{document}